\DeclareMathOperator*{\argmin}{arg\,min}
\newcommand{\Rmnum}[1]{\expandafter\@slowromancap\romannumeral #1@}
\newtheorem{theorem}{Theorem}
\newtheorem*{theorem*}{Theorem}
\newtheorem{lemma}{Lemma}
\newtheorem*{lemma*}{Lemma}
\newtheorem{claim}{Claim}
\newtheorem{corollary}{Corollary}
\newtheorem*{cor*}{Corollary}
\newtheorem{remark}{Remark}
\newtheorem{fact}{Fact}
\newcommand{\TODO}[1]{{{\color{red}#1}}}
\newcommand{\namedref}[2]{\hyperref[#2]{#1~\ref*{#2}}}
\definecolor{darkred}{rgb}{0.5, 0, 0} 
\definecolor{darkblue}{rgb}{0,0,0.5} 
\newcommand{\bc}{\ensuremath{{\bf c}}\xspace}
\newcommand{\bd}{\ensuremath{{\bf d}}\xspace}
\newcommand{\bg}{\ensuremath{{\bf g}}\xspace}
\newcommand{\bh}{\ensuremath{{\bf h}}\xspace}
\newcommand{\bx}{\ensuremath{{\bf x}}\xspace}
\newcommand{\by}{\ensuremath{{\bf y}}\xspace}
\newcommand{\bz}{\ensuremath{{\bf z}}\xspace}
\newcommand{\bw}{\ensuremath{{\bf w}}\xspace}
\newcommand{\bu}{\ensuremath{{\bf u}}\xspace}
\newcommand{\bA}{\ensuremath{{\bf A}}\xspace}
\newcommand{\bX}{\ensuremath{{\bf X}}\xspace}
\newcommand{\bG}{\ensuremath{{\bf G}}\xspace}
\newcommand{\R}{\ensuremath{\mathbb{R}}\xspace}
\renewcommand{\paragraph}[1]{\smallskip\noindent{\bf #1}~}
\newcommand{\calC}{\mathcal{C}}
\newcommand{\bbR}{\mathbb{R}}
\newtheorem*{claim*}{Claim}
\newtheorem*{corollary*}{Corollary}
\title{QuPeL: Quantized Personalization with Applications to Federated Learning}
\author{
	Kaan Ozkara
	\and
	Navjot Singh
	\and
	Deepesh Data
	\and
	Suhas Diggavi
}
\date{University of California, Los Angeles, USA\\
kaan@ucla.edu, navjotsingh@ucla.edu, deepesh.data@gmail.com, suhas@ee.ucla.edu
}
\begin{document}
	
\maketitle


\begin{abstract}
Traditionally, federated learning (FL) aims to train a single global model while collaboratively using multiple clients and a server. Two natural challenges that FL algorithms face are heterogeneity in data across clients and collaboration of clients with {\em diverse resources}. In this work, we introduce a \textit{quantized} and \textit{personalized} FL algorithm QuPeL that facilitates collective training with heterogeneous clients while respecting resource diversity. For personalization, we allow clients to learn \textit{compressed personalized models} with different quantization parameters depending on their resources. Towards this, first we propose an algorithm for learning quantized models through a relaxed optimization problem, where quantization values are also optimized over. When each client participating in the (federated) learning process has different requirements of the quantized model (both in value and precision), we formulate a quantized personalization framework by introducing a penalty term for local client objectives against a globally trained model to encourage collaboration. We develop an alternating proximal gradient update for solving this quantized personalization problem, and we analyze its convergence properties. Numerically, we show that optimizing over the quantization levels increases the performance and we validate that QuPeL outperforms both FedAvg and local training of clients in a heterogeneous setting.
\end{abstract}

\section{Introduction} \label{sec:intro}

Federated Learning (FL) is a learning procedure where the aim is to utilize vast amount of data residing in numerous (in millions) edge devices (clients) to train machine learning models without collecting clients' data \cite{mcmahan2017communicationefficient}. Formally, if there are $n$ clients and $f_i$ denotes the local loss function at client $i$, then traditional FL learns a single global model by minimizing the following objective: 
\begin{align} \label{fed1}
	\argmin_{\bw\in\R^d} \Big(f(\bw) := \frac{1}{n} \sum_{i=1}^n f_i(\bw)\Big).
\end{align}
It has been realized lately that a {\em single} model cannot provide good performance to all the clients as they have {\em heterogeneous} data.
This leads to the need for personalized learning, where each client wants to learn a personalized model  \cite{fallah2020personalized,dinh2020personalized}. Since each client may not have sufficient data to learn a good personalized model by itself, in the training process for personalized FL, clients maintain personalized models locally and utilize other clients' data via collaborating through a global model to improve their local models.
As far as we know, the personalized learning in all previous works does not respect {\em resource diversity} of clients, which is inherent to FL as the participating edge devices may vary widely in terms of resources.
For instance, there might be scenarios where mobile phones, tablets, laptops all collaborate in a federated setting. 
In such scenarios, where edge devices are constrained to use limited resources, model compression can become critical as it would let devices with resource constraints utilize complex models. 


In this work, we propose a model compression framework\footnote{Model compression is a process that allows inference time deployment of a model while compressing its size. Though model compression is a general term comprising different methods, we will focus on the quantization aspect of it.} for personalized federated learning that addresses the both aforementioned types of heterogeneity (in data and resources) in a unified manner. 
Our framework lets collaboration among clients (that have different resource requirements in terms of precision of model parameters) through a full precision global model for learning quantized personalized models. 
Note that while training and communication are in full precision, our goal is to obtain compressed models for inference at each client.\footnote{One can combine gradient compression methods for communication efficiency \cite{basu2019qsparse,alistarh2017} with the methods of this paper as they are complementary.}
To achieve this goal in an efficient way, we learn the compression parameters for each client by including quantization levels in the optimization problem itself and carefully formulating the objective.
First we investigate our approach in the {\em centralized} setup, by formulating a relaxed optimization problem and minimizing that through alternating proximal gradient steps, inspired by \cite{Bolte13}.
To extend this to a distributed setup, note that performing multiple local iterations and then synchronizing the local updates with the server is usually performed in FL settings for communication efficiency \cite{kairouz2019advances}. So the idea is to employ our centralized model quantization algorithm locally at clients for updating their personalized models in between synchronization indices. To aid collaboration, we introduce a term in the objective that
penalizes the deviation between personalized and global models, inspired by \cite{dinh2020personalized,hanzely2020federated}.


\subsection{Our Contributions}
Our contributions can be summarized as follows: 
\begin{itemize}
\item To learn compressed models, we propose a novel relaxed optimization problem that enables optimization over quantization values (centers) as well as the model parameters. We use alternating proximal updates to minimize the objective and analyze its convergence properties. During training we learn both the quantization levels as well as the assignments of the model parameters to those quantization levels, while recovering the convergence rate of $\mathcal{O}(\frac{1}{T})$ in \cite{bai2018proxquant} and \cite{Bolte13}. 
\item	More importantly, we propose a quantized personalized federated learning scheme that allows different clients to learn models with different quantization precision and values. Besides clients' personalized models, this provides an additional notion of personalization, namely, personalized model compression. We analyze convergence properties, and observe the common phenomenon (in personalized federated learning) that convergence rate depends on an error term related to dissimilarity of global and local gradients. 
\item We empirically show that optimizing over centers increases test performance and our personalization scheme outperforms FedAvg and Local Training at individual clients in a heterogeneous setting. We further observe an interesting phenomenon that clients with limited resources have increased performance when they collaborate with resource-abundant clients compared to when they collaborate with only resource-limited clients. 
\end{itemize}

Our work should not be confused with works in distributed/federated learning, where models/gradients are compressed for {\em communication efficiency} \cite{basu2019qsparse,karimireddy2019error} and not to learn compressed/quantized models for inference. On the contrary, our goal is to obtain quantized models for inference, that are suited for each client's resources. Note that we also achieve communication efficiency, but through local iterations, not through gradient/model compression.

\subsection{Related Work} 
To the best of our knowledge, this is the first work in personalized federated learning where the aim is to learn quantized and personalized models for inference.\footnote{By `learning quantized models' we refer to the fact that compressed models are used during inference time; training may or may not be with quantized models.} As a result, our work can be seen in the intersection of {\em personalized federated learning} and {\em learning quantized models}.

\paragraph{Personalized federated learning.}
As mentioned before, personalized FL is used in heterogeneous scenarios when a single global model falls short for the learning task.
Recent work adopted different approaches for learning personalized models: 
{\sf(i)} Combine global and local models throughout the training \cite{deng2020adaptive,mansour2020approaches,hanzely2020federated};
{\sf (ii)} first learn a global model and then personalize it locally by updating it using clients' local data \cite{fallah2020personalized}; 
{\sf (iii)} consider multiple global models to collaborate among only those clients that share similar personalized models \cite{zhang2021personalized,mansour2020approaches,ghosh2020efficient,smith2017federated}; 
{\sf (iv)} augment the traditional FL  objective via a penalty term that enables collaboration between global and personalized models \cite{hanzely2020federated,hanzely2020lower,dinh2020personalized}; and
{\sf (v)} distillation of global model to personalized local models \cite{lin2020ensemble}. 


\paragraph{Learning quantized models.}
Training quantized neural networks has been a topic of great interest in the last few years, and extensive research resulted in training quantized networks with precision of as low as 1-bit without significant loss in performance; see \cite{Bin_survey,comprehensive_survey} for surveys. There are two main approaches in obtaining quantized models. Firstly, one can simply train a model and then do a post-training quantization that is agnostic to training procedure; see for example \cite{BannerNS19}. The downside of doing a post-training quantization is that the loss minimization problem for quantization is not related to the empirical loss function. Consequently, there is no guarantee that one will obtain a compressed model that has good performance. As opposed to post-training quantization, the aim of learning quantized models is to learn the quantization itself during the training \cite{courbariaux2016binarized,courbariaux2015binaryconnect}. 
There are two kinds of approaches for training quantized networks that are of our interest. The first approximates the hard quantization function by using a soft surrogate \cite{Yang_2019_CVPR, gong2019differentiable}, while the other one iteratively projects the model parameters onto the fixed set of centers \citep{bai2018proxquant,BinaryRelax}. Each approach has its own limitations; see Section~\ref{sec:problem} for a discussion.

While the initial focus in learning quantized networks was mainly on achieving good empirical performance, there exist some works that analyzed convergence properties. \cite{li2017training} gave the first convergence guarantees by analyzing the algorithm proposed in \cite{courbariaux2015binaryconnect} using convexity assumptions. 
Later, \cite{BinaryRelax} showed convergence results for non-convex functions but under an orthogonality assumption between the quantized and unquantized weights. More recently, \cite{bai2018proxquant} gave a convergence result for a relaxed/regularized loss function using proximal gradient updates. Note that all of the above works were done in centralized setting.

\subsection{Paper Organization} 
In Section~\ref{sec:problem} we formulate the optimization problem to be minimized. In Sections~\ref{sec:centralized} and~\ref{sec:personalized}, we describe our algorithms along-with the main convergence results for the centralized and personalized settings, respectively, and give the proofs in Section~\ref{sec:proof_centralized} and Section~\ref{sec:proof_personalized}. Section~\ref{sec:experiments} provides numerical results. Omitted proofs/details are in appendices.


\section{Problem Formulation} \label{sec:problem}
As motivated in Section~\ref{sec:intro}, in FL settings where clients with heterogeneous data also have diverse resources, our goal in this paper is for clients to collaboratively learn personalized compressed models (with potentially different precision). To this end, below, first we state our final objective function that we will end up optimizing in this paper for learning personalized compressed models, and then in the rest of this section we will argue and motivate why this particular choice of the objective is appropriate for our purpose.

Recall from \eqref{fed1}, in the traditional FL setting, the local loss function at client $i$ is denoted by $f_i$. For our personalized quantized model training, we define the following augmented loss function to be minimized by client $i$:
\begin{align} 
	F_i(\bx_i,\bc_i,\bw) &:= f_i(\bx_i)+f_i(\widetilde{Q}_{\bc_i}(\bx_i))+\lambda R(\bx_i,\bc_i) + \frac{\lambda_p}{2} \|\bx_i - \bw \|^2.  \label{qpfl}
\end{align}
Here, $\bw \in \mathbb{R}^d$ denotes the global model, $\bx_i \in \mathbb{R}^d$ denotes the personalized model at client $i$, and $\bc_i \in \mathbb{R}^{m_i}$ denotes the model quantization centers at client $i$, where $m_i$ is the number of centers at client $i$, with $\log m_i$ representing the number of bits per parameter, which could be different for each client -- larger the $m_i$, higher the precision. Having different $\bc_i,m_i$'s introduces another layer of personalization, namely, personalization in the quantization itself. 
In \eqref{qpfl}, $\widetilde{Q}_{\bc_i}$ denotes the soft-quantization function with respect to (w.r.t) the fixed set of centers $\bc_i$, $R(\bx_i,\bc_i)$ denotes a distance/regularizer function that encourages quantization (e.g., the $\ell_1$-distance, $R(\bx,\bc)=\min\{ \frac{1}{2} \|\bz-\bx\|_1:z_i \in \{c_1,\cdots,c_m\}, \forall i \}$ ), and $\lambda_p$ is the hyper-parameter controlling the regularization. We will formally define the undefined quantities later in this section.
Consequently, our main objective becomes
\begin{align} \label{per1}
	\min \Big(F\big(\{\bx_i\},\{\bc_i\},\bw\big) := \frac{1}{n}\sum_{i=1}^n F_i(\bx_i,\bc_i,\bw)\Big), 
\end{align}
where minimization is taken over $\bx_i\in\R^d,\bc_i\in\R^{m_i}$ for $i\in[n]$, $\bw\in\R^d$, and $F_i(\bx_i,\bc_i,\bw)$ for $i\in[n]$ are from \eqref{qpfl}.

In formulating \eqref{qpfl}, the last term $\frac{\lambda_p}{2} \|\bx_i - \bw \|^2$ penalizes the deviation between the personalized model $\bx_i$ and the global model $\bw$, controlled by the hyper-parameter $\lambda_p$.
Similar augmentations have been proposed in personalized FL settings by \cite{dinh2020personalized} and \cite{hanzely2020federated}, and also in a non-personalized heterogeneous setting by Li et al.~\cite{li2020federated}.
So, it suffices to motivate how we came up with the first three terms in \eqref{qpfl}, which are in fact about a centralized setting because the function $f_i$ and the parameters involved, i.e., $\bx_i,\bc_i$, are local to client $i$.  In the rest of this section, we will motivate this and progressively arrive at this formulation in the centralized setup. Note that as a result of minimizing the relaxed optimization problem we obtain a set of $\bx_i, \bc_i$ for each client; obtained parameters $\bx_i$ are concentrated around $\bc_i's$ due to regularization. Consequently, $\bx_i$ can be quantized using $\bc_i$ without significant loss. Therefore, for inference time deployment what we do is to hard quantize $\bx_i's$ using the respective $\bc_i$.


\subsection{Model Compression in the Centralized Setup} 


Consider a setting where an objective function $f:\R^{d+m}\to\R$ (which could be a neural network loss function) is optimized over both the quantization centers $\bc\in\R^m$ and the assignment of model parameters (or weights) $\bx\in\R^d$ to those centers. There are two ways to approach this problem, either by explicitly putting a constraint that weights belong to the set of centers, or embedding the quantization function into the loss function itself.
\subsubsection{Two Approaches}
The first approach suggests the following optimization problem: 
minimize $f(\bx)$ (over $\bx\in\R^d,\bc\in\R^m$) subject to $x_j\in\{c_1,\hdots,c_m\}$ for all $j\in[d]$,
where the constraint ensures that every component $x_j$ is in the set of centers.
This can be equivalently written in a more succinct form as:
\begin{align}\label{opt2}
\min_{\bx,\bc}  f(\bx) + \delta_{\bc}(\bx),
\end{align}
where $\delta_{\bc}$ denotes the indicator function, and for any $\bc\in\R^m,\bx\in\bbR^d$, define $\delta_{\bc}(\bx):=0$ if for every $j\in[d]$, $x_j =c$ for some $c\in\{c_1,\hdots,c_m\}$, otherwise, define $\delta_{\bc}(\bx):=\infty$.

The second approach suggests the following problem:
\begin{align}\label{opt3}
\min_{\bx,\bc} \big( h(\bx,\bc):=f(Q(\bx,\bc)) \big),
\end{align}
where 
$Q:\mathbb{R}^{d+m}\rightarrow \mathbb{R}^d$ is a quantization function that maps individual weights to the closest centers.
In other words, for $\bx\in\mathbb{R}^d, \bc\in\mathbb{R}^m$, we define $Q(\bx,\bc)_i := c_k$, where $k=\argmin_{j\in[m]}\{|x_i-c_j|\}$. From now on, for notational convenience, we will denote $Q(\bx,\bc)$ by $Q_{\bc}(\bx)$. 

\paragraph{Limitations of both the approaches.}
Both \eqref{opt2} and \eqref{opt3} have their own limitations:
The discontinuity of $\delta_{\bc}(\bx)$ makes it challenging to minimize the objective function in \eqref{opt2}.
And the hard quantization function $Q_{\bc}(\bx)$ in \eqref{opt3} is actually a staircase function for which the derivative w.r.t.\ $\bx$ is 0 almost everywhere, which makes it impossible to use algorithms that rely on gradients. 

\subsubsection{Relaxations}
Inspired by some recent works that addressed the aforementioned challenges for solving \eqref{opt2} and \eqref{opt3} (without optimizing over $\bc$), we propose some relaxations.

Instead of solving the optimization problem in \eqref{opt2} (without optimizing over $\bc$), \cite{bai2018proxquant} and \cite{BinaryRelax} proposed to approximate the indicator function $\delta_{\bc}(\bx)$ using a distance function $R_\bc(\bx)$, which is continuous everywhere. Note that in this case $\bc$ is not an input to the function $R$ but it is a parameterization. In other words, $\bc$ is not a variable that the loss function is optimized over. This suggests the following relaxation of \eqref{opt2}:
\begin{align}
\min_{\bx,\bc} f(\bx) + \lambda R(\bx,\bc) \label{opt4}
\end{align}
On the other hand, instead of using $Q_\bc(\bx)$ in \eqref{opt3} (for fixed $\bc$), \cite{Yang_2019_CVPR} and \cite{gong2019differentiable} proposed to use a soft quantization function $\widetilde{Q}_\bc(\bx)$ that is differentiable everywhere with derivative not necessarily 0. They used element-wise {\em sigmoid} and {\em tanh} functions, respectively. In both cases, there is a parameter $P$ that controls how closely $\widetilde{Q}_\bc(\bx)$ approximates the staircase function $Q_\bc(\bx)$. See Section~\ref{sub:softquant} for an example of the soft quantization function. In general, as $P$ increases, $\widetilde{Q}_\bc(\bx)$ starts to become a staircase-like function, and for low $P$ (i.e., $P \rightarrow 1$), $\widetilde{Q}_\bc(\bx)$ resembles the identity function that maps $\widetilde{Q}_\bc(\bx)_i$ to $x_i$; see \citep[Figure 2]{Yang_2019_CVPR}. Another advantage of using $\widetilde{Q}_\bc(\bx)$ instead of $Q_\bc(\bx)$ that is not exploited by previous works is that it enables the use of Lipschitz properties in the convergence analysis.
This suggests the following relaxation of \eqref{opt3}:
\begin{align}
\min_{\bx,\bc} \big(h(\bx,\bc):=f(\widetilde{Q}_\bc(\bx))\big). \label{opt5}
\end{align}
\paragraph{Both relaxations have short-comings.}
The first relaxed problem \eqref{opt4} does not capture how the centers should be chosen such that the neural network loss is minimized. The centers are treated as variables that should only be close to weights that are assigned to them. 
We believe that modeling the direct effect that centers have on the neural network loss is crucial for a complete quantized training, and we verify this through numerics; see Section~\ref{sec:experiments}. In the second relaxed problem \eqref{opt5}, we can observe the effect of centers on neural network loss; however, this time the gradient w.r.t.\ $\bx$ is heavily dependent on the choice of $\widetilde{Q}_\bc$ and optimizing over $\bx$ might deviate too much from optimizing the neural network loss function. For instance, when $P \rightarrow \infty$, i.e., $\widetilde{Q}_\bc(\bx) \rightarrow Q_\bc(\bx)$, gradient w.r.t.\ $\bx$ is $0$ almost everywhere; hence, every point becomes a first order stationary point.

\subsubsection{A Relaxed Problem for Model Quantization}
Our aim is to come up with an objective function that would not diminish the significance of both $\bx$ and $\bc$ in the overall procedure. To leverage the benefits of both, we combine both optimization problems \eqref{opt4} and \eqref{opt5} into one problem:
\begin{align}
\min_{\bx,\bc} \big(F_{\lambda}(\bx,\bc):=f(\bx) + \lambda R(\bx,\bc) + f(\widetilde{Q}_\bc(\bx))\big). \label{mainopt}
\end{align}
Here, the first two terms help us preserve the connection of $\bx$ to neural network loss function, and the last term gives us the chance of optimizing centers w.r.t.\ the neural network training loss itself. As a result, we have an objective function  that is continuous everywhere, and for which we can use Lipschitz tools in the convergence analysis.
\begin{remark} \label{remark:centralized_compare}
It is important to note that with this new objective function we are able to optimize not only over weights but also over centers. This allows us to theoretically analyze how the movements of the centers affect the convergence. As far as we know, this has not been the case in the literature of quantized neural network training. Moreover, we observe numerically that optimizing over centers improves performance of the network; see Section \ref{subsec:centralized_expts}.
\end{remark} 

\subsection{An Example of the Soft Quantization Function} \label{sub:softquant}
In this section we give an example of the soft quantization function that can be used in previous sections. In particular, we can define the following soft quantization function: $\widetilde{Q}_\bc(\bx):\mathbb{R}^{d+m}\rightarrow\mathbb{R}^d$ and $\widetilde{Q}_\bc(\bx)_i := \sum_{j=2}^m(c_j-c_{j-1})\sigma(P(x_i-\frac{c_j+c_{j-1}}{2}))+c_1$ where $\sigma$ denotes the sigmoid function and $P$ is a parameter controlling how closely $\widetilde{Q}_\bc(\bx)$ approximates $Q_\bc(\bx)$. Note that as $P \rightarrow \infty$, $\widetilde{Q}_\bc(\bx)\rightarrow Q_\bc(\bx)$. This function can be seen as a simplification of the function that was used in \cite{Yang_2019_CVPR}.\\

\noindent\textbf{Assumption.}  For all $j\in[m]$, $c_j$ is in a compact set. In other words, there exists a finite $c_{\max}$ such that $|c_j| \leq c_{\max}$ for all $j\in[m]$. \\

In addition, we assume that the centers are sorted, i.e., $c_1 < \cdots < c_m$. Now, we state several useful facts.
\begin{fact}\label{sq:fact1}
	 $\widetilde{Q}_\bc(\bx)$ is continuously and infinitely differentiable everywhere.
\end{fact}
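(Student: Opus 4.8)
The plan is to observe that, coordinate by coordinate, $\widetilde{Q}_\bc(\bx)$ is assembled from finitely many $C^\infty$ building blocks, so that the conclusion follows from the standard closure of $C^\infty$ under affine substitution, composition, multiplication, and finite sums. Concretely, since $\widetilde{Q}_\bc(\bx):\mathbb{R}^{d+m}\to\mathbb{R}^d$ is vector-valued, it is $C^\infty$ iff each of its $d$ scalar components is $C^\infty$ on $\mathbb{R}^{d+m}$; and from the defining formula the $i$-th component $\widetilde{Q}_\bc(\bx)_i = \sum_{j=2}^m (c_j-c_{j-1})\,\sigma\big(P(x_i-\tfrac{c_j+c_{j-1}}{2})\big)+c_1$ depends only on the variables $x_i,c_1,\dots,c_m$, so it suffices to treat a single summand of this form.

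First I would record that the sigmoid $\sigma(t)=1/(1+e^{-t})$ is $C^\infty$ on all of $\mathbb{R}$: the numerator $1$ and denominator $1+e^{-t}$ are smooth (the exponential is smooth), and the denominator is strictly positive everywhere, so the quotient is smooth; equivalently, $\sigma'=\sigma(1-\sigma)$, and an easy induction shows every higher derivative is a polynomial in $\sigma$, hence exists. Next, for each fixed $j$, the map $(x_i,c_{j-1},c_j)\mapsto P\big(x_i-\tfrac{c_j+c_{j-1}}{2}\big)$ is a polynomial (in particular $C^\infty$) from $\mathbb{R}^{d+m}\to\mathbb{R}$, so its composition with $\sigma$ is $C^\infty$ by the chain rule; multiplying by the $C^\infty$ factor $(c_j-c_{j-1})$ keeps it $C^\infty$. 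Summing these $m-1$ smooth functions and adding the $C^\infty$ function $c_1$ yields that $\widetilde{Q}_\bc(\bx)_i$ is $C^\infty$ on $\mathbb{R}^{d+m}$; letting $i$ range over $[d]$ gives that $\widetilde{Q}_\bc(\bx)$ is infinitely (hence, in particular, continuously) differentiable everywhere, as claimed.

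There is essentially no obstacle here: the only point one must actually check — rather than invoke by boilerplate — is that the construction introduces no singular points, i.e.\ that $1+e^{-t}$ never vanishes on $\mathbb{R}$, which is immediate. This is the mildest of the stated facts; if one later needs quantitative smoothness (uniform bounds on $\partial_\bx \widetilde{Q}_\bc$, $\partial_\bc \widetilde{Q}_\bc$ for the Lipschitz-based convergence analysis), one would differentiate the above expression term by term and use $0<\sigma<1$, $|\sigma'|\le \tfrac14$, $|c_j|\le c_{\max}$, but that goes beyond the present statement and is not needed for Fact~\ref{sq:fact1} itself.
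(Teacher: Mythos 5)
Your argument is correct: each coordinate $\widetilde{Q}_\bc(\bx)_i = \sum_{j=2}^m(c_j-c_{j-1})\,\sigma\big(P(x_i-\tfrac{c_j+c_{j-1}}{2})\big)+c_1$ is a finite sum of products of polynomials with $\sigma$ composed with affine maps, and $\sigma$ is $C^\infty$ since $1+e^{-t}>0$ everywhere, so the whole map is $C^\infty$ coordinate-wise and hence as a map into $\mathbb{R}^d$. The paper states Fact~\ref{sq:fact1} without proof (only Claims~\ref{claimlQ_1} and~\ref{claimlQ_2} are proved in Appendix~\ref{appendix:Soft Quantization}), and your write-up is exactly the standard closure-under-composition argument that the paper implicitly relies on; there is nothing to add.
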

\begin{fact}\label{sq:fact2}
	$\sigma(\bx)$ is a Lipschitz continuous function.
\end{fact}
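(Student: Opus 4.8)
The plan is to reduce the claim to a bound on the derivative of the scalar sigmoid, and then lift it coordinate-wise, since in this paper $\sigma$ acts element-wise on vectors. First I would recall that for a scalar argument $t\in\R$, $\sigma(t)=\frac{1}{1+e^{-t}}$, and a direct computation gives the standard identity $\sigma'(t)=\sigma(t)\bigl(1-\sigma(t)\bigr)$. Since $0<\sigma(t)<1$ for every $t$, the product $\sigma(t)\bigl(1-\sigma(t)\bigr)$ is maximized when $\sigma(t)=\tfrac12$, i.e.\ at $t=0$, so $0<\sigma'(t)\le\tfrac14$ for all $t\in\R$. By the mean value theorem (equivalently, by integrating $\sigma'$), this yields $|\sigma(s)-\sigma(t)|\le\tfrac14|s-t|$ for all $s,t\in\R$; that is, the scalar sigmoid is Lipschitz with constant $\tfrac14$.

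Next I would handle the vector case. Writing $\sigma(\bx)$ for the vector with $i$-th coordinate $\sigma(x_i)$, for any $\bx,\by\in\R^d$ the scalar bound applied coordinate-wise gives
\[
\|\sigma(\bx)-\sigma(\by)\|^2=\sum_{i=1}^d\bigl(\sigma(x_i)-\sigma(y_i)\bigr)^2\le\frac{1}{16}\sum_{i=1}^d(x_i-y_i)^2=\frac{1}{16}\|\bx-\by\|^2,
\]
so $\|\sigma(\bx)-\sigma(\by)\|\le\tfrac14\|\bx-\by\|$, and $\sigma$ is Lipschitz on $\R^d$ with the same constant $\tfrac14$.

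There is no genuine obstacle here: the only steps that require (minimal) care are the derivative identity $\sigma'=\sigma(1-\sigma)$ and the observation that a uniform coordinate-wise Lipschitz bound is inherited by the Euclidean norm, both of which are one-line arguments; alternatively one could simply invoke boundedness of $\sigma'$ as standard. The point of recording this fact is that it, together with Fact~\ref{sq:fact1}, will be used downstream to establish smoothness/Lipschitz-gradient properties of the composition $f(\widetilde{Q}_\bc(\bx))$ needed in the convergence analysis.
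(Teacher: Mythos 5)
Your proof is correct; the paper states this as an unproved standard fact, and your argument --- bounding $\sigma'(t)=\sigma(t)(1-\sigma(t))\le\tfrac14$ and then lifting the scalar bound coordinate-wise (which is exactly what the paper's Fact~\ref{sq:fact5} is recorded for) --- is the canonical justification it implicitly relies on. Nothing is missing.
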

\begin{fact}\label{sq:fact3}
	Sum of Lipschitz continuous functions is also Lipschitz continuous. 
\end{fact}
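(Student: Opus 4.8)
The plan is a direct application of the triangle inequality. Suppose $g_1,\dots,g_k$ are Lipschitz continuous on a common Euclidean domain, with constants $L_1,\dots,L_k$ respectively, so that $|g_\ell(\bx)-g_\ell(\by)|\le L_\ell\|\bx-\by\|$ for every $\ell\in[k]$ and all $\bx,\by$. First I would write out the increment of the sum $g:=\sum_{\ell=1}^k g_\ell$ and bound it termwise:
\[
|g(\bx)-g(\by)| = \Big|\sum_{\ell=1}^k \big(g_\ell(\bx)-g_\ell(\by)\big)\Big| \;\le\; \sum_{\ell=1}^k |g_\ell(\bx)-g_\ell(\by)| \;\le\; \Big(\sum_{\ell=1}^k L_\ell\Big)\|\bx-\by\|,
\]
which exhibits $g$ as Lipschitz continuous with constant $\sum_{\ell} L_\ell$. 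For two summands this is the entire argument; for a general finite sum one can either argue directly as above or peel off one summand at a time and induct on $k$, invoking the two-summand case at each step.

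Two small points make the statement airtight in the way it will be used. First, the summands should share a domain (here all functions are of $\bx$, and later of $\bc$, over the same Euclidean space), so the termwise bound is meaningful; otherwise one restricts to the intersection of domains. Second, the claim concerns finitely many summands, which is exactly the regime needed: the soft-quantization coordinate $\widetilde{Q}_\bc(\bx)_i$ from Section~\ref{sub:softquant} is a sum of $m-1$ (affinely rescaled) sigmoid terms plus the constant $c_1$, and a constant function is trivially Lipschitz with constant $0$; combining this with Fact~\ref{sq:fact2} and the observation that an affine rescaling of a Lipschitz function is again Lipschitz, Fact~\ref{sq:fact3} then yields that each component of $\widetilde{Q}_\bc$ is Lipschitz.

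There is no genuine obstacle here. The only things to be careful about are (i) not overclaiming — the result fails for infinite sums unless the constants are summable — and (ii) recording the constant $\sum_\ell L_\ell$ explicitly rather than just the qualitative conclusion, since the convergence analysis in later sections needs a quantitative Lipschitz bound on $\widetilde Q_\bc$ (and hence on $f\circ\widetilde Q_\bc$) rather than merely the fact that one exists.
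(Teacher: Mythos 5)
Your triangle-inequality argument is correct and complete; the paper states Fact~\ref{sq:fact3} without proof, treating it as a standard fact, and your derivation (with the explicit constant $\sum_\ell L_\ell$) is exactly the canonical justification one would supply. No issues.
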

\begin{fact}\label{sq:fact4}
	Product of bounded and Lipschitz continuous functions is also Lipschitz continuous.
\end{fact}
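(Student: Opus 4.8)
The final statement is Fact~\ref{sq:fact4}, which I would prove by the textbook ``add and subtract'' trick. Let $g$ and $h$ be real-valued functions on a common domain $\mathcal{D} \subseteq \mathbb{R}^n$, each bounded, say $|g(x)| \le M_g$ and $|h(x)| \le M_h$ for all $x \in \mathcal{D}$, and each Lipschitz, with constants $L_g$ and $L_h$. Fix arbitrary $x, y \in \mathcal{D}$. First I would write the telescoping identity $g(x)h(x) - g(y)h(y) = g(x)\bigl(h(x) - h(y)\bigr) + h(y)\bigl(g(x) - g(y)\bigr)$, then take absolute values and apply the triangle inequality, then the boundedness bounds $|g(x)| \le M_g$ and $|h(y)| \le M_h$, and finally the Lipschitz bounds $|h(x) - h(y)| \le L_h\|x - y\|$ and $|g(x) - g(y)| \le L_g\|x - y\|$. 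This yields
\[
|g(x)h(x) - g(y)h(y)| \;\le\; M_g L_h \|x - y\| + M_h L_g \|x - y\| \;=\; (M_g L_h + M_h L_g)\,\|x - y\|,
\]
from which I would conclude that $gh$ is Lipschitz with constant $M_g L_h + M_h L_g$; since $x,y$ were arbitrary, that is the whole statement.

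Two points deserve care. First, the hypothesis must be read as ``each of the two factors is itself both bounded and Lipschitz'': the product of a merely bounded function with a merely Lipschitz function need not be Lipschitz (e.g.\ $x \mapsto \sin(x^2)$ is bounded and $x \mapsto x$ is Lipschitz on $\mathbb{R}$, yet their product is not), so the boundedness of \emph{both} factors is exactly what controls the two cross terms in the identity above. Second, in the place this fact is actually invoked---for the summands $(c_j - c_{j-1})\,\sigma\!\bigl(P(x_i - \tfrac{c_j + c_{j-1}}{2})\bigr)$ of $\widetilde{Q}_\bc(\bx)$---the factor $c_j - c_{j-1}$ is bounded by $2c_{\max}$ (and is affine, hence Lipschitz, in $\bc$) by the standing assumption that the centers lie in a compact set, while $\sigma(P\,\cdot)$ is bounded in $[0,1]$ and Lipschitz by Fact~\ref{sq:fact2}; so the hypotheses are met, and combining with Fact~\ref{sq:fact3} gives Lipschitzness of $\widetilde{Q}_\bc$ componentwise. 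The same computation works verbatim if one wants joint Lipschitzness in $(\bx, \bc)$, reading $\|x - y\|$ as the distance in the product space and taking the bounds $M_g, M_h$ uniform over the relevant compact set.

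Honestly there is no real obstacle here: the argument is three lines and uses nothing beyond the triangle inequality. The only thing to be deliberate about is stating the hypotheses precisely (both factors bounded and Lipschitz) and tracking the resulting constant $M_g L_h + M_h L_g$, since it is this constant that feeds into the downstream Lipschitz estimates for $\widetilde{Q}_\bc$ and for the gradient of $f(\widetilde{Q}_\bc(\bx))$ used in the convergence analysis.
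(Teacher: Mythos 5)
Your proof is correct: the paper states Fact~\ref{sq:fact4} without proof (it is invoked as a standard fact in the proofs of Claims~\ref{claimlQ_1} and~\ref{claimlQ_2} in Appendix~\ref{appendix:Soft Quantization}), and your add-and-subtract decomposition $g(x)h(x)-g(y)h(y)=g(x)\bigl(h(x)-h(y)\bigr)+h(y)\bigl(g(x)-g(y)\bigr)$ with the resulting constant $M_gL_h+M_hL_g$ is exactly the standard justification one would supply. Your remark that the hypothesis must be read as ``each factor is both bounded and Lipschitz'' (with the counterexample $x\sin(x^2)$) is a worthwhile clarification of the paper's slightly ambiguous phrasing, and your check that the summands of $\widetilde{Q}_\bc(\bx)$ satisfy these hypotheses matches how the fact is actually used.
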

\begin{fact}\label{sq:fact5}
	Let $g:\mathbb{R}^n \rightarrow \mathbb{R}^m$. Then, the coordinate-wise Lipschitz continuity implies overall Lipschitz continuity. In other words, let $g_i$ be the i'th output then if $g_i$ is Lipschitz continuous for all $i$, then $g$ is also Lipschitz continuous.
\end{fact}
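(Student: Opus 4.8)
The plan is to unpack the definition of Lipschitz continuity coordinate-wise and reassemble it into a bound on the norm of the full vector-valued map. Concretely, suppose each coordinate function $g_i:\mathbb{R}^n\to\mathbb{R}$ is Lipschitz with some constant $L_i$, i.e., $|g_i(\bx)-g_i(\by)|\le L_i\|\bx-\by\|$ for all $\bx,\by\in\mathbb{R}^n$. Since there are only finitely many output coordinates, set $L:=\sqrt{\sum_{i=1}^m L_i^2}<\infty$; this will be the Lipschitz constant of $g$.

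First I would write $\|g(\bx)-g(\by)\|^2=\sum_{i=1}^m |g_i(\bx)-g_i(\by)|^2$, which is just the definition of the Euclidean norm on $\mathbb{R}^m$. Then I would apply the per-coordinate Lipschitz bound term by term to obtain $\sum_{i=1}^m |g_i(\bx)-g_i(\by)|^2 \le \sum_{i=1}^m L_i^2\|\bx-\by\|^2 = L^2\|\bx-\by\|^2$. Taking square roots gives $\|g(\bx)-g(\by)\|\le L\|\bx-\by\|$, which is exactly overall Lipschitz continuity with constant $L$. If one prefers to remain agnostic about the choice of norm on $\mathbb{R}^m$, an alternative is to note $\|g(\bx)-g(\by)\|_\infty=\max_i|g_i(\bx)-g_i(\by)|\le(\max_i L_i)\|\bx-\by\|$ and then invoke equivalence of norms on the finite-dimensional space $\mathbb{R}^m$ to transfer the estimate to whatever norm is in use, at the cost of a dimension-dependent factor in the constant.

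There is essentially no obstacle here: the argument is a one-line application of the definition together with finiteness of $m$ (which guarantees $\sum_i L_i^2<\infty$). The only points requiring a modicum of care are keeping the norms on domain and codomain fixed consistently throughout, and noting that the statement would fail for infinitely many output coordinates lacking a uniform bound on the $L_i$ — a situation that does not arise in our setting, since every coordinate map we apply this to has finitely many outputs.
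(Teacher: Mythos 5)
Your proof is correct: the paper states this as a bare Fact with no proof of its own, and your argument -- summing the squared coordinate-wise bounds and taking the square root to get the constant $L=\sqrt{\sum_{i=1}^m L_i^2}$ -- is exactly the standard justification the paper implicitly relies on. Your side remarks about norm equivalence and the finiteness of $m$ are accurate and do not affect the validity of the main argument.
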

In our convergence analysis, we will require that $\widetilde{Q}_\bc(\bx)$ is Lipschitz continuous as well as smooth with respect to both $\bx$ and $\bc$. We show these properties in the following claims which we prove in Appendix~\ref{appendix:Soft Quantization}.
\begin{claim} \label{claimlQ_1}
	$\widetilde{Q}_\bc(\bx)$ is $l_{Q_1}$-Lipschitz continuous and $L_{Q_1}$-smooth with respect to $\bx$.
\end{claim}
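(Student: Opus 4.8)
The plan is to exploit the separable structure of $\widetilde{Q}_\bc$ in the variable $\bx$: for fixed centers $\bc$, the $i$-th output coordinate $\widetilde{Q}_\bc(\bx)_i$ depends only on $x_i$, so the Jacobian $\nabla_\bx \widetilde{Q}_\bc(\bx)$ is a diagonal matrix. By Fact~\ref{sq:fact5} it therefore suffices to control the single scalar map $t \mapsto \phi(t) := \sum_{j=2}^m (c_j - c_{j-1})\sigma\!\big(P(t - \tfrac{c_j+c_{j-1}}{2})\big) + c_1$ and its first two derivatives, uniformly in $\bc$, using the Assumption $|c_j|\le c_{\max}$ and the sortedness $c_1 < \cdots < c_m$.

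First I would establish Lipschitz continuity. Differentiating, $\phi'(t) = P\sum_{j=2}^m (c_j - c_{j-1})\sigma'\!\big(P(t-\tfrac{c_j+c_{j-1}}{2})\big)$. Since $\sigma' = \sigma(1-\sigma) \in (0,\tfrac14]$ and the increments $c_j - c_{j-1}$ are positive and telescope, $0 < \phi'(t) \le \tfrac{P}{4}\sum_{j=2}^m (c_j - c_{j-1}) = \tfrac{P}{4}(c_m - c_1) \le \tfrac{P c_{\max}}{2}$. Hence each coordinate is $\tfrac{P c_{\max}}{2}$-Lipschitz in $x_i$ by the mean value theorem, and squaring and summing over $i$ gives $\|\widetilde{Q}_\bc(\bx) - \widetilde{Q}_\bc(\bx')\| \le l_{Q_1}\|\bx-\bx'\|$ with $l_{Q_1} = \tfrac{P c_{\max}}{2}$. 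Alternatively, one can avoid the derivative computation and combine Facts~\ref{sq:fact2}--\ref{sq:fact5}: $\sigma$ is Lipschitz, $t\mapsto P(t-\tfrac{c_j+c_{j-1}}{2})$ is affine hence Lipschitz, constant multiples and sums of Lipschitz functions are Lipschitz, and coordinate-wise Lipschitzness lifts to the vector-valued map.

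Next I would establish smoothness, i.e., that the Jacobian is Lipschitz. Since the Jacobian is $\mathrm{diag}(\phi'(x_1),\dots,\phi'(x_d))$, its operator-norm variation equals $\max_i |\phi'(x_i) - \phi'(x_i')|$, so it is enough to bound $\phi''$. We have $\phi''(t) = P^2 \sum_{j=2}^m (c_j - c_{j-1})\sigma''\!\big(P(t-\tfrac{c_j+c_{j-1}}{2})\big)$, and $\sigma'' = \sigma(1-\sigma)(1-2\sigma)$ is bounded (by $\tfrac{1}{6\sqrt3}$), so $|\phi''(t)| \le \tfrac{1}{6\sqrt3}P^2 (c_m - c_1) \le \tfrac{P^2 c_{\max}}{3\sqrt3}$. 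Thus $\phi'$ is Lipschitz with this constant, and consequently $\|\nabla_\bx \widetilde{Q}_\bc(\bx) - \nabla_\bx \widetilde{Q}_\bc(\bx')\| \le L_{Q_1}\|\bx - \bx'\|$, giving $L_{Q_1}$ of order $P^2 c_{\max}$.

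There is no genuine analytic obstacle here; the only points requiring care are: (i) invoking the sortedness assumption so that the increments $c_j - c_{j-1}$ are positive and the sum telescopes to $c_m - c_1 \le 2c_{\max}$, which keeps the bounds independent of the particular center configuration; and (ii) being precise about which matrix norm is meant by ``$\widetilde{Q}_\bc$ is $L_{Q_1}$-smooth'' and tracking the associated dimensional constant (at most a $\sqrt d$ factor between operator and Frobenius norms), since later uses in the convergence analysis, e.g. the composition $f\circ\widetilde{Q}_\bc$, must use a consistent convention.
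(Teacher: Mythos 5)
Your proposal is correct and follows essentially the same route as the paper's proof in Appendix~A.4: both exploit that the Jacobian $\nabla_\bx \widetilde{Q}_\bc(\bx)$ is diagonal, bound the first derivative by $\tfrac{P}{4}(c_m-c_1)\le \tfrac{P}{2}c_{\max}$ for Lipschitz continuity, and then argue the derivative itself is Lipschitz for smoothness. The only difference is that where the paper establishes smoothness qualitatively by invoking Facts~\ref{sq:fact1}--\ref{sq:fact5} (products and sums of bounded Lipschitz functions), you compute $\phi''$ explicitly and obtain a concrete constant of order $P^2 c_{\max}$, which is a harmless and arguably more informative refinement.
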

\begin{claim}  \label{claimlQ_2}
	$\widetilde{Q}_\bc(\bx)$ is $l_{Q_2}$-Lipschitz continuous and $L_{Q_2}$-smooth with respect to $\bc$.
\end{claim}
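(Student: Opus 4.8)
The plan is to prove both assertions by a direct computation of the first- and second-order partial derivatives of $\widetilde{Q}_\bc(\bx)$ with respect to the coordinates of $\bc$, and then to bound these uniformly in $\bx$ using the standing compactness assumption on the centers together with the global boundedness of the sigmoid and its derivatives.

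First I would fix a coordinate $i$ and write $\widetilde{Q}_\bc(\bx)_i = c_1 + \sum_{j=2}^m (c_j - c_{j-1})\,\sigma\!\left(P\big(x_i - \tfrac{c_j + c_{j-1}}{2}\big)\right)$, noting that only $c_{j-1}$ and $c_j$ enter the $j$-th summand, so a given $c_k$ appears in at most two summands (plus the explicit $+c_1$ term when $k=1$). The product rule gives $\partial_{c_j}\big[(c_j - c_{j-1})\sigma(P(x_i - \tfrac{c_j+c_{j-1}}{2}))\big] = \sigma(\cdot) - \tfrac{P}{2}(c_j - c_{j-1})\sigma'(\cdot)$, and the analogous expression for $\partial_{c_{j-1}}$ with a sign change on the first term. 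Since $0\le\sigma\le 1$, $0\le\sigma'\le\tfrac14$, and $|c_j - c_{j-1}|\le 2c_{\max}$ by the assumption, each such partial is bounded in absolute value by $1 + \tfrac{P c_{\max}}{4}$, \emph{uniformly in $\bx$}. Summing over the at most $m$ relevant summands (and the $\partial_{c_1}c_1 = 1$ contribution) bounds $\|\nabla_\bc \widetilde{Q}_\bc(\bx)_i\|$ by a constant depending only on $m$, $P$, $c_{\max}$; applying Fact~\ref{sq:fact5} then yields $l_{Q_2}$-Lipschitz continuity of $\widetilde{Q}_\bc(\bx)$ in $\bc$. A derivative-free alternative is also available: each summand is a product of the bounded Lipschitz map $(c_{j-1},c_j)\mapsto c_j - c_{j-1}$ (bounded thanks to compactness) and the bounded Lipschitz map $(c_{j-1},c_j)\mapsto\sigma(P(x_i - \tfrac{c_j+c_{j-1}}{2}))$ (Lipschitz by Fact~\ref{sq:fact2} and the chain rule, bounded since $\sigma\in[0,1]$), so Facts~\ref{sq:fact4}, \ref{sq:fact3}, \ref{sq:fact5} assemble the claim.

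For $L_{Q_2}$-smoothness I would differentiate once more. Each second-order partial $\partial^2_{c_k c_\ell}\widetilde{Q}_\bc(\bx)_i$ is again supported on a bounded number of summands and, upon expansion, is a finite sum of terms of the form $P\sigma'(\cdot)$ and $P^2 (c_j - c_{j-1})\sigma''(\cdot)$. Since $\sigma'$ and $\sigma''$ are globally bounded and $|c_j - c_{j-1}|\le 2c_{\max}$, every entry of the Hessian $\nabla^2_\bc \widetilde{Q}_\bc(\bx)_i$ is bounded by a constant depending only on $m$, $P$, $c_{\max}$, again uniformly in $\bx$; hence $\nabla_\bc \widetilde{Q}_\bc(\bx)_i$ is Lipschitz, and one further application of Fact~\ref{sq:fact5} gives $L_{Q_2}$-smoothness of $\widetilde{Q}_\bc(\bx)$. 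Equivalently, one checks that each first-order partial computed above is itself a bounded Lipschitz function of $\bc$ using the same product/sum/chain rules as in the derivative-free argument.

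The routine but mildly tedious part is the bookkeeping of which $c_k$ enters which summand and collecting the resulting constants. The only genuine subtlety — and precisely the reason the claim needs the standing assumption — is that the factors $c_j - c_{j-1}$ must stay bounded: this fails on all of $\R^m$ but holds on the compact center set, so all the bounds (and therefore $l_{Q_2}$ and $L_{Q_2}$) are finite and independent of $\bx$.
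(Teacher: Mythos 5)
Your proposal is correct and follows essentially the same route as the paper: compute $\partial \widetilde{Q}_\bc(\bx)_i/\partial c_j$ explicitly, bound it uniformly using $|\sigma|\le 1$, the boundedness of $\sigma'$, and the compactness assumption $|c_j|\le c_{\max}$, then invoke Fact~\ref{sq:fact5} for Lipschitz continuity, and establish smoothness by observing that each first-order partial is itself a bounded Lipschitz function of $\bc$ via Facts~\ref{sq:fact1}--\ref{sq:fact5}. The only differences are immaterial constants in the uniform bound on the partials, which the claim does not pin down.
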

The example we gave in this section is simple yet provides technical necessities we require in the analysis. Other examples can also be used as long as they provide the smoothness properties that we utilize in the next sections. 
\section{Centralized Model Quantization Training} \label{sec:centralized}

The goal of this section is to learn a quantized model for inference which can be deployed in a memory-constrained setting where storing a full precision model is not feasible. For this, we propose a training scheme (described in Algorithm \ref{algo:centralized}) for minimizing \eqref{mainopt} by optimizing over $\bx \in \mathbb{R}^d$ (the model parameters) and $\bc \in \mathbb{R}^m$ (quantization values/centers). 
Note that we keep $\bx$ full precision during training and learn the optimal quantization parameters $\bc$ through Algorithm \ref{algo:centralized}.
The learned quantization values are then used to hard-quantize the personalized models to get compressed models for deployment, as noted in Section~\ref{sec:problem}. 

Before we begin with the description of our algorithm we define the proximal mapping. Let $g:\mathbb{R}^n \rightarrow \mathbb{R}$. Given $\bx\in\mathbb{R}^n$ and $t > 0$, the proximal map of $g$ is defined as:
\begin{align}
	\text{prox}_{tg}(\bx):={\argmin_{\bu\in\mathbb{R}^n} }\left\{ tg(\bu) + \frac{1}{2}\|\bu-\bx\|^2 \right\} = {\argmin_{\bu\in\mathbb{R}^n} }\left\{ g(\bu) + \frac{1}{2t}\|\bu-\bx\|^2 \right\}
\end{align}
Particularly, if $g(\bx)=\delta_A(\bx)$ is the indicator function of $A$, the proximal map reduces to the projection:
\begin{align}
	\text{prox}_{tg}(\bx)=\text{proj}_A(\bx)=\argmin_{\bu \in A}\|\bu-\bx\|^2
\end{align}
In our algorithm $g$ will correspond to $\lambda R(\bx,\bc)$. Note that as $\lambda \rightarrow \infty$ we have $\lambda R(\bx,\bc)\rightarrow \delta_{\calC}(\bx)$. As a result, $\text{prox}_{\eta \lambda R}$ can be seen as a soft projection.

As a shorthand notation, for centralized case, we will use  $\text{prox}_{\eta_1\lambda R_{\bc^{t}}}$ to denote $\text{prox}_{\eta_1\lambda R(\cdot,\bc^{t})}$, and $\text{prox}_{\eta_2\lambda R_{\bx^{t+1}}}$ to denote $\text{prox}_{\eta_2\lambda R(\bx^{t+1},\cdot)}$. Similarly for the personalized case, we will use $\text{prox}_{\eta_1\lambda R_{\bc^{t}_i}}$ to denote $\text{prox}_{\eta_1\lambda R(\cdot,\bc^{t}_i)}$, and $\text{prox}_{\eta_2\lambda R_{\bx^{t+1}_i}}$ to denote $\text{prox}_{\eta_2\lambda R(\bx^{t+1}_i,\cdot)}$.

\begin{algorithm}[h]
	\caption{Centralized Model Quantization Scheme}
	{\bf Input:} Regularization parameter $\lambda$; initialize the full precision model $\bx^{0}$ and quantization centers $\bc^{0}$; a penalty function enforcing quantization $R(\bx,\bc)$; a soft quantizer $\widetilde{Q}_{\bc}(\bx)$; and learning rates $\eta_1,\eta_2$.\\
	\vspace{-0.3cm}
	\begin{algorithmic}[1] 	\label{algo:centralized}
		\FOR{$t=0$ \textbf{to} $T-1$}
			\STATE Compute $\bg^{t} :=  \nabla_{\bx^{t}} f(\bx^{t})+ \nabla_{\bx^{t}}  f(\widetilde{Q}_{\bc^{t}}(\bx^{t}))$
			\STATE {$\bx^{t+1}=\text{prox}_{\eta_1 \lambda R_{\bc^{t}}}(\bx^{t}-\eta_1 \bg^{t} )$} \\
			\STATE Compute $\bh^{t} = \nabla_{\bc^{t}} f(\widetilde{Q}_{\bc^{t}}(\bx^{t+1})) $
			\STATE $\bc^{t+1}=\text{prox}_{\eta_2 \lambda R_{\bx^{t+1}}}(\bc^{t}-\eta_2 \bh^{t} )$ \\
		\ENDFOR
	\end{algorithmic}
	{\bf Output:} Quantized model $\hat{\bx}^{T}= Q_{\bc^{T}}(\bx^{T})$, 
	i.e., for $i\in[d]$, map $x^T_i$ to the nearest $c^T_j$, the $j$'th component of $\bc^T$.
\end{algorithm}

\subsection{Description of the Algorithm}
We optimize \eqref{mainopt} through alternating proximal gradient descent steps.
The model parameters and the quantization vector are initialized to random vectors $\bx^{0}$ and $\bc^{0}$, respectively. We have two learning rates $\eta_1,\eta_2$ for updating $\bx^t,\bc^t$, respectively.
Note that the objective in \eqref{mainopt} is composed of two parts: the loss function $f(\bx) + f(\tilde{Q}_{\bc}(\bx))$ and a quantization inducing term $R(\bx,\bc)$, which we control by a regularization coefficient $\lambda$.
At each iteration $t$, we first compute the gradient $\bg^{t}$ of the loss function w.r.t.\ $\bx^{t}$ (line 2), and then take the gradient step followed by the $\mathrm{prox}$ step for updating $\bx^t$ to $\bx^{t+1}$ (line 3). 
For the centers, first we compute the gradient $\bh^t$ of the loss function w.r.t.\ $\bc^{t}$ (line 4), and then take the gradient step followed by the $\mathrm{prox}$ step for updating $\bc^t$ to $\bc^{t+1}$ (line 5). 
Thus both the update steps ensure that we simultaneously learn model parameter and quantization vector tied together through proximal mapping of the regularization function $R$. 
Finally, to obtain the compressed model for deployment, in line 7 we quantize the full-precision model $\bx^{T}$ with the set of centers $\bc^{T}$ using $Q_{\bc^T}$, which maps components of $\bx^T$ to the nearest component of $\bc^T$, as explained after \eqref{opt3}. In Algorithm~\ref{algo:centralized}, we consider gradient descent as the update rule in computations of $\bg^t$ and $\bh^t$; however, one can also employ other methods such as SGD and ADAM.



\subsection{Convergence Result}
We now provide convergence guarantee for Algorithm \ref{algo:centralized} for general smooth objectives.
We first state the assumptions required for deriving our results.

{\bf A.1} \textit{(Finite lower bound of $f$):} 
We assume that $f(\bx)>-\infty$ for all $\bx\in\R^d$, which implies that $F_\lambda(\bx,\bc) > -\infty$ for any $\bx\in\R^d,\bc\in\R^m,\lambda\in\R$. 

{\bf A.2} \textit{(Smoothness of $f$):} $f$ is $L$-smooth, i.e., for all $\bx,\by \in \mathbb{R}^d$, we have $f(\by) \leq f(\bx) + \left\langle\nabla f(\bx), \by-\bx\right\rangle + \frac{L}{2}\|\bx-\by\|^2$.
 
{\bf A.3} \textit{(Bounded gradients of $f$):} There exists a finite constant $G < \infty$ such that
$\|\nabla f(\bx)\|_2 \leq G$ holds for all $\bx\in\R^d$.
 
{\bf A.4} \textit{(Smoothness of the soft quantizer):} We assume that $\widetilde{Q}_{\bc}(\bx)$ is $l_{Q_1}$-Lipschitz and $L_{Q_1}$-smooth w.r.t.\ $\bx$, i.e., for any $\bc\in\R^m$, the following holds for all $\bx,\by \in \mathbb{R}^d$:
\begin{align*}
\|\widetilde{Q}_{\bc}(\bx) - \widetilde{Q}_{\bc}(\by) \| & \leq l_{Q_1} \|\bx-\by\| \ \\
\|\nabla_{\bx} \widetilde{Q}_{\bc}(\bx) - \nabla_{\by} \widetilde{Q}_{\bc}(\by) \| & \leq L_{Q_1} \|\bx-\by\| \ 
\end{align*}
And that $\widetilde{Q}_{\bc}(\bx)$ is $l_{Q_2}$-Lipschitz and $L_{Q_2}$-smooth w.r.t.\ $\bc$, i.e., for any $\bx\in\R^d$, the following holds for all $\bc,\bd \in \mathbb{R}^m$:  
\begin{align*}
\|\widetilde{Q}_{\bc}(\bx) - \widetilde{Q}_{\bd}(\bx) \| & \leq l_{Q_2} \|\bc-\bd\| \  \\
\|\nabla_{\bc} \widetilde{Q}_{\bc}(\bx) - \nabla_{\bd} \widetilde{Q}_{\bd}(\bx) \| & \leq L_{Q_2} \|\bc-\bd\| \ 
\end{align*}
{\bf A.5} \textit{(Bound on partial gradients of the soft quantizer):} There exists constants $G_{Q_1},G_{Q_2} <\infty $ such that:
\begin{align*}
\|\nabla_{\bx} \widetilde{Q}_{\bc}(\bx) \|_F &= \|\nabla \widetilde{Q}_{\bc}(\bx)_{1:d,:} \|_F \leq G_{Q_1} \\
\|\nabla_{\bc} \widetilde{Q}_{\bc}(\bx) \|_F &= \|\nabla \widetilde{Q}_{\bc}(\bx)_{d+1:d+m,:} \|_F \leq G_{Q_2}
\end{align*}
where $\bX_{p:q,:}$ denotes the sub-matrix of $\bX$ containing rows between $p$ and $q$, and $\|\cdot\|_F$ is the Frobenius norm. 

Now we state our main convergence result of Algorithm \ref{algo:centralized} for minimizing $F_{\lambda}(\bx,\bc)$ in \eqref{mainopt} w.r.t.\ $(\bx,\bc)\in\R^{d+m}$. In the following theorem we provide the first-order guarantees for convergence of $(\bx,\bc)$ to a stationary point.
\begin{theorem}\label{thm:centralized}
Consider running Algorithm \ref{algo:centralized} for $T$ iterations with $\eta_1=\nicefrac{1}{2(L+GL_{Q_1}+G_{Q_1}Ll_{Q_1})}$ and $\eta_2=\nicefrac{1}{2(GL_{Q_2}+G_{Q_2}Ll_{Q_2})}$. For any $t\in[T]$, define $\bG^{t} := [\nabla_{\bx^{t+1}} F_{\lambda}\left(\bx^{t+1},\bc^{t}\right)^T, \nabla_{\bc^{t+1}} F_{\lambda}\left(\bx^{t+1},\bc^{t+1}\right)^T]^T$. Then, under assumptions {\bf A.1-A.5}, we have:
	\begin{align*}
	\frac{1}{T}\sum_{t=0}^{T-1}\|\bG^{t}\|^2_{2}  & = \mathcal{O} \left(\frac{L_{\max}^2\left(F_{\lambda}\left(\bx^{0},\bc^0\right){-}F_{\lambda}(\bx^{T},\bc^T)\right)}{L_{\min}T}\right),
	\end{align*}
	where 
	$L_{\min}=\min\{L+GL_{Q_1}+G_{Q_1}Ll_{Q_1}, GL_{Q_2}+G_{Q_2}Ll_{Q_2}\}$ and 
	$L_{\max} = \max\{L+GL_{Q_1}+G_{Q_1}Ll_{Q_1},GL_{Q_2}+G_{Q_2}Ll_{Q_2})\}$.
\end{theorem}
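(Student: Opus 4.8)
The plan is to establish a ``sufficient decrease'' inequality for $F_\lambda$ across each pair of prox-gradient steps, and then telescope. First I would analyze the $\bx$-update in isolation: since $\bx^{t+1}=\mathrm{prox}_{\eta_1\lambda R_{\bc^t}}(\bx^t-\eta_1\bg^t)$, the optimality of the prox minimization gives
\begin{align*}
\lambda R(\bx^{t+1},\bc^t)+\frac{1}{2\eta_1}\|\bx^{t+1}-\bx^t+\eta_1\bg^t\|^2 \le \lambda R(\bx^t,\bc^t)+\frac{\eta_1}{2}\|\bg^t\|^2,
\end{align*}
which after expanding and rearranging yields $\langle\bg^t,\bx^{t+1}-\bx^t\rangle+\lambda R(\bx^{t+1},\bc^t)\le \lambda R(\bx^t,\bc^t)-\frac{1}{2\eta_1}\|\bx^{t+1}-\bx^t\|^2$. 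The key auxiliary fact I need is that the smooth part $\bx\mapsto f(\bx)+f(\widetilde{Q}_{\bc^t}(\bx))$ is $L_x$-smooth with $L_x := L+GL_{Q_1}+G_{Q_1}Ll_{Q_1}$; this follows from A.2, A.4, A.5 by the chain rule (gradient is $\nabla f(\bx)+\nabla_\bx\widetilde{Q}_{\bc^t}(\bx)^T\nabla f(\widetilde{Q}_{\bc^t}(\bx))$, and its Lipschitz constant in $\bx$ is bounded using $\|\nabla f\|\le G$, $\|\nabla^2 f\|\le L$, the Lipschitz/smoothness bounds on $\widetilde Q$, and the product rule). Applying the descent lemma to this smooth part and combining with the prox inequality, with $\eta_1=1/(2L_x)$, gives $F_\lambda(\bx^{t+1},\bc^t)\le F_\lambda(\bx^t,\bc^t)-\frac{L_x}{2}\|\bx^{t+1}-\bx^t\|^2$ (up to constants).

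Next I would do the symmetric analysis for the $\bc$-update: $f(\bx^{t+1})$ does not depend on $\bc$, so only $f(\widetilde{Q}_\bc(\bx^{t+1}))$ matters, and by A.4, A.5 this is $L_c$-smooth in $\bc$ with $L_c := GL_{Q_2}+G_{Q_2}Ll_{Q_2}$. The same prox-plus-descent-lemma argument with $\eta_2=1/(2L_c)$ gives $F_\lambda(\bx^{t+1},\bc^{t+1})\le F_\lambda(\bx^{t+1},\bc^t)-\frac{L_c}{2}\|\bc^{t+1}-\bc^t\|^2$. Chaining the two, $F_\lambda(\bx^{t+1},\bc^{t+1})\le F_\lambda(\bx^t,\bc^t)-\frac{L_{\min}}{2}(\|\bx^{t+1}-\bx^t\|^2+\|\bc^{t+1}-\bc^t\|^2)$, and summing over $t=0,\dots,T-1$ together with A.1 bounds $\frac{1}{T}\sum_t(\|\bx^{t+1}-\bx^t\|^2+\|\bc^{t+1}-\bc^t\|^2)$ by $\mathcal{O}\big(\frac{F_\lambda(\bx^0,\bc^0)-F_\lambda(\bx^T,\bc^T)}{L_{\min}T}\big)$.

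The remaining step is to relate the iterate displacements to the gradient surrogate $\bG^t$. From the prox optimality conditions, $-\frac{1}{\eta_1}(\bx^{t+1}-\bx^t)-\bg^t\in\lambda\partial_\bx R(\bx^{t+1},\bc^t)$, so $\nabla_{\bx^{t+1}}F_\lambda(\bx^{t+1},\bc^t)$ — interpreting $\nabla_\bx$ of the nonsmooth term as this particular subgradient — equals $(\nabla_\bx[f+f\circ\widetilde Q](\bx^{t+1},\bc^t)-\bg^t)-\frac{1}{\eta_1}(\bx^{t+1}-\bx^t)$; the first parenthesized difference is bounded by $L_x\|\bx^{t+1}-\bx^t\|$ by smoothness (since $\bg^t$ is the gradient at $\bx^t$), so $\|\nabla_{\bx^{t+1}}F_\lambda(\bx^{t+1},\bc^t)\|\le 2L_x\|\bx^{t+1}-\bx^t\|$ when $\eta_1=1/(2L_x)$. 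The analogous bound holds for the $\bc$-component with $2L_c\|\bc^{t+1}-\bc^t\|$. Hence $\|\bG^t\|^2=\mathcal{O}(L_{\max}^2(\|\bx^{t+1}-\bx^t\|^2+\|\bc^{t+1}-\bc^t\|^2))$, and substituting the telescoped bound gives the claimed $\mathcal{O}\big(\frac{L_{\max}^2(F_\lambda(\bx^0,\bc^0)-F_\lambda(\bx^T,\bc^T))}{L_{\min}T}\big)$ rate.

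The main obstacle I anticipate is the bookkeeping in the composite smoothness estimates — verifying carefully that $\bx\mapsto f(\widetilde Q_{\bc^t}(\bx))$ is smooth with constant $GL_{Q_1}+G_{Q_1}Ll_{Q_1}$ (and the $\bc$-analogue) via the product rule on $\nabla_\bx\widetilde Q^T\,\nabla f\circ\widetilde Q$, keeping track of which Lipschitz/boundedness assumption controls which factor — and making sure the identification of the nonsmooth term's contribution to $\bG^t$ with an actual subgradient from the prox step is consistent throughout. The telescoping itself is routine once the per-step decrease and the displacement-to-gradient bounds are in place.
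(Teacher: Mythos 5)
Your proposal is correct and follows essentially the same route as the paper's proof: the prox-optimality comparison at $\bx^{t+1}$ versus $\bx^t$ is exactly the paper's Claim~\ref{claim:quantization lower bound 1}, the composite smoothness constants $L+GL_{Q_1}+G_{Q_1}Ll_{Q_1}$ and $GL_{Q_2}+G_{Q_2}Ll_{Q_2}$ are the paper's Claims~\ref{claim: lqxsmooth} and~\ref{claim: lqcclaim}, and the displacement-to-gradient bound via the first-order optimality condition of the prox step is the paper's final step (the paper gets the factor $3L_x$ rather than your $2L_x$ since $1/\eta_1+L_x=3L_x$, which is immaterial under the $\mathcal{O}(\cdot)$). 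No gaps.
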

We prove Theorem~\ref{thm:centralized} in Section~\ref{sec:proof_centralized}. Note that we recover $\frac{1}{T}$ convergence rate in \cite{Bolte13,bai2018proxquant}.


\section{QuPeL: Personalized Quantization for FL} \label{sec:personalized}
In this section, we extend Algorithm~\ref{algo:centralized} to the distributed/federated setting for learning quantized and personalized models for each client. 
As mentioned in Section~\ref{sec:intro}, we do so by performing multiple local iterations at clients and use our centralized scheme locally at clients for updating their local models before synchronizing with the server.
To this end, we propose a new algorithm QuPeL (described in Algorithm \ref{algo:personalized}) for optimizing \eqref{per1} over $\left(\{\bx_i,\bc_i\}_{i=1}^{n},\bw\right)$, where $\bx_i,\bc_i$ respectively denote the model parameters and the quantization vector (centers) for client $i$, and $\bw$ denotes the global model that facilitates collaboration among clients, which is encouraged through a penalty term in the local objectives \eqref{qpfl}.
As discussed in Section~\ref{sec:problem}, $\bc_1,\hdots,\bc_n$ could be different in length, which allows QuPeL to learn models with different precision for different clients based on their memory constraints. Thus, QuPeL simultaneously addresses two important personalization aspects, one for heterogeneous data and the other for resource diversity.

\begin{algorithm}[h]
	\caption{QuPeL: Quantized Personalization Learning}
	{\bf Input:} 
	Regularization parameters $\lambda,\lambda_p$; synchronization gap $\tau$; for each client $i\in[n]$, initialize full precision personalized model $\bx_i^{0}$, quantization centers $\bc_i^{0}$, and local model $\bw_i^0$; a penalty function enforcing quantization $R(\bx,\bc)$; a soft quantizer $\widetilde{Q}_{\bc}(\bx)$; and learning rates $\eta_1,\eta_2,\eta_3$.\\
	\vspace{-0.3cm}
	\begin{algorithmic}[1] \label{algo:personalized}
		\FOR{$t=0$ \textbf{to} $T-1$}
		\STATE \textbf{On Clients} $i=1$ \textbf{to} $n$ (in parallel) \textbf{do}:
		\IF{$\tau$ does not divide $t$}
		\STATE Compute $\bg_{i}^{t} := \nabla_{\bx_{i}^{t}} f_i(\bx_{i}^{t}) + \nabla_{\bx_{i}^{t}} f_i(\widetilde{Q}_{\bc_{i}^{t}}(\bx_{i}^{t})) $
		\STATE $\bx_{i}^{t+1}=\text{prox}_{\eta_1 \lambda R_{\bc_{i}^{t}}}(\bx_{i}^{t} - \eta_1 (\bg_{i}^{t} + \lambda_p(\bx_{i}^{t}-\bw_{i}^{t}))  )$\\
		\STATE Compute $\bh_i^{t} := \nabla_{\bc_{i}^{t}} f_i(\widetilde{Q}_{\bc_{i}^{t}}(\bx_{i}^{t+1})) $
		\STATE $\bc_{i}^{t+1}=\text{prox}_{\eta_2 \lambda R_{\bx_{i}^{t+1}}}(\bc_{i}^{t}-\eta_2 \bh_i^{t}  )$\\
		\STATE $\bw_{i}^{t+1} = 
		\bw_{i}^{t}-\eta_3 \lambda_p (\bx_{i}^{t+1}-\bw_{i}^{t}) $\\
		\ELSE
		\STATE Send $\bw_{i}^{t}$ to \textbf{Server}
		\STATE Receive $\bw^{t}$ from \textbf{Server} and set $\bw_i^{t+1} = \bw^{t}$
		\ENDIF
		\STATE \textbf{On Server do:}
		\IF{$\tau$ divides $t$}
		\STATE Receive $\{\bw_i^{t}\}_{i=1}^n$ and compute $\bw^{t} := \frac{1}{n} \sum_{i=1}^n \bw_i^{t}$
		\STATE Broadcast $\bw^{t}$ to all \textbf{Clients}
		\ENDIF		
		\ENDFOR
		\STATE $\hat{\bx}_{i}^{T} = Q_{\bc_{i}^{T}}(\bx_{i}^{T})$ for all $i \in [n]$
	\end{algorithmic}
	{\bf Output:} Quantized personalized models $\hat{\bx}_i^{T}$ for $i \in [n]$
\end{algorithm}
\subsection{Description of the Algorithm}
Since clients perform local iterations, apart from maintaining $\bx_i^t,\bc_i^t$ at client $i\in[n]$, it also maintains a model $\bw_i^t$ which helps in utilizing other clients' data via collaboration.
We call set $\{\bw_i^t\}$ local copies of the global model at clients at time $t$; client $i$ updates $\bw_i^t$ in between communication rounds based on its local data and synchronizes that with the server who aggregates all of them to update the global model.
Note that the local objective for each node $i \in [n]$ in \eqref{qpfl} can be split into two terms: the loss function $f_i(\bx_i) + f_i(\widetilde{Q}_{\bc_i}(\bx_i))+ \frac{\lambda_p}{2} \|\bx_i - \bw_i \|^2$ and the term enforcing quantization $\lambda R(\bx_i,\bc_i)$. 
At any step $t$ that is not a communication round (line 3), client $i$ first computes the gradient $\bg_i^t$ of the loss function w.r.t.\ $\bx_i^{t}$ (line 4) and then takes a gradient step followed by the proximal step using $R$ (line 5) to update from $\bx_i^t$ to $\bx_i^{t+1}$. Then it computes the gradient $\bh_i^t$ of the loss function w.r.t.\ $\bc_i^{t}$ (line 6) and takes the update step for the centers followed by the proximal step (line 7). 
Finally, 
it updates $\bw_i^{t}$ to $\bw_i^{t+1}$ by taking a gradient step of the loss function at $\bw_i^{t}$ (line 8).
Note that, unlike in the centralized case, in QuPeL, the local training of $\bx_i^t,\bc_i^t$ also incorporates knowledge from other clients' data through $\bw_i^{t}$.
In a communication round (when $t$ is divisible by $\tau$), clients upload $\{\bw_i^{t}\}$ to the server (line 10) which aggregates them (line 15) and broadcasts the updated global model to all clients (line 16). 
At the end of training, clients learn their personalized models $\{\bx_i^{T}\}_{i=1}^{n}$ and quantization centers $\{\bc_i^{T}\}_{i=1}^{n}$. Client $i$ then quantizes $\bx_i^{T}$ to the values in $\bc_i^{T}$ using $Q_{\bc_i^T}$ (line 19), as we did in Algorithm~\ref{algo:centralized}. 


\subsection{Convergence Result}
We now discuss the convergence rate for QuPeL for general smooth objectives.
In addition to the assumptions {\bf A.1-A.5} made in Section~\ref{sec:centralized} (for each client) we need one more assumption that bounds heterogeneity in the local datasets across all clients.

{\bf A.6} \textit{(Bounded diversity):}
At any $t \in \{0,\cdots,T-1\}$ and any client $i\in[n]$, the variance of the local gradient (at client $i$) w.r.t. the global gradient is bounded, i.e., there exists $\kappa_i<\infty$, such that for every $\{\bx_i^{t+1}\in\R^d,\bc_i^{t+1}\in\R^{m_i}:i\in[n]\}$ and $\bw^t\in\R^d$ generated according to Algorithm~\ref{algo:personalized}, we have:
\begin{align*}
\Big\| \nabla_{\bw^t} F_i(\bx_{i}^{t+1},\bc_{i}^{t+1},\bw^t) - \frac{1}{n} \sum_{j=1}^n \nabla_{\bw^t} F_j(\bx_{j}^{t+1},\bc_{j}^{t+1},\bw^{t}) \Big\|^2 \leq \kappa_i,
\end{align*}
This assumption is a variant of the bounded diversity assumption in \cite{dinh2020personalized,fallah2020personalized}; the variance is due to the formulation of our objective function. In particular, diversity assumption (Assumption 5) in \cite{fallah2020personalized} is as follows:
\begin{align*}
	\frac{1}{n} \sum_{i=1}^n \Big\| \nabla_{\bw} f_i(\bw) -\nabla_{\bw} f(\bw) \Big\|^2 \leq B,
\end{align*}
where $f_i$ is local function, $B$ is a constant and $\nabla_{\bw} f(\bw) = \frac{1}{n} \sum_{j=1}^n \nabla_{\bw} f_i(\bw)$.
Now we will show the equivalence to our stated assumption {\bf A.6}. Let us define,
\begin{align*}
	x_i(\bw^t) &:= \underset{\bx \in \mathbb{R}^{d}}{\arg \min }\left\{\left\langle \bx-\bx^t_{i}, \nabla f_i\left(\bx^t_{i}\right)\right\rangle+\left\langle \bx-\bx^t_{i}, \nabla_{\bx^t_{i}} f_i(\widetilde{Q}_{\bc^t_{i}}(\bx^t_{i}))\right\rangle +\left\langle \bx-\bx^t_{i}, \lambda_p (\bx^t_{i}-\bw^{t})\right\rangle \right.\\
	& \quad \left. +\frac{1}{2 \eta_1}\left\|\bx-\bx^t_{i}\right\|_{2}^{2}+\lambda R(\bx,\bc^t_{i})\right\} \\
	c_i(\bw^t) &:= \underset{\bc \in \mathbb{R}^{m}}{\arg \min }\left\{\left\langle \bc-\bc^t_{i}, \nabla_{\bc^t_{i}} f_i(\widetilde{Q}_{\bc^t_{i}}(x_i(\bw^t)))\right\rangle  \right.  \left. +\frac{1}{2 \eta_2}\left\|\bc-\bc^t_{i}\right\|_{2}^{2}+\lambda R(x_i(\bw^t),\bc)\right\}
\end{align*}
Then we can define,
\begin{align*}
	\psi_i(x_i(\bw^t),c_i(\bw^t),\bw^t):=F_i(\bx^{t+1}_i,\bc^{t+1}_i,\bw^t)
\end{align*}
as a result, we can further define $g_i(\bw^t):=\psi_i(x_i(\bw^t),c_i(\bw^t),\bw^t)$. Therefore, our assumption \textbf{A.6} is equivalent to stating the following assumption:
At any $t \in [T]$ and any client $i\in[n]$, the variance of the local gradient (at client $i$) w.r.t. the global gradient is bounded, i.e., there exists $\kappa_i<\infty$, such that for every $\bw^t\in\R^d$, we have:
\begin{align*}
	\Big\| \nabla_{\bw^t} g_i(\bw^t) - \frac{1}{n} \sum_{j=1}^n \nabla_{\bw^t} g_j(\bw^t) \Big\|^2 \leq \kappa_i,
\end{align*}
And we also define $\kappa := \frac{1}{n} \sum_{i=1}^n \kappa_i$ and then,
\begin{align*}
	\frac{1}{n} \sum_{i=1}^n \Big\| \nabla_{\bw^t} g_i(\bw^t) -\nabla_{\bw^t} g(\bw^t) \Big\|^2 \leq \kappa,
\end{align*}
here $\nabla_{\bw^t} g(\bw^t) = \frac{1}{n} \sum_{j=1}^n \nabla_{\bw^t} g_j(\bw^t)$. Hence, our assumption is equivalent to assumptions that are found in aforementioned works.

Now we state our main convergence result of Algorithm~\ref{algo:personalized} for optimizing \eqref{per1}.
Since the objective function in \eqref{per1} is (non-convex) smooth, as in Theorem~\ref{thm:centralized}, in the following theorem also we provide the first-order convergence guarantee.

\begin{theorem}\label{thm:personalized}
Consider running Algorithm \ref{algo:personalized} for $T$ iterations with $\tau \leq \sqrt{T}$, $\eta_1=\nicefrac{1}{2(2\lambda_p+L+GL_{Q_1}+G_{Q_1}Ll_{Q_1})}$, $\eta_2 = \nicefrac{1}{2(GL_{Q_2}+G_{Q_2}Ll_{Q_2})}$, and $\eta_3 = \nicefrac{1}{4\lambda_p\sqrt{T}}$.
For any $t\in[T]$, define $\bG_{i}^{t} := [\nabla_{\bx_{i}^{t+1}} F_i(\bx_{i}^{t+1},\bc_{i}^{t},\bw^{t})^T, \\ \nabla_{\bc_{i}^{t+1}} F_i(\bx_{i}^{t+1},\bc_{i}^{t+1},\bw^{t})^T, \nabla_{\bw^{t}} F_i(\bx_{i}^{t+1},\bc_{i}^{t+1},\bw^{t})^T]^T$.
Then, under assumptions {\bf A.1-A.6}, we have:
	%
	\begin{align*}
	\frac{1}{T}\sum_{t=0}^{T-1}\frac{1}{n}\sum_{i=1}^{n} \Big\|\bG_{i}^{t}\Big\|^2 &= \mathcal{O} \left( \frac{\Lambda}{\sqrt{T}} + \frac{L_{\max}^2\tau^2\kappa+\tau^2\kappa^2}{T}+\frac{L_{\max}^2\tau^2\kappa}{T^{\frac{3}{2}}}+ L_{\max}^2\kappa \right) ,
	\end{align*}
	where $L_{\max} = \max\{1,GL_{Q_2}+G_{Q_2}LL_{Q_2},\frac{5}{3}\lambda_p+L+GL_{Q_1}+G_{Q_1}LL_{Q_1}\}$, and $\Lambda = L_{\max}^2\tau\kappa+L_{\max}^2+L_{\max}^2\lambda_p \Delta_F$ with $\Delta_F=\frac{1}{n}\sum_{i=1}^{n}\left(F_i(\bx_{i}^{0},\bc_{i}^{0},\bw_{i}^{0})-F_i(\bx_{i}^{T},\bc_{i}^{T},\bw_{i}^{T})\right)$. 
\end{theorem}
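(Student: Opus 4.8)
The plan is to track a Lyapunov potential along the trajectory, as in the proof of \namedref{Theorem}{thm:centralized}, but now additionally controlling the \emph{consensus error} created by the local iterations. Write $\bw^t:=\frac1n\sum_{i=1}^n\bw_i^t$ for the virtual global model (this equals what the server broadcasts whenever $\tau\mid t$), let $e_i^t:=\bw_i^t-\bw^t$ be client $i$'s drift, and set $\widehat F^t:=\frac1n\sum_{i=1}^n F_i(\bx_i^t,\bc_i^t,\bw^t)$. The key structural point is that on a synchronization step ($\tau\mid t$) the quantities $\bx_i^t,\bc_i^t$ are frozen and $\bw^{t+1}=\bw^t$, so $\widehat F^{t+1}=\widehat F^t$: the potential has \emph{no jumps}, and it suffices to estimate its decrease on non-synchronization steps, where we split $\widehat F^{t+1}-\widehat F^t$ into the effect of the $(\bx_i,\bc_i)$ prox-gradient updates (evaluated at the fixed virtual model $\bw^t$) and the effect of the averaged $\bw$-update. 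Note $\widehat F^0$ and $\widehat F^T$ agree with the $\Delta_F$-endpoints up to corrections of order the drift (and exactly, if clients share the initialization $\bw_i^0$), and $F_i$ is bounded below by {\bf A.1} together with $R\ge 0$.

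Step 1 ($\bx_i,\bc_i$ descent, reusing \namedref{Theorem}{thm:centralized}). For each client the $\bx_i$- and $\bc_i$-updates of Algorithm~\ref{algo:personalized} are the centralized updates of Algorithm~\ref{algo:centralized} applied to $f_i(\cdot)+\frac{\lambda_p}{2}\|\cdot-\bw_i^t\|^2$, which is $(L+\lambda_p)$-smooth, under {\bf A.2}--{\bf A.5}. Two adjustments are needed. First, the $\bx_i$-step uses $\nabla_{\bx_i^t}F_i(\bx_i^t,\bc_i^t,\bw_i^t)$, which differs from the gradient of $F_i(\cdot,\cdot,\bw^t)$ by $\lambda_p e_i^t$; this perturbation is absorbed by the extra $\lambda_p$ of slack in $\eta_1=\frac{1}{2(2\lambda_p+L+GL_{Q_1}+G_{Q_1}Ll_{Q_1})}$, yielding $\frac1n\sum_i F_i(\bx_i^{t+1},\bc_i^{t+1},\bw^t)\le\frac1n\sum_i F_i(\bx_i^t,\bc_i^t,\bw^t)-c_1\frac1n\sum_i\|\bx_i^{t+1}-\bx_i^t\|^2-c_2\frac1n\sum_i\|\bc_i^{t+1}-\bc_i^t\|^2+c_3\lambda_p^2\frac1n\sum_i\|e_i^t\|^2$ with $c_1,c_2,c_3>0$. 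Second, the prox-optimality conditions plus the smoothness constants convert step lengths into gradient norms, $\|\nabla_{\bx_i^{t+1}}F_i(\bx_i^{t+1},\bc_i^t,\bw^t)\|^2\le c_4(\|\bx_i^{t+1}-\bx_i^t\|^2+\lambda_p^2\|e_i^t\|^2)$ and $\|\nabla_{\bc_i^{t+1}}F_i(\bx_i^{t+1},\bc_i^{t+1},\bw^t)\|^2\le c_5\|\bc_i^{t+1}-\bc_i^t\|^2$, with $c_4,c_5=\Theta(L_{\max}^2)$; these are exactly the two lemmas underlying Theorem~\ref{thm:centralized}.

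Step 2 ($\bw$-update and the consensus bound, the crux). Line 8 is a local gradient step on $F_i$ in $\bw$, so $\bw^{t+1}=\bw^t-\eta_3\overline{\bg}^t$ with $\overline{\bg}^t:=\frac1n\sum_j\nabla_{\bw_j^t}F_j(\bx_j^{t+1},\bc_j^{t+1},\bw_j^t)$ --- an \emph{inexact} gradient step on the global $\bw$-objective because the $j$-th gradient is taken at $\bw_j^t$ rather than $\bw^t$, with error at most $\lambda_p\|e_j^t\|$. Using $\lambda_p$-smoothness of each $F_i$ in $\bw$ and $\eta_3=\frac{1}{4\lambda_p\sqrt T}$ gives $\frac1n\sum_i[F_i(\bx_i^{t+1},\bc_i^{t+1},\bw^{t+1})-F_i(\bx_i^{t+1},\bc_i^{t+1},\bw^t)]\le-\frac{\eta_3}{4}\|\widehat\bg^t\|^2+c_6\eta_3\lambda_p^2\frac1n\sum_i\|e_i^t\|^2$, where $\widehat\bg^t:=\frac1n\sum_i\nabla_{\bw^t}F_i(\bx_i^{t+1},\bc_i^{t+1},\bw^t)$. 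For the drift: inside a synchronization block of length $\le\tau$ we have $e_i^{s+1}=e_i^s-\eta_3(\nabla_{\bw_i^s}F_i(\bx_i^{s+1},\bc_i^{s+1},\bw_i^s)-\overline{\bg}^s)$ with $e_i=0$ at the block's start; splitting the gradient difference at the common point $\bw^s$ and invoking $\lambda_p$-smoothness in $\bw$ together with {\bf A.6} bounds each increment by $\eta_3(\lambda_p\|e_i^s\|+\lambda_p\frac1n\sum_j\|e_j^s\|+\sqrt{\kappa_i})$; unrolling this recursion, using $\eta_3\lambda_p\tau\le\frac14$ (which holds since $\tau\le\sqrt T$), gives $\frac1n\sum_i\|e_i^t\|^2=\mathcal O(\eta_3^2\tau^2\kappa)$ with $\kappa=\frac1n\sum_i\kappa_i$, hence $\sum_{t=0}^{T-1}\frac1n\sum_i\|e_i^t\|^2=\mathcal O(T\eta_3^2\tau^2\kappa)$.

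Step 3 (assembling). Telescoping $\widehat F$ over $t=0,\dots,T-1$ (synchronization steps contribute nothing) and moving all drift terms to the right using Step~2 gives $\sum_{t=0}^{T-1}(c_1\frac1n\sum_i\|\bx_i^{t+1}-\bx_i^t\|^2+c_2\frac1n\sum_i\|\bc_i^{t+1}-\bc_i^t\|^2)+\frac{\eta_3}{4}\sum_{t=0}^{T-1}\|\widehat\bg^t\|^2=\mathcal O(\Delta_F+\tau^2\kappa)$. For the $\bx_i,\bc_i$ blocks of $\bG_i^t$ we use the gradient bounds of Step~1, so $\frac1T\sum_t\frac1n\sum_i(\|\nabla_{\bx_i^{t+1}}F_i\|^2+\|\nabla_{\bc_i^{t+1}}F_i\|^2)=\mathcal O(L_{\max}^2(\Delta_F+\tau^2\kappa)/T)$. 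For the $\bw$ block we use {\bf A.6} once more as a variance decomposition, $\frac1n\sum_i\|\nabla_{\bw^t}F_i(\bx_i^{t+1},\bc_i^{t+1},\bw^t)\|^2=\|\widehat\bg^t\|^2+\frac1n\sum_i\|\nabla_{\bw^t}F_i-\widehat\bg^t\|^2\le\|\widehat\bg^t\|^2+\kappa$, so $\frac1T\sum_t\frac1n\sum_i\|\nabla_{\bw^t}F_i\|^2\le\frac{1}{\eta_3 T}\mathcal O(\Delta_F+\tau^2\kappa)+\kappa=\mathcal O(\lambda_p(\Delta_F+\tau^2\kappa)/\sqrt T)+\kappa$. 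Adding the three blocks, reinstating the $L_{\max}^2$ factors, and bookkeeping the exact powers of $\tau$ and $T$ produces the stated bound: the $\Lambda/\sqrt T$ term comes from the $\bw$ block and the $\Delta_F$ contribution, the $\frac{L_{\max}^2\tau^2\kappa+\tau^2\kappa^2}{T}$ and $\frac{L_{\max}^2\tau^2\kappa}{T^{3/2}}$ terms from feeding the drift of Step~2 into Steps~1--2, and the non-vanishing $L_{\max}^2\kappa$ term is precisely the ``$+\kappa$'' above, i.e.\ the irreducible gradient dissimilarity captured by {\bf A.6}. The main obstacle is Step~2: the drift must be small enough that it neither overwhelms the anemic $\mathcal O(\eta_3)$ per-step decrease of $\widehat F$ in the $\bw$ direction (forced by $\eta_3\propto 1/\sqrt T$) nor accumulates uncontrollably when fed back into both the $\bx$-step perturbation and the inexact-gradient error, all while the $\kappa$-floor from {\bf A.6} cannot be removed without stronger assumptions.
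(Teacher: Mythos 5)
Your proposal is correct and follows essentially the same route as the paper: block-wise sufficient decrease for $\bx_i$, $\bc_i$, and $\bw$ with drift-perturbation terms, prox first-order optimality to convert step lengths into gradient norms, a consensus lemma obtained by unrolling the $\bw_i$-recursion over a synchronization block under {\bf A.6} (the paper's Lemma~\ref{thm2:lemma1}, giving $\frac1n\sum_i\|e_i^t\|^2=\mathcal O(\tau^2\eta_3^2\kappa)$), and telescoping with the stated step sizes. The only cosmetic difference is in the $\bw$-block: the paper extracts each $\|\nabla_{\bw^t}F_i\|^2$ directly from the per-client descent inequality (paying a $\|\bg^t-\nabla_{\bw_i^t}F_i\|^2$ term controlled by Corollary~\ref{thm2:corollary diversity}), whereas you extract only $\|\widehat\bg^t\|^2$ and recover the per-client average via a variance decomposition under {\bf A.6} --- both yield the same irreducible $\kappa$ floor and the same rate.
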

Hence, Algorithm~\ref{algo:personalized} approximately converges to a stationary point with a rate of $\nicefrac{1}{\sqrt{T}}$ with an error that depends on the gradient diversity, induced by the heterogeneity in local data. This matches the convergence rate of \cite{dinh2020personalized}.

We provide a proof of Theorem \ref{thm:personalized} in Section~\ref{sec:proof_personalized}.

\section{Proof of Theorem \ref{thm:centralized}} \label{sec:proof_centralized}
This proof consists of two parts. First we show the sufficient decrease property by sequentially using Lipschitz properties for each update step in Algorithm~\ref{algo:centralized}. For each variable $\bx$ and $\bc$ we find the decrease inequalities and then combine them to obtain an overall sufficient decrease. Then we bound the norm of the gradient using optimality conditions of the proximal updates in Algorithm~\ref{algo:centralized}. Using sufficient decrease and bound on the gradient we arrive at the result. We leave some of the derivations and proof of the claims to Appendix~\ref{appendix:proof of theorem 1}.

\textbf{Alternating updates.} Remember that for the Algorithm~\ref{algo:centralized} we have the following alternating updates:
\begin{align*}
	\bx^{t+1} &= \text{prox}_{\eta_1\lambda R_{\bc^{t}}}(\bx^{t} - \eta_1 \nabla f(\bx^{t})-\eta_1 \nabla_{\bx^{t}} f(\widetilde{Q}_{\bc^{t}}(\bx^{t})) )\\
	\bc^{t+1} &= \text{prox}_{\eta_2\lambda R_{\bx^{t+1}}}(\bc^{t} - \eta_2 \nabla_{\bc^{t}} f(\widetilde{Q}_{\bc^{t}}(\bx^{t+1})))
\end{align*}
These translate to following optimization problems for $\bx$ and $\bc$ respectively:
\begin{align}
	\bx^{t+1} &=\underset{\bx \in \mathbb{R}^{d}}{\arg \min }\left\{\left\langle \bx-\bx^{t}, \nabla_{\bx^{t}} f\left(\bx^{t}\right)\right\rangle+\left\langle \bx-\bx^{t}, \nabla_{\bx^{t}} f(\widetilde{Q}_{\bc^{t}}(\bx^{t}))\right\rangle+\frac{1}{2 \eta_1}\left\|\bx-\bx^{t}\right\|_{2}^{2}+\lambda R(\bx,\bc^{t})\right\} \label{thm1:optimization prob for x}\\
	\bc^{t+1}&= \underset{\bc \in \mathbb{R}^{m}}{\arg \min }\left\{\left\langle \bc-\bc^{t}, \nabla_{\bc^{t}} f(\widetilde{Q}_{\bc^{t}}(\bx^{t+1}))\right\rangle+\frac{1}{2 \eta_2}\left\|\bc-\bc^{t}\right\|_{2}^{2}+\lambda R(\bx^{t+1},\bc)\right\} \label{thm1:optimization prob for c}
\end{align}
See appendix for derivation.
\subsection{Sufficient Decrease}
This section is divided into two, first we will show sufficient decrease property with respect to $\bx$, then we will show sufficient decrease property with respect to $\bc$. 
\subsubsection{Sufficient Decrease Due to $\bx$}
\begin{claim}\label{claim: lqxsmooth}
	$f(\bx)+ f(\widetilde{Q}_\bc(\bx))$ is $(L+GL_{Q_1}+G_{Q_1}LL_{Q_1})$-smooth with respect to $\bx$.
\end{claim}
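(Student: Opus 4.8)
The plan is to establish smoothness of the composite term $f(\widetilde{Q}_\bc(\bx))$ and then add it to the $L$-smoothness of $f$ (Assumption A.2), since smoothness constants add under sums. For the composite term, I would use the chain rule: with $\bc$ fixed, $\nabla_\bx f(\widetilde{Q}_\bc(\bx)) = (\nabla_\bx \widetilde{Q}_\bc(\bx))^T \nabla f|_{\widetilde{Q}_\bc(\bx)}$, where $\nabla_\bx \widetilde{Q}_\bc(\bx)$ is the $d\times d$ Jacobian of the soft quantizer in $\bx$. To bound $\|\nabla_\bx f(\widetilde{Q}_\bc(\bx)) - \nabla_\bx f(\widetilde{Q}_\bc(\by))\|$, I would insert an intermediate term and split:
\begin{align*}
\big\|(\nabla_\bx \widetilde{Q}_\bc(\bx))^T \nabla f(\widetilde{Q}_\bc(\bx)) - (\nabla_\bx \widetilde{Q}_\bc(\by))^T \nabla f(\widetilde{Q}_\bc(\by))\big\|
&\leq \big\|(\nabla_\bx \widetilde{Q}_\bc(\bx))^T\big(\nabla f(\widetilde{Q}_\bc(\bx)) - \nabla f(\widetilde{Q}_\bc(\by))\big)\big\| \\
&\quad + \big\|\big((\nabla_\bx \widetilde{Q}_\bc(\bx))^T - (\nabla_\bx \widetilde{Q}_\bc(\by))^T\big)\nabla f(\widetilde{Q}_\bc(\by))\big\|.
\end{align*}

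For the first term I would bound $\|(\nabla_\bx \widetilde{Q}_\bc(\bx))^T\|$ by $G_{Q_1}$ (Assumption A.5, since the operator norm is at most the Frobenius norm), then use $L$-smoothness of $f$ to get $\|\nabla f(\widetilde{Q}_\bc(\bx)) - \nabla f(\widetilde{Q}_\bc(\by))\| \leq L\|\widetilde{Q}_\bc(\bx) - \widetilde{Q}_\bc(\by)\| \leq L\, l_{Q_1}\|\bx-\by\|$ by the $l_{Q_1}$-Lipschitz property of $\widetilde{Q}_\bc$ in $\bx$ (Assumption A.4); this contributes $G_{Q_1}L l_{Q_1}$. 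For the second term I would bound $\|\nabla f(\widetilde{Q}_\bc(\by))\| \leq G$ (Assumption A.3) and $\|(\nabla_\bx \widetilde{Q}_\bc(\bx))^T - (\nabla_\bx \widetilde{Q}_\bc(\by))^T\| \leq L_{Q_1}\|\bx-\by\|$ by the $L_{Q_1}$-smoothness of $\widetilde{Q}_\bc$ in $\bx$ (Assumption A.4); this contributes $G L_{Q_1}$. Adding the two pieces gives that $f(\widetilde{Q}_\bc(\bx))$ is $(GL_{Q_1}+G_{Q_1}L l_{Q_1})$-smooth in $\bx$, and combining with the $L$-smoothness of $f$ yields the claimed constant $L+GL_{Q_1}+G_{Q_1}L l_{Q_1}$. (I note the claim statement writes $L_{Q_1}$ in the last summand where the argument naturally produces $l_{Q_1}$; I would match whichever convention the paper fixes, treating the Lipschitz and smoothness constants of $\widetilde{Q}_\bc$ consistently.)

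The main obstacle is purely bookkeeping: correctly handling the Jacobian-transpose/gradient product under the chain rule, making sure the matrix operator norms are controlled by the Frobenius bounds in A.5, and being careful that the "smoothness of a matrix-valued map" in A.4 is exactly the Lipschitz-in-Frobenius-norm statement needed to bound the difference of Jacobians. No deep idea is required — it is the standard composition lemma (Lipschitz gradient $\circ$ Lipschitz-and-smooth inner map), so the only risk is a slipped constant or a norm mismatch.
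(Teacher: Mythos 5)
Your proposal is correct and follows essentially the same route as the paper: the paper delegates the composite term to an auxiliary claim (Claim~\ref{claim: lqxclaim}) whose proof uses exactly your add-and-subtract decomposition of the chain-rule product, bounding one piece by $G_{Q_1}Ll_{Q_1}$ via {\bf A.5}, $L$-smoothness of $f$, and the $l_{Q_1}$-Lipschitz property, and the other by $GL_{Q_1}$ via {\bf A.3} and the Jacobian's Lipschitz constant, then adds the $L$-smoothness of $f$. Your side remark is also on point: the paper's derivation indeed produces $G_{Q_1}Ll_{Q_1}$ (as used in the step-size choice of Theorem~\ref{thm:centralized}), so the $L_{Q_1}$ appearing in the last summand of the claim statement is a notational slip.
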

Using Claim \ref{claim: lqxsmooth} we have,
\begin{align}
	F_\lambda(\bx^{t+1},\bc^t)\ +\ & (\frac{1}{2\eta_1}-\frac{L+GL_{Q_1}+G_{Q_1}LL_{Q_1}}{2})\|\bx^{t+1}-\bx^t\|^2 \notag \\ 
	&= f(\bx^{t+1})+f(\widetilde{Q}_{\bc^t}(\bx^{t+1}))+\lambda R(\bx^{t+1},\bc^t) + (\frac{1}{2\eta_1}-\frac{L+GL_{Q_1}+G_{Q_1}LL_{Q_1}}{2})\|\bx^{t+1}-\bx^t\|^2 \notag \\ 
	&\leq f(\bx^t)+f(\widetilde{Q}_{\bc^t}(\bx^t))+\lambda R(\bx^{t+1},\bc^t)+\left\langle \nabla f(\bx^t), \bx^{t+1}-\bx^t\right\rangle + \left\langle \nabla_{\bx^t} f(\widetilde{Q}_{\bc^t}(\bx^t)), \bx^{t+1}-\bx^t\right\rangle \notag \\
	&\hspace{5cm} + \frac{1}{2\eta_1}\|\bx^{t+1}-\bx^t\|^2\label{thm1:first-part-interim1}
\end{align}
\begin{claim}\label{claim:quantization lower bound 1}
	Let 
	\begin{align*}
		A(\bx^{t+1}) &:= \lambda R(\bx^{t+1},\bc^{t})+\left\langle \nabla f(\bx^{t}), \bx^{t+1}-\bx^{t}\right\rangle 
		+ \left\langle \nabla_{\bx^{t}} f(\widetilde{Q}_{\bc^{t}}(\bx^{t})), \bx^{t+1}-\bx^{t}\right\rangle \notag + \frac{1}{2\eta_1}\|\bx^{t+1}-\bx^{t}\|^2 \notag \\
		A(\bx^{t}) &:= \lambda R(\bx^{t},\bc^{t}).
	\end{align*} 
	Then $A(\bx^{t+1})\leq A(\bx^{t})$.
\end{claim}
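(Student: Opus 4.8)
The claim is an immediate consequence of the fact that $\bx^{t+1}$ is, by definition, a minimizer of the optimization problem \eqref{thm1:optimization prob for x}. The plan is simply to recognize $A(\bx^{t+1})$ and $A(\bx^{t})$ as two evaluations of one and the same objective. Concretely, let
\[
\Phi(\bx) := \left\langle \bx-\bx^{t}, \nabla_{\bx^{t}} f(\bx^{t})\right\rangle + \left\langle \bx-\bx^{t}, \nabla_{\bx^{t}} f(\widetilde{Q}_{\bc^{t}}(\bx^{t}))\right\rangle + \frac{1}{2\eta_1}\|\bx-\bx^{t}\|_2^2 + \lambda R(\bx,\bc^{t})
\]
be the function being minimized in \eqref{thm1:optimization prob for x}. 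After rewriting the inner products symmetrically (so $\langle\bx^{t+1}-\bx^t,\nabla_{\bx^t}f(\bx^t)\rangle = \langle\nabla f(\bx^t),\bx^{t+1}-\bx^t\rangle$), one sees that $A(\bx^{t+1}) = \Phi(\bx^{t+1})$ exactly, and that $A(\bx^{t}) = \Phi(\bx^{t})$ since at $\bx = \bx^{t}$ the two inner-product terms and the quadratic term all vanish, leaving only $\lambda R(\bx^{t},\bc^{t})$.

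The last step is then just optimality: since $\bx^{t+1} \in \argmin_{\bx\in\R^d}\Phi(\bx)$, we have $\Phi(\bx^{t+1}) \leq \Phi(\bx^{t})$, i.e., $A(\bx^{t+1}) \leq A(\bx^{t})$. No Lipschitz estimates, bounded-gradient assumptions, or step-size restrictions are needed for this claim on its own; those enter only later, when this inequality is combined with \eqref{thm1:first-part-interim1} to produce the sufficient-decrease bound for $\bx$.

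The only point deserving a line of justification is that the $\argmin$ defining $\bx^{t+1}$ is actually attained, i.e., that the proximal map $\text{prox}_{\eta_1\lambda R_{\bc^{t}}}$ is well defined. This holds because $R(\cdot,\bc^{t})$ is lower semicontinuous and bounded below (being a distance-type penalty), so the addition of the strongly convex quadratic $\frac{1}{2\eta_1}\|\cdot-\bx^{t}\|_2^2$ makes $\Phi$ coercive and guarantees a minimizer; I would state this once and reuse it for the analogous claim for $\bc$. I do not anticipate any real obstacle here — the work is purely bookkeeping, namely making sure the written expression for $A(\bx^{t+1})$ is transcribed to literally match $\Phi(\bx^{t+1})$ and that $A(\bx^{t})$ matches $\Phi(\bx^{t})$.
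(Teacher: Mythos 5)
Your proposal is correct and follows exactly the paper's own argument: the paper likewise identifies $A(\bx^{t+1})$ and $A(\bx^{t})$ as evaluations of the objective inside the $\arg\min$ defining the proximal update \eqref{thm1:optimization prob for x} and concludes by optimality of $\bx^{t+1}$. Your extra remark on attainment of the minimizer is a small bit of additional care the paper omits, but it does not change the route.
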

Now we use Claim~\ref{claim:quantization lower bound 1} and get,
\begin{align*} 
	 f\left(\bx^{t}\right)+\left\langle \bx^{t+1}-\bx^{t}, \nabla f\left(\bx^{t}\right)\right\rangle + \frac{1}{2 \eta_1}\left\|\bx^{t+1}-\bx^{t}\right\|_{2}^{2} +\lambda R\left(\bx^{t+1}, \bc^{t}\right)\ +\ & f(\widetilde{Q}_{\bc^{t}}(\bx^{t})) \\
	 +\left\langle \nabla_{\bx^{t}} f(\widetilde{Q}_{\bc^{t}}(\bx^{t})), \bx^{t+1}-\bx^{t} \right\rangle 
	&\leq f\left(\bx^{t}\right)+f(\widetilde{Q}_{\bc^{t}}(\bx^{t}))+\lambda R\left(\bx^{t},\bc^{t}\right) \\
	& = F_{\lambda}\left(\bx^{t},\bc^{t}\right).
\end{align*}
Using \eqref{thm1:first-part-interim1} we have,
\begin{align*}
	F_\lambda(\bx^{t+1},\bc^t)+(\frac{1}{2\eta_1}-\frac{L+GL_{Q_1}+G_{Q_1}LL_{Q_1}}{2})\|\bx^{t+1}-\bx^t\|^2 \leq  F_{\lambda}\left(\bx^{t},\bc^{t}\right).
\end{align*}
Now, we choose $\eta_1 = \frac{1}{2(L+GL_{Q_1}+G_{Q_1}Ll_{Q_1})}$ and obtain the decrease property for $\bx$:
\begin{align} \label{thm1:sufficient decrease 1}
 F_{\lambda}\left(\bx^{t+1},\bc^{t}\right) + \frac{L+GL_{Q_1}+G_{Q_1}Ll_{Q_1}}{2}\|\bx^{t+1}-\bx^{t}\|^2 \leq	F_{\lambda}\left(\bx^{t},\bc^{t}\right). 
\end{align}
\subsubsection{Sufficient Decrease Due to $\bc$}
From Claim~\ref{claim: lqcclaim} we have $f(\widetilde{Q}_\bc(\bx))$ is $(GL_{Q_2}+G_{Q_2}LL_{Q_2})$-smooth with respect to $\bc$.
Using Claim~\ref{claim: lqcclaim},
\begin{align}
	&F_\lambda(\bx^{t+1},\bc^{t+1}) +(\frac{1}{2\eta_2}-\frac{GL_{Q_2}+G_{Q_2}LL_{Q_2}}{2})\|\bc^{t+1}-\bc^t\|^2 \notag \\
	 &\qquad= f(\bx^{t+1})+f(\widetilde{Q}_{\bc^{t+1}}(\bx^{t+1}))+\lambda R(\bx^{t+1},\bc^{t+1}) + (\frac{1}{2\eta_2}-\frac{GL_{Q_2}+G_{Q_2}LL_{Q_2}}{2})\|\bc^{t+1}-\bc^t\|^2 \notag \\ 
	&\qquad\leq f(\bx^{t+1})+f(\widetilde{Q}_{\bc^t}(\bx^{t+1}))+\lambda R(\bx^{t+1},\bc^{t+1}) + \left\langle \nabla_{\bc^t} f(\widetilde{Q}_{\bc^t}(\bx^{t+1})), \bc^{t+1}-\bc^t\right\rangle + \frac{1}{2\eta_2}\|\bc^{t+1}-\bc^t\|^2\label{thm1:first-part-interim2}
\end{align}
Now we state the counterpart of Claim~\ref{claim:quantization lower bound 1} for $\bc$.
\begin{claim}\label{claim:quantization lower bound 2}
	Let 
	\begin{align*}
		B(\bc^{t+1}) &:= \lambda R(\bx^{t+1},\bc^{t+1}) + \left\langle \nabla_{\bc^{t}} f(\widetilde{Q}_{\bc^{t}}(\bx^{t+1})), \bc^{t+1}-\bc^{t}\right\rangle \notag + \frac{1}{2\eta_1}\|\bc^{t+1}-\bc^{t}\|^2 \notag \\
		B(\bc^{t}) &:= \lambda R(\bx^{t+1},\bc^{t}).
	\end{align*} 
	Then $B(\bc^{t+1})\leq B(\bc^{t})$.
\end{claim}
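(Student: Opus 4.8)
The plan is to mirror the argument used for Claim~\ref{claim:quantization lower bound 1}, now exploiting that $\bc^{t+1}$ is by construction the minimizer of the proximal subproblem for $\bc$. Recall that line~5 of Algorithm~\ref{algo:centralized} sets $\bc^{t+1}=\text{prox}_{\eta_2\lambda R_{\bx^{t+1}}}(\bc^{t}-\eta_2\bh^{t})$ with $\bh^{t}=\nabla_{\bc^{t}}f(\widetilde{Q}_{\bc^{t}}(\bx^{t+1}))$, and that by the definition of the proximal map (expanding $\tfrac{1}{2\eta_2}\|\bc-(\bc^{t}-\eta_2\bh^{t})\|^2$ and dropping the terms independent of $\bc$) this is exactly
\begin{align*}
\bc^{t+1}=\argmin_{\bc\in\mathbb{R}^{m}}\Big\{\langle \bc-\bc^{t},\nabla_{\bc^{t}}f(\widetilde{Q}_{\bc^{t}}(\bx^{t+1}))\rangle+\tfrac{1}{2\eta_2}\|\bc-\bc^{t}\|^2+\lambda R(\bx^{t+1},\bc)\Big\},
\end{align*}
which is precisely \eqref{thm1:optimization prob for c}. (The statement writes $\tfrac{1}{2\eta_1}$ inside $B(\bc^{t+1})$; this should read $\tfrac{1}{2\eta_2}$, consistent with the $\bc$-update, and the argument below uses $\eta_2$.)

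Next I would simply compare the value of this objective at the two feasible points $\bc=\bc^{t+1}$ and $\bc=\bc^{t}$. Evaluating at $\bc=\bc^{t+1}$ reproduces exactly $B(\bc^{t+1})$. Evaluating at $\bc=\bc^{t}$ annihilates both the inner-product term and the quadratic term (each equals $0$), leaving only $\lambda R(\bx^{t+1},\bc^{t})=B(\bc^{t})$. Since $\bc^{t+1}$ is a global minimizer of the objective, its value there is no larger than its value at $\bc^{t}$, i.e.\ $B(\bc^{t+1})\le B(\bc^{t})$, which is the claim.

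There is essentially no real obstacle here beyond bookkeeping; the only points worth a word of justification are (i) that the proximal subproblem actually attains its minimum — this holds because $R(\bx^{t+1},\cdot)$ is lower semicontinuous and the quadratic penalty makes the objective coercive (or, equivalently, one restricts to the compact set of admissible centers from the standing assumption), so $\text{prox}_{\eta_2\lambda R_{\bx^{t+1}}}$ is well defined; and (ii) the equivalence between the line~5 update and the displayed $\argmin$, which is immediate. Having established $B(\bc^{t+1})\le B(\bc^{t})$, I would then combine it with \eqref{thm1:first-part-interim2} (and the smoothness of $f(\widetilde{Q}_\bc(\bx))$ in $\bc$ from Claim~\ref{claim: lqcclaim}), choose $\eta_2=\nicefrac{1}{2(GL_{Q_2}+G_{Q_2}Ll_{Q_2})}$, and obtain the sufficient-decrease inequality for $\bc$ analogous to \eqref{thm1:sufficient decrease 1}, which together with \eqref{thm1:sufficient decrease 1} yields the overall sufficient decrease needed for the rest of the proof of Theorem~\ref{thm:centralized}.
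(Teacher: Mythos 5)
Your proof is correct and is essentially the paper's own argument: both identify $B(\cdot)$ as the objective of the $\arg\min$ in \eqref{thm1:optimization prob for c}, evaluate it at the feasible point $\bc^{t}$ (where the linear and quadratic terms vanish), and invoke optimality of $\bc^{t+1}$. Your side remarks — that the $\tfrac{1}{2\eta_1}$ in the statement should read $\tfrac{1}{2\eta_2}$, and that the proximal subproblem attains its minimum — are correct and, if anything, slightly more careful than the paper's one-line proof.
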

Now using Claim~\ref{claim:quantization lower bound 2},
\begin{align} \nonumber
	f(\bx^{t+1}) + \eta_2\left\|\bc^{t+1}-\bc^{t}\right\|_{2}^{2}+\lambda R(\bx^{t+1}, \bc^{t+1}) + f(\widetilde{Q}_{\bc^{t}}(\bx^{t+1}))+\left\langle \bc^{t+1}-\bc^{t}, \nabla_{\bc^{t}} f(\widetilde{Q}_{\bc^{t}}(\bx^{t+1}))\right\rangle \nonumber \\ \nonumber
	\leq f\left(\bx^{t+1}\right)+f(\widetilde{Q}_{\bc^{t}}(\bx^{t+1}))+\lambda R\left(\bx^{t+1},\bc^{t}\right) = F_{\lambda}\left(\bx^{t+1},\bc^{t}\right)
\end{align}
Setting $\eta_2 = \frac{1}{2(GL_{Q_2}+G_{Q_2}Ll_{Q_2})}$ and using the bound in \eqref{thm1:first-part-interim2}, we obtain the sufficient decrease for $\bc$:
\begin{align} \label{thm1:sufficient decrease 2}
	F_{\lambda}\left(\bx^{t+1},\bc^{t+1}\right) + \frac{GL_{Q_2}+G_{Q_2}Ll_{Q_2}}{2}\|\bc^{t+1}-\bc^{t}\|^2 \leq
	F_{\lambda}\left(\bx^{t+1},\bc^{t}\right)
\end{align}
\subsubsection{Overall Decrease} 
Summing the bounds in \eqref{thm1:sufficient decrease 1} and \eqref{thm1:sufficient decrease 2}, we have the overall decrease property:
\begin{align} \label{thm1:overall decrease}
	F_{\lambda}(\bx^{t+1},\bc^{t+1}) + \frac{L+GL_{Q_1}+G_{Q_1}Ll_{Q_1}}{2}\|\bx^{t+1}-\bx^{t}\|^2+\frac{GL_{Q_2}+G_{Q_2}Ll_{Q_2}}{2}\|\bc^{t+1}-\bc^{t}\|^2 \leq 
	F_{\lambda}\left(\bx^{t},\bc^{t}\right) 
\end{align}
Let us define $L_{\min}=\min\{L+GL_{Q_1}+G_{Q_1}Ll_{Q_1}, GL_{Q_2}+G_{Q_2}Ll_{Q_2}\}$, and  $\bz^t = (\bx^{t},\bc^{t})$. Then from \eqref{thm1:overall decrease}:
\begin{align*}
	F_{\lambda}\left(\bz^{t+1}\right) + \frac{L_{\min}}{2}(\|\bz^{t+1}-\bz^t\|^2) = F_{\lambda}\left(\bx^{t+1},\bc^{t+1}\right) + \frac{L_{\min}}{2}(\|\bx^{t+1}-\bx^{t}\|^2+\|\bc^{t+1}-\bc^{t}\|^2) \leq
	F_{\lambda}\left(\bx^{t},\bc^{t}\right) = 	F_{\lambda}(\bz^t)
\end{align*}

Telescoping the above bound for $t=0, \ldots, T-1,$ and dividing by $T$:
\begin{align} \label{proximity1}
	\frac{1}{T}\sum_{t=0}^{T-1}(\left\|\bz^{t+1}-\bz^{t}\right\|_{2}^{2})\leq \frac{2\left(F_{\lambda}\left(\bz^{0}\right)-F_{\lambda}\left(\bz^{T}\right)\right)}{L_{\min}T} 
\end{align}

\subsection{Bound on the Gradient}


We now find the first order stationarity guarantee. Taking the derivative of \eqref{thm1:optimization prob for x} with respect to $\bx$ at $\bx=\bx^{t+1}$ and setting it to 0 gives us the first order optimality condition:
\begin{align} \label{thm1:foc1}
	\nabla f(\bx^{t})+\nabla_{\bx^{t}} f(\widetilde{Q}_{\bc^{t}}(\bx^{t}))+\frac{1}{\eta_1}\left(\bx^{t+1}-\bx^{t}\right)+\lambda \nabla_{\bx^{t+1}} R\left(\bx^{t+1},\bc^{t}\right)=0
\end{align}
Combining the above equality and Claim~\ref{claim: lqxsmooth}:

\begin{align*}
	\left\|\nabla_{\bx^{t+1}} F_{\lambda}(\bx^{t+1},\bc^{t})\right\|_{2} &= \left\|\nabla f(\bx^{t+1})+\nabla_{\bx^{t+1}} f(\widetilde{Q}_{\bc^{t}}(\bx^{t+1}))+\lambda \nabla_{\bx^{t+1}} R(\bx^{t+1},\bc^{t})\right\|_{2} \\
	&\stackrel{\text{(a)}}{=} \left\|\frac{1}{\eta}\left(\bx^{t}-\bx^{t+1}\right)+\nabla f(\bx^{t+1})-\nabla f(\bx^{t}) + \nabla_{\bx^{t+1}} f(\widetilde{Q}_{\bc^{t}}(\bx^{t+1})) - \nabla_{\bx^{t}} f(\widetilde{Q}_{\bc^{t}}(\bx^{t}))\right\|_{2} \\ 
	&\leq (\frac{1}{\eta_1}+L+GL_{Q_1}+G_{Q_1}Ll_{Q_1})\left\|\bx^{t+1}-\bx^{t}\right\|_{2} \\
	&\stackrel{\text{(b)}}{=} 3(L+GL_{Q_1}+G_{Q_1}Ll_{Q_1})\left\|\bx^{t+1}-\bx^{t}\right\|_{2} \\
	& \leq 3 (L+GL_{Q_1}+G_{Q_1}Ll_{Q_1})\left\|\bz_{t+1}-\bz_{t}\right\|_{2}
\end{align*}

where (a) is from \eqref{thm1:foc1} and (b) is because we chose $\eta_1 = \frac{1}{2(L+GL_{Q_1}+G_{Q_1}Ll_{Q_1})}$. First order optimality condition in \eqref{thm1:optimization prob for c} for $\bc^{t+1}$ gives:
\begin{align*}
	\nabla_{\bc^{t+1}} f(\widetilde{Q}_{\bc^{t+1}}(\bx^{t+1}))+\frac{1}{\eta_2}(\bc^{t+1}-\bc^{t})+\lambda \nabla_{\bc^{t+1}} R(\bx^{t+1},\bc^{t+1})=0
\end{align*}

Combining the above equality and Claim~\ref{claim: lqcclaim}:

\begin{align*}
	\left\|\nabla_{\bc^{t+1}} F_{\lambda}(\bx^{t+1},\bc^{t+1})\right\|_{2} &=\left\|\nabla_{\bc^{t+1}} f(\widetilde{Q}_{\bc^{t+1}}(\bx^{t+1}))+\lambda \nabla_{\bc^{t+1}} R(\bx^{t+1},\bc^{t+1})\right\|_{2} \\
	&= \left\|\frac{1}{\eta_2}\left(\bc^{t}-\bc^{t+1}\right)+ \nabla_{\bc^{t+1}} f(\widetilde{Q}_{\bc^{t+1}}(\bx^{t+1})) - \nabla_{\bc^{t}} f(\widetilde{Q}_{\bc^{t}}(\bx^{t+1}))\right\|_{2} \\ 
	&\leq (\frac{1}{\eta_2}+GL_{Q_2}+G_{Q_2}Ll_{Q_2})\left\|\bc^{t+1}-\bc^{t}\right\|_{2} \\
	&\stackrel{\text{(a)}}{=}3 (GL_{Q_2}+G_{Q_2}Ll_{Q_2})\left\|\bc^{t+1}-\bc^{t}\right\|_{2} \\
	& \leq 3 (GL_{Q_2}+G_{Q_2}Ll_{Q_2})\left\|\bz^{t+1}-\bz^{t}\right\|_{2}
\end{align*}

where (a) is because we set $\eta_2 = \frac{1}{2(GL_{Q_2}+G_{Q_2}Ll_{Q_2})}$. Then:
\begin{align*}
&\left\|[\nabla_{\bx^{t+1}} F_{\lambda}(\bx^{t+1},\bc^{t})^T,\nabla_{\bc^{t+1}} F_{\lambda}(\bx^{t+1},\bc^{t+1})^T]^T\right\|^2_{2} =  \left\|\nabla_\bx F_{\lambda}\left(\bx^{t+1},\bc^{t}\right)\right\|^2_{2} + \left\|\nabla_\bc F_{\lambda}\left(\bx^{t+1},\bc^{t+1}\right)\right\|^2_{2} \\ 
&\hspace{6.5cm}\leq 3^2(G(L_{Q_1}+L_{Q_2})+L(1+G_{Q_1}l_{Q_1}+G_{Q_2}l_{Q_2}))^2\|\bz^{t+1}-\bz^{t}\|^2
\end{align*}

Letting $L_{\max} = \max\{L+GL_{Q_1}+G_{Q_1}Ll_{Q_1},GL_{Q_2}+G_{Q_2}Ll_{Q_2}\}$ we have:

\begin{align*}
	\left\|[\nabla_{\bx^{t+1}} F_{\lambda}(\bx^{t+1},\bc^{t})^T,\nabla_{\bc^{t+1}} F_{\lambda}(\bx^{t+1},\bc^{t+1})^T]^T\right\|_{2}^2 \leq 9L_{\max}^2\left\|\bz^{t+1}-\bz^{t}\right\|^2_{2}
\end{align*}

Summing over all time points and dividing by $T$:

\begin{align*}
	\frac{1}{T}\sum_{t=0}^{T-1}\left\|[\nabla_{\bx^{t+1}} F_{\lambda}(\bx^{t+1},\bc^{t})^T,\nabla_{\bc^{t+1}} F_{\lambda}(\bx^{t+1},\bc^{t+1})^T]^T\right\|^2_{2} &\leq \frac{1}{T}\sum_{t=0}^{T-1}9L_{\max}^2(\left\|\bz^{t+1}-\bz^{t}\right\|_{2}) \\ & \leq \frac{18 L_{\max}^2\left(F_{\lambda}\left(\bz^{0}\right)-F_{\lambda}(\bz^{T})\right)}{L_{\min}T},
\end{align*}
where in the last inequality we use \eqref{proximity1}. This concludes the proof of Theorem~\ref{thm:centralized}.

\section{Proof of Theorem \ref{thm:personalized}} \label{sec:proof_personalized}
\allowdisplaybreaks{

In this part, different than Section~\ref{sec:proof_centralized}, we have an additional update due to local iterations. The key is to integrate the local iterations into our alternating update scheme. To do this, we use smoothness with respect to $\bw_i$ alongside with Assumption \textbf{A.6}. This proof again consists of two parts. First, we show the sufficient decrease property by sequentially using and combining Lipschitz properties for each update step in Algorithm~\ref{algo:personalized}. Then, we bound the norm of the gradient using optimality conditions of the proximal updates in Algorithm~\ref{algo:personalized}. Then, by combining the sufficient decrease results and bounds on partial gradients we will derive our result. We defer proofs of the claims and some derivation details to Appendix~\ref{appendix:proof of theorem 2}. In this analysis we take $\bw^t = \frac{1}{n}\sum_{i=1}^n \bw^t_{i}$, so that $\bw^t$ is defined for every time point.

\textbf{Alternating updates.} Let us first restate the alternating updates for $\bx_i$ and $\bc_i$:
\begin{align*}
	\bx^{t+1}_{i} &= \text{prox}_{\eta_1 \lambda R_{\bc^t_{i}}}(\bx^t_{i}-\eta_1 \nabla f_i(\bx^t_{i})- \eta_1 \nabla_{\bx^t_{i}} f_i(\widetilde{Q}_{\bc^t_{i}}(\bx^t_{i}))-\eta_1 \lambda_p(\bx^t_{i}-\bw^{t}_{i}) ) \\
	\bc^{t+1}_{i} &= \text{prox}_{\eta_2 \lambda R_{\bx^{t+1}_{i}}}(\bc^t_{i}-\eta_2 \nabla_{\bc^t_{i}} f_i(\widetilde{Q}_{\bc^t_{i}}(\bx^t_{i})))
\end{align*}
The alternating updates are equivalent to solving the following two optimization problems (see appendix for derivation).
\begin{align}
	\bx^{t+1}_{i}&= \underset{\bx \in \mathbb{R}^{d}}{\arg \min }\left\{\left\langle \bx-\bx^t_{i}, \nabla f_i\left(\bx^t_{i}\right)\right\rangle+\left\langle \bx-\bx^t_{i}, \nabla_{\bx^t_{i}} f_i(\widetilde{Q}_{\bc^t_{i}}(\bx^t_{i}))\right\rangle +\left\langle \bx-\bx^t_{i}, \lambda_p (\bx^t_{i}-\bw^{t}_{i})\right\rangle \right. \notag \\
	&\hspace{9cm} \left. +\frac{1}{2 \eta_1}\left\|\bx-\bx^t_{i}\right\|_{2}^{2}+\lambda R(\bx,\bc^t_{i})\right\} \label{thm2:lower-bounding-interim1}\\
	\bc^{t+1}_{i}&= \underset{\bc \in \mathbb{R}^{m}}{\arg \min }\left\{\left\langle \bc-\bc^t_{i}, \nabla_{\bc^t_{i}} f_i(\widetilde{Q}_{\bc^t_{i}}(\bx^{t+1}_{i}))\right\rangle  \right.  \left. +\frac{1}{2 \eta_2}\left\|\bc-\bc^t_{i}\right\|_{2}^{2}+\lambda R(\bx^{t+1}_{i},\bc)\right\} \label{thm2:lower-bounding-interim2}
\end{align}
Note that the update on $\bw^t$ from Algorithm~\ref{algo:personalized} can be written as:
\begin{align*}
\bw^{t+1} = \bw^{t} - \eta_3 \bg^{t}, \quad \text{ where }\quad \bg^{t} =\frac{1}{n} \sum_{i=1}^n \nabla_{\bw^{t}_{i}} F_i(\bx^{t+1}_{i},\bc^{t+1}_{i},\bw^{t}_{i}).
\end{align*}
In the convergence analysis we will require smoothness of the local functions $F_i$ w.r.t.\ the global parameter $\bw$. Recall the definition of $F_i(\bx_i,\bc_i,\bw) = f_i(\bx_i)+f_i(\widetilde{Q}_{\bc_i}(\bx_i))+\lambda R(\bx_i,\bc_i) + \frac{\lambda_p}{2} \|\bx_i - \bw \|^2$ from \eqref{qpfl}. It follows that $F_i$ is $\lambda_p$-smooth with respect to $\bw$:
\begin{align}\label{lambda_p}
\Big\| \nabla_\bw F_i(\bx,\bc,\bw)-\nabla_{\bw'} F_i(\bx,\bc,\bw')\Big\| =\|\lambda_p(\bx-\bw)-\lambda_p(\bx-\bw')\| \leq \lambda_p \|\bw-\bw'\|, \quad \forall \bw,\bw' \in \mathbb{R}^d.
\end{align}
Now let us move on with the proof. 
\subsection{Sufficient Decrease} \label{thm2:subsection sufficient decrease}
We will divide this part into three and obtain sufficient decrease properties for each variable: $\bx_i,\bc_i,\bw$.
\subsubsection{Sufficient Decrease Due to $\bx_i$}
We begin with a useful claim.
\begin{claim} \label{thm2:claim lqxsmoothness}
	$f_i(\bx)+ f_i(\widetilde{Q}_\bc(\bx))+\frac{\lambda_p}{2}\|\bx-\bw\|^2$ is $(\lambda_p+L+GL_{Q_1}+G_{Q_1}LL_{Q_1})$-smooth with respect to $\bx$.
\end{claim}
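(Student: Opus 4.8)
The strategy is to decompose the function into three summands and bound the smoothness constant of each with respect to $\bx$, then add. Write $\phi(\bx) := f_i(\bx) + f_i(\widetilde{Q}_{\bc}(\bx)) + \frac{\lambda_p}{2}\|\bx - \bw\|^2$. I would treat $\bc$ and $\bw$ as fixed throughout, and recall that by definition smoothness of a differentiable map means Lipschitz continuity of its gradient, so it suffices to show $\|\nabla_\bx \phi(\bx) - \nabla_\bx \phi(\by)\| \leq C\|\bx - \by\|$ with $C = \lambda_p + L + GL_{Q_1} + G_{Q_1}LL_{Q_1}$, and the additive structure of gradients lets me handle each term separately.

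First, $f_i$ is $L$-smooth by assumption \textbf{A.2} (applied to client $i$), contributing $L$. Second, the quadratic penalty $\frac{\lambda_p}{2}\|\bx - \bw\|^2$ has gradient $\lambda_p(\bx - \bw)$, which is $\lambda_p$-Lipschitz in $\bx$, contributing $\lambda_p$; this is exactly the computation already displayed in \eqref{lambda_p} but with roles swapped. The only nontrivial piece is the composition $f_i \circ \widetilde{Q}_{\bc}$: here $\nabla_\bx f_i(\widetilde{Q}_{\bc}(\bx)) = (\nabla_\bx \widetilde{Q}_{\bc}(\bx))^T \nabla f_i(\widetilde{Q}_{\bc}(\bx))$ by the chain rule, and I would bound the difference at $\bx$ and $\by$ by adding and subtracting a cross term, e.g. $(\nabla_\bx \widetilde{Q}_{\bc}(\bx))^T \nabla f_i(\widetilde{Q}_{\bc}(\by))$. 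One term becomes $\|(\nabla_\bx \widetilde{Q}_{\bc}(\bx))^T\| \cdot \|\nabla f_i(\widetilde{Q}_{\bc}(\bx)) - \nabla f_i(\widetilde{Q}_{\bc}(\by))\| \leq G_{Q_1} \cdot L l_{Q_1} \|\bx - \by\|$ using the Frobenius bound from \textbf{A.5}, the $L$-smoothness of $f_i$ from \textbf{A.2}, and the $l_{Q_1}$-Lipschitzness of $\widetilde{Q}_{\bc}$ from \textbf{A.4}. The other term becomes $\|\nabla f_i(\widetilde{Q}_{\bc}(\by))\| \cdot \|\nabla_\bx \widetilde{Q}_{\bc}(\bx) - \nabla_\bx \widetilde{Q}_{\bc}(\by)\| \leq G \cdot L_{Q_1}\|\bx - \by\|$ using \textbf{A.3} and the $L_{Q_1}$-smoothness of $\widetilde{Q}_{\bc}$ from \textbf{A.4}. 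Summing gives $(GL_{Q_1} + G_{Q_1}Ll_{Q_1})$ for the composition term.

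Adding the three contributions yields the claimed constant $\lambda_p + L + GL_{Q_1} + G_{Q_1}Ll_{Q_1}$, and I would note this is essentially Claim \ref{claim: lqxsmooth} with the extra $\lambda_p$ from the proximity penalty, so the bulk of the argument can be cited from the proof of that claim (which presumably appears in Appendix \ref{appendix:proof of theorem 1}). The main obstacle, such as it is, is purely bookkeeping: being careful that the chain-rule Jacobian $\nabla_\bx \widetilde{Q}_{\bc}(\bx)$ is the $d \times d$ sub-block referenced in \textbf{A.5} (the notation $\nabla \widetilde{Q}_{\bc}(\bx)_{1:d,:}$), and that operator norm is controlled by Frobenius norm so the bound in \textbf{A.5} applies directly. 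There is no real conceptual difficulty; the claim is a routine smoothness-of-a-sum estimate, and I would keep the write-up to a few lines, deferring the composition estimate to the appendix if it has already been established there for the centralized case.
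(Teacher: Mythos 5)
Your proposal is correct and follows essentially the same route as the paper: the authors also decompose the objective into the three summands, invoke $L$-smoothness of $f_i$, the $\lambda_p$-smoothness of the quadratic penalty, and the $(GL_{Q_1}+G_{Q_1}Ll_{Q_1})$-smoothness of the composition $f_i\circ\widetilde{Q}_{\bc}$ (established exactly via the add-and-subtract cross-term argument with the Frobenius-norm bound that you describe, recorded as Claim~\ref{claim: lqxclaim} in the appendix), and then sum the constants. The only discrepancy is cosmetic: the paper's statement writes $G_{Q_1}LL_{Q_1}$ where the underlying derivation (and your write-up) gives $G_{Q_1}Ll_{Q_1}$, an inconsistency in the paper's notation rather than a gap in your argument.
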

From Claim~\ref{thm2:claim lqxsmoothness} we have:

\begin{align}
	&F_i(\bx^{t+1}_{i},\bc^t_{i},\bw^{t}) +(\frac{1}{2\eta_1}-\frac{\lambda_p+L+GL_{Q_1}+G_{Q_1}LL_{Q_1}}{2})\|\bx^{t+1}_{i}-\bx^t_{i}\|^2 = f_i(\bx^{t+1}_{i})+f_i(\widetilde{Q}_{\bc^t_{i}}(\bx^{t+1}_{i})) \notag \\
	&\hspace{3cm} +\lambda R(\bx^{t+1}_{i},\bc^t_{i})+\frac{\lambda_p}{2}\|\bx^{t+1}_{i}-\bw^{t}\|^2 + (\frac{1}{2\eta_1}-\frac{\lambda_p+L+GL_{Q_1}+G_{Q_1}LL_{Q_1}}{2})\|\bx^{t+1}_{i}-\bx^t_{i}\|^2 \notag \\ 
	&\hspace{1cm}\leq f_i(\bx^t_{i})+f_i(\widetilde{Q}_{\bc^t_{i}}(\bx^t_{i}))+\frac{\lambda_p}{2}\|\bx^t_{i}-\bw^{t}\|^2+\lambda R(\bx^{t+1}_{i},\bc^t_{i})+\left\langle \nabla f_i(\bx^t_{i}), \bx^{t+1}_{i}-\bx^t_{i}\right\rangle \notag \\ 
	&\hspace{2cm} + \left\langle \nabla_{\bx^t_{i}} f_i(\widetilde{Q}_{\bc^t_{i}}(\bx^t_{i})), \bx^{t+1}_{i}-\bx^t_{i}\right\rangle+\left\langle \lambda_p(\bx^t_{i}-\bw^{t}_{i}), \bx^{t+1}_{i}-\bx^t_{i}\right\rangle + \left\langle\lambda_p(\bw^{t}_{i}-\bw^{t}), \bx^{t+1}_{i}-\bx^t_{i}\right\rangle \notag \\
	&\hspace{3cm} + \frac{1}{2\eta_1}\|\bx^{t+1}_{i}-\bx^t_{i}\|^2 \label{thm2:first-part-interim1}
\end{align}
\begin{claim}\label{thm2:claim:lower-bound1}
	Let 
	\begin{align*}
		A(\bx^{t+1}_{i}) &:= \lambda R(\bx^{t+1}_{i},\bc^t_{i})+\left\langle \nabla f_i(\bx^t_{i}), \bx^{t+1}_{i}-\bx^t_{i}\right\rangle 
		+ \left\langle \nabla_{\bx^t_{i}} f_i(\widetilde{Q}_{\bc^t_{i}}(\bx^t_{i})), \bx^{t+1}_{i}-\bx^t_{i}\right\rangle \notag \\
		&\hspace{3cm} +\left\langle \lambda_p(\bx^t_{i}-\bw^{t}_{i}), \bx^{t+1}_{i}-\bx^t_{i}\right\rangle + \frac{1}{2\eta_1}\|\bx^{t+1}_{i}-\bx^t_{i}\|^2 \notag \\
		A(\bx^t_{i}) &:= \lambda R(\bx^t_{i},\bc^t_{i}).
	\end{align*} 
	Then $A(\bx^{t+1}_{i})\leq A(\bx^t_{i})$.
\end{claim}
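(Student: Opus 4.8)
\textbf{Proof proposal for Claim~\ref{thm2:claim:lower-bound1}.}

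The plan is to recognize that $\bx^{t+1}_{i}$ is defined as the argmin of the very function whose value at $\bx^{t+1}_{i}$ we wish to upper-bound. Concretely, look at the optimization problem \eqref{thm2:lower-bounding-interim1} defining $\bx^{t+1}_{i}$; its objective, evaluated at a generic point $\bx$, is exactly $A(\bx) + \lambda R(\bx^t_i,\bc^t_i)$ modulo the constant term $\lambda R(\bx^t_i,\bc^t_i)$ -- wait, more precisely, the objective in \eqref{thm2:lower-bounding-interim1} evaluated at $\bx$ is
\[
\langle \bx-\bx^t_i,\nabla f_i(\bx^t_i)\rangle + \langle \bx-\bx^t_i,\nabla_{\bx^t_i} f_i(\widetilde{Q}_{\bc^t_i}(\bx^t_i))\rangle + \langle \bx-\bx^t_i,\lambda_p(\bx^t_i-\bw^t_i)\rangle + \tfrac{1}{2\eta_1}\|\bx-\bx^t_i\|^2 + \lambda R(\bx,\bc^t_i),
\]
which is precisely $A(\bx)$ with $\bx^{t+1}_i$ replaced by $\bx$. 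Since $\bx^{t+1}_i$ minimizes this objective over all $\bx\in\R^d$, in particular its value at $\bx^{t+1}_i$ is no larger than its value at the feasible point $\bx = \bx^t_i$. Evaluating $A(\cdot)$ at $\bx^t_i$ makes every inner product vanish and the quadratic term vanish, leaving exactly $\lambda R(\bx^t_i,\bc^t_i) = A(\bx^t_i)$. Hence $A(\bx^{t+1}_i)\le A(\bx^t_i)$.

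So the key steps, in order, are: (1) write down the objective of \eqref{thm2:lower-bounding-interim1} as a function $\Phi(\bx)$; (2) observe $\Phi(\bx) = A(\bx)$ for every $\bx$ (with the notational convention that $\bx^{t+1}_i$ in the definition of $A$ is a placeholder for the argument); (3) invoke $\bx^{t+1}_i = \argmin_{\bx}\Phi(\bx)$ to get $\Phi(\bx^{t+1}_i)\le \Phi(\bx^t_i)$; (4) compute $\Phi(\bx^t_i) = \lambda R(\bx^t_i,\bc^t_i) = A(\bx^t_i)$ since the linear and quadratic terms all vanish at $\bx=\bx^t_i$; (5) conclude. This is essentially the same one-line argument used for Claim~\ref{claim:quantization lower bound 1} in the centralized proof, now carrying the extra linear term $\langle \bx-\bx^t_i,\lambda_p(\bx^t_i-\bw^t_i)\rangle$ coming from the proximal-regularization penalty.

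I do not anticipate a real obstacle here -- the only thing to be careful about is the slightly abusive notation in the claim statement, where $A(\bx^{t+1}_i)$ and $A(\bx^t_i)$ are written as if $A$ were a function of its argument but the definition literally substitutes $\bx^{t+1}_i$; one should make explicit that $A(\cdot)$ is shorthand for the map $\bx\mapsto \lambda R(\bx,\bc^t_i) + \langle\bx-\bx^t_i,\nabla f_i(\bx^t_i)\rangle + \langle\bx-\bx^t_i,\nabla_{\bx^t_i}f_i(\widetilde{Q}_{\bc^t_i}(\bx^t_i))\rangle + \langle\bx-\bx^t_i,\lambda_p(\bx^t_i-\bw^t_i)\rangle + \tfrac{1}{2\eta_1}\|\bx-\bx^t_i\|^2$, and that this is exactly the objective being minimized in \eqref{thm2:lower-bounding-interim1}. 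Once that identification is made the inequality is immediate from optimality of $\bx^{t+1}_i$.
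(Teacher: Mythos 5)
Your proposal is correct and is essentially the paper's own argument: the paper likewise defines $A(\bx)$ as the objective inside the $\arg\min$ in \eqref{app:lower-bounding-interim1} and deduces $A(\bx^{t+1}_i)\le A(\bx^t_i)$ from optimality of $\bx^{t+1}_i$, noting the objective reduces to $\lambda R(\bx^t_i,\bc^t_i)$ at $\bx=\bx^t_i$. Your added care about the notational abuse in the claim statement is a welcome clarification but does not change the argument.
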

Using the inequality from Claim~\ref{thm2:claim:lower-bound1} in \eqref{thm2:first-part-interim1} gives
\begin{align}
	F_i(\bx^{t+1}_{i},\bc^t_{i},\bw^{t})\ +\ &(\frac{1}{2\eta_1}-\frac{\lambda_p+L+GL_{Q_1}+G_{Q_1}LL_{Q_1}}{2})\|\bx^{t+1}_{i}-\bx^t_{i}\|^2 \notag \\
	&\stackrel{\text{(a)}}{\leq} f_i(\bx^t_{i})+f_i(\widetilde{Q}_{\bc^t_{i}}(\bx^t_{i}))+\frac{\lambda_p}{2}\|\bx^t_{i}-\bw^{t}\|^2 + A(\bx^{t+1}_{i}) + \left\langle\lambda_p(\bw^{t}_{i}-\bw^{t}), \bx^{t+1}_{i}-\bx^t_{i}\right\rangle \notag \\
	&\stackrel{\text{(b)}}{\leq} f_i(\bx^t_{i})+f_i(\widetilde{Q}_{\bc^t_{i}}(\bx^t_{i}))+\frac{\lambda_p}{2}\|\bx^t_{i}-\bw^{t}\|^2 + R(\bx^t_{i},\bc^t_{i}) + \frac{\lambda_p}{2}\|\bw^{t}_{i}-\bw^{t}\|^2 + \frac{\lambda_p}{2}\|\bx^{t+1}_{i}-\bx^t_{i}\|^2 \notag \\
	&= F_i(\bx^t_{i},\bc^t_{i},\bw^{t}) + \frac{\lambda_p}{2}\|\bw^{t}_{i}-\bw^{t}\|^2+\frac{\lambda_p}{2}\|\bx^{t+1}_{i}-\bx^t_{i}\|^2 \label{thm2:first-part-interim3}
\end{align}
To obtain (a), we substituted the value of $A(\bx^{t+1}_{i})$ from Claim~\ref{thm2:claim:lower-bound1} into \eqref{thm2:first-part-interim1}. In (b), we used $A(\bx^{t+1}_{i})\leq \lambda R(\bx^t_{i},\bc^t_{i})$ and $\langle\lambda_p(\bw^{t}_{i}-\bw^{t}), \bx^{t+1}_{i}-\bx^t_{i}\rangle = \langle\sqrt{\lambda_p}(\bw^{t}_{i}-\bw^{t}), \sqrt{\lambda_p}(\bx^{t+1}_{i}-\bx^t_{i})\rangle \leq \frac{\lambda_p}{2}\|\bw^{t}_{i}-\bw^{t}\|^2 + \frac{\lambda_p}{2}\|\bx^{t+1}_{i}-\bx^t_{i}\|^2$.

Substituting $\eta_1=\frac{1}{2(2\lambda_p+L+GL_{Q_1}+G_{Q_1}LL_{Q_1})}$ in \eqref{thm2:first-part-interim3} gives:

\begin{align} \label{thm2:dec1}
	F_i(\bx^{t+1}_{i},\bc^t_{i},\bw^{t})
	+(\frac{2\lambda_p+L+GL_{Q_1}+G_{Q_1}LL_{Q_1}}{2})\|\bx^{t+1}_{i}-\bx^t_{i}\|^2 \leq F_i(\bx^t_{i},\bc^t_{i},\bw^{t})+ \frac{\lambda_p}{2}\|\bw^{t}_{i}-\bw^{t}\|^2
\end{align}
\subsubsection{Sufficient Decrease Due to $\bc_i$}
In parallel with Claim~\ref{thm2:claim lqxsmoothness}, we have following smoothness result for $\bc$:
\begin{claim} \label{thm2:claim lqcsmoothness}
	$f_i(\widetilde{Q}_\bc(\bx))$ is $(GL_{Q_2}+G_{Q_2}LL_{Q_2})$-smooth with respect to $\bc$.
\end{claim}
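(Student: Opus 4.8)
The plan is to mirror the argument for the centralized analogue, Claim~\ref{claim: lqcclaim}, since the statement here is exactly the per-client version with $f$ replaced by $f_i$ and all constants being those guaranteed by Assumptions \textbf{A.2}--\textbf{A.5} applied to client $i$. Fix $\bx\in\R^d$ and regard $g(\bc):=f_i(\widetilde{Q}_\bc(\bx))$ as a function of $\bc$ only. By the chain rule, $\nabla_\bc g(\bc)=\big(\nabla_\bc\widetilde{Q}_\bc(\bx)\big)^{T}\nabla f_i(\widetilde{Q}_\bc(\bx))$, where $\nabla_\bc\widetilde{Q}_\bc(\bx)$ is the Jacobian of the map $\bc\mapsto\widetilde{Q}_\bc(\bx)$. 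To get $(GL_{Q_2}+G_{Q_2}LL_{Q_2})$-smoothness it suffices to bound $\|\nabla_\bc g(\bc)-\nabla_\bd g(\bd)\|$ by that constant times $\|\bc-\bd\|$ for arbitrary $\bc,\bd\in\R^m$.

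First I would insert a cross term and apply the triangle inequality, using that the operator norm is dominated by the Frobenius norm:
\begin{align*}
\|\nabla_\bc g(\bc)-\nabla_\bd g(\bd)\| &\leq \|\nabla_\bc\widetilde{Q}_\bc(\bx)\|_F\,\|\nabla f_i(\widetilde{Q}_\bc(\bx))-\nabla f_i(\widetilde{Q}_\bd(\bx))\| \\
&\quad + \|\nabla_\bc\widetilde{Q}_\bc(\bx)-\nabla_\bd\widetilde{Q}_\bd(\bx)\|\,\|\nabla f_i(\widetilde{Q}_\bd(\bx))\|.
\end{align*}
For the first summand, \textbf{A.5} gives $\|\nabla_\bc\widetilde{Q}_\bc(\bx)\|_F\le G_{Q_2}$, and combining $L$-smoothness of $f_i$ (\textbf{A.2}) with the $l_{Q_2}$-Lipschitzness of $\widetilde{Q}_\bc(\bx)$ in $\bc$ (\textbf{A.4}) gives $\|\nabla f_i(\widetilde{Q}_\bc(\bx))-\nabla f_i(\widetilde{Q}_\bd(\bx))\|\le L\,l_{Q_2}\|\bc-\bd\|$. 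For the second summand, the $L_{Q_2}$-smoothness of $\widetilde{Q}_\bc(\bx)$ in $\bc$ (\textbf{A.4}) gives $\|\nabla_\bc\widetilde{Q}_\bc(\bx)-\nabla_\bd\widetilde{Q}_\bd(\bx)\|\le L_{Q_2}\|\bc-\bd\|$, and \textbf{A.3} gives $\|\nabla f_i(\widetilde{Q}_\bd(\bx))\|\le G$. Adding the two bounds yields $\|\nabla_\bc g(\bc)-\nabla_\bd g(\bd)\|\le(G_{Q_2}L\,l_{Q_2}+GL_{Q_2})\|\bc-\bd\|$, which is the constant claimed (with $l_{Q_2}$ the Lipschitz factor, as in Claim~\ref{claim: lqcclaim}).

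I do not expect a genuine obstacle here; this is the direct counterpart of the $\bx$-case in Claim~\ref{thm2:claim lqxsmoothness}, minus the $\frac{\lambda_p}{2}\|\bx-\bw\|^2$ term (which has no $\bc$-dependence). The only points requiring care are (i) keeping the chain-rule Jacobian orientation and the norm inequality $\|A^{T}v\|\le\|A\|_{\mathrm{op}}\|v\|\le\|A\|_F\|v\|$ straight, and (ii) making sure the add-subtract step pairs the change in the Jacobian with a bounded gradient and the change in the gradient with a bounded Jacobian, so that every factor is controlled by exactly one of \textbf{A.2}--\textbf{A.5}. Since $\bx$ is held fixed throughout, the $\bx$-dependence of the constants in \textbf{A.4}--\textbf{A.5} plays no role. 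The full calculation is essentially identical to that for Claim~\ref{claim: lqcclaim} and would be deferred to Appendix~\ref{appendix:proof of theorem 2}.
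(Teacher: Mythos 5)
Your proposal is correct and matches the paper's argument: the paper proves this claim by simply invoking its Claim~\ref{claim: lqcclaim} for each client, and that claim is established exactly by your add--subtract decomposition with the Frobenius-norm bound, \textbf{A.2}--\textbf{A.5} controlling each factor (the only cosmetic difference being which cross term is inserted, which yields the same constant). Your derived constant $GL_{Q_2}+G_{Q_2}Ll_{Q_2}$ is the one actually proved in the paper; the capital $L_{Q_2}$ in the claim statement's second term is a notational inconsistency on the paper's side.
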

From Claim \ref{thm2:claim lqcsmoothness} we have:
\begin{align}
	&F_i(\bx^{t+1}_{i},\bc^{t+1}_{i},\bw^{t})+(\frac{1}{2\eta_2}-\frac{GL_{Q_2}+G_{Q_2}LL_{Q_2}}{2})\|\bc^{t+1}_{i}-\bc^t_{i}\|^2 = f_i(\bx^{t+1}_{i})+f_i(\widetilde{Q}_{\bc^{t+1}_{i}}(\bx^{t+1}_{i}))+\lambda R(\bx^{t+1}_{i},\bc^{t+1}_{i}) \notag \\
	&\hspace{7cm}+\frac{\lambda_p}{2}\|\bx^{t+1}_{i}-\bw^{t}\|^2 + (\frac{1}{2\eta_2}-\frac{GL_{Q_2}+G_{Q_2}LL_{Q_2}}{2})\|\bc^{t+1}_{i}-\bc^t_{i}\|^2 \notag \\ 
	&\hspace{3cm}\leq f_i(\bx^{t+1}_{i})+f_i(\widetilde{Q}_{\bc^t_{i}}(\bx^{t+1}_{i}))+\lambda R(\bx^{t+1}_{i},\bc^{t+1}_{i})+\frac{\lambda_p}{2}\|\bx^{t+1}_{i}-\bw^{t}\|^2 \notag \\
	&\hspace{7cm}+ \left\langle \nabla_{\bc^t_{i}} f_i(\widetilde{Q}_{\bc^t_{i}}(\bx^{t+1}_{i})), \bc^{t+1}_{i}-\bc^t_{i}\right\rangle + \frac{1}{2\eta_2}\|\bc^{t+1}_{i}-\bc^t_{i}\|^2 \label{thm2:second-part-interim1}
\end{align}

\begin{claim} \label{thm2:claim:lower-bound2}
	Let 
	\begin{align*}
		B(\bc^{t+1}_{i}) &:= \lambda R(\bx^{t+1}_{i},\bc^{t+1}_{i}) + \left\langle \nabla_{\bc^t_{i}} f_i(\widetilde{Q}_{\bc^t_{i}}(\bx^{t+1}_{i})), \bc^{t+1}_{i}-\bc^t_{i}\right\rangle \notag  + \frac{1}{2\eta_2}\|\bc^{t+1}_{i}-\bc^t_{i}\|^2 \notag \\
		B(\bc^t_{i}) &:= \lambda R(\bx^{t+1}_{i},\bc^t_{i}).
	\end{align*} 
	Then $B(\bc^{t+1}_{i})\leq B(\bc^t_{i})$.
\end{claim}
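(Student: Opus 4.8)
The plan is to read the claimed inequality straight off the optimization problem that defines $\bc^{t+1}_{i}$, exactly as was done for the centralized Claim~\ref{claim:quantization lower bound 2} and its $\bx_i$-analogue Claim~\ref{thm2:claim:lower-bound1}. The starting point is \eqref{thm2:lower-bounding-interim2}, which states that $\bc^{t+1}_{i}$ is by definition a minimizer over $\bc\in\R^{m_i}$ of
\begin{align*}
\Phi(\bc) := \left\langle \bc-\bc^t_{i}, \nabla_{\bc^t_{i}} f_i(\widetilde{Q}_{\bc^t_{i}}(\bx^{t+1}_{i}))\right\rangle + \frac{1}{2\eta_2}\big\|\bc-\bc^t_{i}\big\|^2 + \lambda R(\bx^{t+1}_{i},\bc).
\end{align*}
I would first recall (deferring the one-line expansion of the proximal map to the appendix, as the paper does elsewhere) that line~7 of Algorithm~\ref{algo:personalized} is precisely this minimization, so the optimality of $\bc^{t+1}_{i}$ gives $\Phi(\bc^{t+1}_{i}) \le \Phi(\bc)$ for every $\bc\in\R^{m_i}$.

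Next I would simply evaluate $\Phi$ at the two relevant points. Evaluating at $\bc^{t+1}_{i}$ reproduces verbatim the three summands in the definition of $B(\bc^{t+1}_{i})$, so $\Phi(\bc^{t+1}_{i}) = B(\bc^{t+1}_{i})$. Evaluating at $\bc^{t}_{i}$ annihilates the first two summands, since $\bc^{t}_{i}-\bc^{t}_{i}=\bzero$ and $\|\bc^{t}_{i}-\bc^{t}_{i}\|^2 = 0$, leaving $\Phi(\bc^{t}_{i}) = \lambda R(\bx^{t+1}_{i},\bc^{t}_{i}) = B(\bc^{t}_{i})$. Taking $\bc = \bc^{t}_{i}$ in the optimality inequality $\Phi(\bc^{t+1}_{i}) \le \Phi(\bc)$ then gives $B(\bc^{t+1}_{i}) \le B(\bc^{t}_{i})$, which is the claim.

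There is no real obstacle here; it is a bookkeeping step. The only points worth a sentence of care are that $R(\bx^{t+1}_{i},\cdot)$ should be proper and lower semicontinuous so that the $\arg\min$ in \eqref{thm2:lower-bounding-interim2} is attained (which holds for the $\ell_1$-type regularizer the paper uses), and that the optimality inequality is invoked only at the single feasible competitor $\bc=\bc^{t}_{i}$, so uniqueness of the minimizer is never needed. After this, the claim will be substituted into \eqref{thm2:second-part-interim1}, together with the choice $\eta_2 = \nicefrac{1}{2(GL_{Q_2}+G_{Q_2}Ll_{Q_2})}$, to extract the sufficient-decrease inequality for $\bc_i$ in the same way \eqref{thm1:sufficient decrease 2} was obtained in the centralized proof.
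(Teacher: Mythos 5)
Your argument is exactly the paper's proof: identify the expression inside the $\arg\min$ in \eqref{thm2:lower-bounding-interim2} with $B(\cdot)$, note that it is minimized at $\bc^{t+1}_{i}$, and evaluate at the competitor $\bc=\bc^{t}_{i}$, where the inner-product and quadratic terms vanish. The proposal is correct and adds only harmless extra care (properness and lower semicontinuity of $R(\bx^{t+1}_{i},\cdot)$ for attainment of the minimizer), so nothing further is needed.
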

Substituting the bound from Claim~\ref{thm2:claim:lower-bound2} in \eqref{thm2:second-part-interim1}, 

\begin{align*}
	F_i(\bx^{t+1}_{i},\bc^{t+1}_{i},\bw^{t})+(\frac{1}{2\eta_2}-\frac{GL_{Q_2}+G_{Q_2}LL_{Q_2}}{2})\|\bc^{t+1}_{i}-\bc^t_{i}\|^2 & \leq f_i(\bx^{t+1}_{i})+f_i(\widetilde{Q}_{\bc^t_{i}}(\bx^{t+1}_{i}))+\lambda R(\bx^{t+1}_{i},\bc^t_{i}) \notag \\
	&\hspace{3cm}+\frac{\lambda_p}{2}\|\bx^{t+1}_{i}-\bw^{t}\|^2 \\
	& = F_i(\bx^{t+1}_i,\bc^{t}_i,\bw^t)
\end{align*}

Substituting $\eta_2 = \frac{1}{2(GL_{Q_2}+G_{Q_2}LL_{Q_2})}$ gives us: 
\begin{align} \label{thm2:dec2}
	F_i(\bx^{t+1}_{i},\bc^{t+1}_{i},\bw^{t})+\frac{GL_{Q_2}+G_{Q_2}LL_{Q_2}}{2}\|\bc^{t+1}_{i}-\bc^t_{i}\|^2  \leq	F_i(\bx^{t+1}_{i},\bc^t_{i},\bw^{t})
\end{align}

\subsubsection{Sufficient Decrease Due to $\bw$}
Now, we use $\lambda_p$-smoothness of $F_i(\bx,\bc,\bw)$ with respect to $\bw$:
\begin{align*}
	F_i(\bx^{t+1}_{i},\bc^{t+1}_{i},\bw^{t+1}) &\leq 	F_i(\bx^{t+1}_{i},\bc^{t+1}_{i},\bw^{t}) + \left\langle \nabla_{\bw^{t}} F_i(\bx^{t+1}_{i},\bc^{t+1}_{i},\bw^{t}), \bw^{t+1}-\bw^{t} \right\rangle + \frac{\lambda_p}{2}\|\bw^{t+1}-\bw^{t}\|^2 
\end{align*}
After some algebraic manipulations (see Appendix~\ref{appendix:proof of theorem 2}) we have,
\begin{align} \label{thm2:algebra for decrease of w}
		F_i(\bx^{t+1}_{i},\bc^{t+1}_{i},\bw^{t+1}) &\leq F_i(\bx^{t+1}_{i},\bc^{t+1}_{i},\bw^{t}) - (\frac{\eta_3}{2}-\lambda_p\eta_3^2)\Big\|\nabla_{\bw^{t}} F_i(\bx^{t+1}_{i},\bc^{t+1}_{i},\bw^{t})\Big\|^2 \notag \\ 
		& \quad + (\eta_3+2\lambda_p\eta_3^2)\Big\|\bg^{t} - \nabla_{\bw^{t}_{i}} F_i(\bx^{t+1}_{i},\bc^{t+1}_{i},\bw^{t}_{i})\Big\|^2 + (\eta_3+2\lambda_p\eta_3^2)\lambda_p^2\Big\|\bw^{t}_{i}-\bw^{t}\Big\|^2
\end{align}

Rearranging the terms, we have:
\begin{align} \label{thm2:dec3}
	&F_i(\bx^{t+1}_{i},\bc^{t+1}_{i},\bw^{t+1}) + (\frac{\eta_3}{2}-\lambda_p\eta_3^2)\Big\|\nabla_{\bw^{t}} F_i(\bx^{t+1}_{i},\bc^{t+1}_{i},\bw^{t})\Big\|^2 \leq
	 (\eta_3+2\lambda_p\eta_3^2)\Big\|\bg^{t} - \nabla_{\bw^{t}_{i}} F_i(\bx^{t+1}_{i},\bc^{t+1}_{i},\bw^{t}_{i})\Big\|^2 \notag \\
	 &\hspace{7cm}+ (\eta_3+2\lambda_p\eta_3^2)\lambda_p^2\Big\|\bw^{t}_{i}-\bw^{t}\Big\|^2 + F_i(\bx^{t+1}_{i},\bc^{t+1}_{i},\bw^{t}) 
\end{align}

\subsubsection{Overall Decrease} 
Define $L_{x}=2\lambda_p+L+GL_{Q_1}+G_{Q_1}LL_{Q_1}$ and $L_{c} = GL_{Q_2}+G_{Q_2}LL_{Q_2}$. Summing \eqref{thm2:dec1}, \eqref{thm2:dec2}, \eqref{thm2:dec3} we get the overall decrease property:

\begin{align} \label{thm2:dec4}
	F_i(\bx^{t+1}_{i},\bc^{t+1}_{i},\bw^{t+1}) + (\frac{\eta_3}{2}-\lambda_p\eta_3^2)\Big\|\nabla_{\bw^{t}} F_i(\bx^{t+1}_{i},\bc^{t+1}_{i},\bw^{t})\Big\|^2 \ + \frac{L_{x}}{2}\|\bx^{t+1}_{i}-\bx^t_{i}\|^2 + \frac{L_{c}}{2}\|\bc^{t+1}_{i}-\bc^t_{i}\|^2 \notag \\ \leq   (\eta_3+2\lambda_p\eta_3^2)\Big\|\bg^{t} - \nabla_{\bw^{t}_{i}} F_i(\bx^{t+1}_{i},\bc^{t+1}_{i},\bw^{t}_{i})\Big\|^2 + (\frac{1}{2}+\eta_3\lambda_p+2\lambda_p^2\eta_3^2)\lambda_p\|\bw^{t}_{i}-\bw^{t}\|^2 + F_i(\bx^t_{i},\bc^t_{i},\bw^{t})  
\end{align}
Let $L_{\min} = \min\{L_{x},L_{c},(\eta_3-2\lambda_p\eta_3^2)\} $. Then,
\begin{align} \label{thm2:dec5}
	&F_i(\bx^{t+1}_{i},\bc^{t+1}_{i},\bw^{t+1}) + \frac{L_{\min}}{2}\left(\Big\|\nabla_{\bw^{t}} F_i(\bx^{t+1}_{i},\bc^{t+1}_{i},\bw^{t})\Big\|^2 \ +\|\bx^{t+1}_{i}-\bx^t_{i}\|^2 +\|\bc^{t+1}_{i}-\bc^t_{i}\|^2\right) \notag \\
	&\leq (\eta_3+2\lambda_p\eta_3^2)\Big\|\bg^{t} - \nabla_{\bw^{t}_{i}} F_i(\bx^{t+1}_{i},\bc^{t+1}_{i},\bw^{t}_{i})\Big\|^2 + (\frac{1}{2}+\eta_3\lambda_p+2\lambda_p^2\eta_3^2)\lambda_p\|\bw^{t}_{i}-\bw^{t}\|^2 + F_i(\bx^t_{i},\bc^t_{i},\bw^{t})
\end{align}
\subsection{Bound on the Gradient}
Now, we will use the first order optimality conditions due to proximal updates and bound the partial gradients with respect to variables $\bx$ and $\bc$. After obtaining bounds for partial gradients we will bound the overall gradient and use our results from Section~\ref{thm2:subsection sufficient decrease} to arrive at the final bound.

\subsubsection{Bound on the Gradient w.r.t.\ $\bx_i$}
 Taking the derivative inside the minimization problem \eqref{thm2:lower-bounding-interim1} with respect to $\bx$ at $\bx=\bx^{t+1}_{i}$ and setting it to $0$ gives the following optimality condition:

\begin{align*}
	\nabla f_i(\bx^t_{i})+\nabla_{\bx^t_{i}} f_i(\widetilde{Q}_{\bc^t_{i}}(\bx^t_{i})) + \lambda_p(\bx^t_{i}-\bw^{t}_{i})+\frac{1}{\eta_1}(\bx^{t+1}_{i}-\bx^t_{i})+\lambda\nabla_{\bx^{t+1}_{i}} R(\bx^{t+1}_{i},\bc^t_{i}) = 0  \tag{$\star3$} \label{thm2:popt1}
\end{align*}

Then we have ,
\begin{align*}
	\Big\|\nabla_{\bx^{t+1}_{i}} F_i(\bx^{t+1}_{i},\bc^t_{i},\bw^{t})\Big\| &= \Big\|\nabla f_i(\bx^{t+1}_{i}) + \nabla_{\bx^{t+1}_{i}} f_i(\widetilde{Q}_{\bc^t_{i}}(\bx^{t+1}_{i}))+\lambda_p(\bx^{t+1}_{i}-\bw^{t}) + \lambda \nabla_{\bx^{t+1}_{i}} R(\bx^{t+1}_{i},\bc^t_{i})\Big\| \\
	&\stackrel{\text{(a)}}{=} \Big\|\nabla f_i(\bx^{t+1}_{i})-\nabla f_i(\bx^t_{i}) + \nabla_{\bx^{t+1}_{i}} f_i(\widetilde{Q}_{\bc^t_{i}}(\bx^{t+1}_{i}))-\nabla_{\bx^t_{i}} f_i(\widetilde{Q}_{\bc^t_{i}}(\bx^t_{i}))\\  
	&\hspace{5cm} +(\lambda_p-\frac{1}{\eta_1})(\bx^{t+1}_{i}-\bx^t_{i}) + \lambda_p(\bw^{t}_{i}-\bw^{t})\Big\| \\
	& \leq (\frac{1}{\eta_1} + \lambda_p+L+GL_{Q_1}+G_{Q_1}LL_{Q_1})\|\bx^{t+1}_{i}-\bx^t_{i}\| +\lambda_p\|\bw^{t}_{i}-\bw^{t}\|\\
\end{align*}
where (a) is from \eqref{thm2:popt1} and the last inequality is due to Lipschitz continuous gradients and triangle inequality. This implies: \\
\begin{align*}
	\Big\|\nabla_{\bx^{t+1}_{i}} F_i(\bx^{t+1}_{i},\bc^t_{i},\bw^{t})\Big\|^2 &\leq 2(\frac{1}{\eta_1} + L + \lambda_p+L+GL_{Q_1}+G_{Q_1}LL_{Q_1})^2\|\bx^{t+1}_{i}-\bx^t_{i}\|^2 +2\lambda_p^2\|\bw^{t}_{i}-\bw^{t}\|^2
\end{align*}
Substituting $\eta_1=\frac{1}{2(2\lambda_p+L+GL_{Q_1}+G_{Q_1}LL_{Q_1})}$ we have:
\begin{align} \label{thm2:partial grad1}
	\Big\|\nabla_{\bx^{t+1}_{i}} F_i(\bx^{t+1}_{i},\bc^t_{i},\bw^{t})\Big\|^2 &\leq 2(5\lambda_p+ 3(L+GL_{Q_1}+G_{Q_1}LL_{Q_1}))^2\|\bx^{t+1}_{i}-\bx^t_{i}\|^2 +2\lambda_p^2\|\bw^{t}_{i}-\bw^{t}\|^2 \\
	& = 18(\frac{5}{3}\lambda_p+L+GL_{Q_1}+G_{Q_1}LL_{Q_1})^2\|\bx^{t+1}_{i}-\bx^t_{i}\|^2 +2\lambda_p^2\|\bw^{t}_{i}-\bw^{t}\|^2
\end{align}
\subsubsection{Bound on the Gradient w.r.t.\ $\bc_i$}
Similarly. taking the derivative inside the minimization problem \eqref{thm2:lower-bounding-interim2} with respect to $\bc$ at $\bc^{t+1}_{i}$ and setting it to $0$ gives the following optimality condition:
\begin{align*}
	\nabla_{\bc^t_{i}} f_i(\widetilde{Q}_{\bc^t_{i}}(\bx^{t+1}_{i})) +\frac{1}{\eta_2}(\bc^{t+1}_{i}-\bc^t_{i})+\lambda\nabla_{\bc^{t+1}_{i}} R(\bx^{t+1}_{i},\bc^{t+1}_{i}) = 0 \tag{$\star 4$} \label{thm2:popt2}
\end{align*}
Then we have 
\begin{align*}
	\Big\|\nabla_{\bc^{t+1}_{i}} F_i(\bx^{t+1}_{i},\bc^{t+1}_{i},\bw^{t})\Big\| &= \Big\| \nabla_{\bc^{t+1}_{i}} f_i(\widetilde{Q}_{\bc^{t+1}_{i}}(\bx^{t+1}_{i})) + \lambda \nabla_{\bc^{t+1}_{i}} R(\bx^{t+1}_{i},\bc^{t+1}_{i})\Big\|
	\\ &= \Big\| \nabla_{\bc^{t+1}_{i}} f_i(\widetilde{Q}_{\bc^{t+1}_{i}}(\bx^{t+1}_{i})) - \nabla_{\bc^t_{i}} f_i(\widetilde{Q}_{\bc^t_{i}}(\bx^{t+1}_{i})) + \frac{1}{\eta_2}(\bc^t_{i}-\bc^{t+1}_{i})\Big\| \\
	& \leq (\frac{1}{\eta_2} +GL_{Q_2}+G_{Q_2}LL_{Q_2})\|\bc^{t+1}_{i}-\bc^t_{i}\| \\
\end{align*}

the first equality is due to \eqref{thm2:popt2} and the last inequality is due to Lipschitz continuous gradient and triangle inequality. As a result we have, 
\begin{align} \label{thm2:partial grad2}
	\Big\|\nabla_{\bc^{t+1}_{i}} F_i(\bx^{t+1}_{i},\bc^{t+1}_{i},\bw^{t})\Big\|^2 &\leq 2(\frac{1}{\eta_2} + GL_{Q_2}+G_{Q_2}LL_{Q_2})^2\|\bc^{t+1}_{i}-\bc^t_{i}\|^2 
\end{align}

substituting $\eta_2 = \frac{1}{2(GL_{Q_2}+G_{Q_2}LL_{Q_2})}$ we have:
\begin{align*}
	\Big\|\nabla_{\bc^{t+1}_{i}} F_i(\bx^{t+1}_{i},\bc^{t+1}_{i},\bw^{t})\Big\|^2 &\leq 18( GL_{Q_2}+G_{Q_2}LL_{Q_2})^2\|\bc^{t+1}_{i}-\bc^t_{i}\|^2 
\end{align*}
\subsubsection{Overall Bound}
Then, let us write $\|\bG^{t}_{i}\|^2:$
\begin{align*}
	\|\bG^{t}_{i}\|^2 & = \Big\|[\nabla_{\bx^{t+1}_{i}} F_i(\bx^{t+1}_{i},\bc^t_{i},\bw^{t})^T,\nabla_{\bc^{t+1}_{i}} F_i(\bx^{t+1}_{i},\bc^{t+1}_{i},\bw^{t})^T , \nabla_{\bw^{t}} F_i(\bx^{t+1}_{i},\bc^{t+1}_{i},\bw^{t})^T]^T\Big\|^2 \\
	& = \Big\|\nabla_{\bx^{t+1}_{i}} F_i(\bx^{t+1}_{i},\bc^t_{i},\bw^{t})\Big\|^2 + \Big\|\nabla_{\bc^{t+1}_{i}} F_i(\bx^{t+1}_{i},\bc^{t+1}_{i},\bw^{t})\Big\|^2 + \Big\|\nabla_{\bw^{t}} F_i(\bx^{t+1}_{i},\bc^{t+1}_{i},\bw^{t})\Big\|^2 \\
	& \leq 18(\frac{5}{3}\lambda_p+L+GL_{Q_1}+G_{Q_1}LL_{Q_1})^2\|\bx^{t+1}_{i}-\bx^t_{i}\|^2 +2\lambda_p^2\|\bw^{t}_{i}-\bw^{t}\|^2 \\ 
	&\hspace{4cm} + 18( GL_{Q_2}+G_{Q_2}LL_{Q_2})^2\|\bc^{t+1}_{i}-\bc^t_{i}\|^2 +\Big\| \nabla_{\bw^{t}} F_i(\bx^{t+1}_{i},\bc^{t+1}_{i},\bw^{t})\Big\|^2
\end{align*}

where the last inequality is due to \eqref{thm2:partial grad1} and \eqref{thm2:partial grad2}. Let $L_{\max} = \max\{1,GL_{Q_2}+G_{Q_2}LL_{Q_2},\frac{5}{3}\lambda_p+L+GL_{Q_1}+G_{Q_1}LL_{Q_1}\}$; then,
\begin{align} \label{thm2:bound over gradient}
	&\Big\|[\nabla_{\bx^{t+1}_{i}} F_i(\bx^{t+1}_{i},\bc^t_{i},\bw^{t})^T,\nabla_{\bc^{t+1}_{i}} F_i(\bx^{t+1}_{i},\bc^{t+1}_{i},\bw^{t})^T , \nabla_{\bw^{t}} F_i(\bx^{t+1}_{i},\bc^{t+1}_{i},\bw^{t})^T]^T\Big\|^2 \notag\\
	& \leq 18L_{\max}^2(\|\bx^{t+1}_{i}-\bx^t_{i}\|^2+\|\bc^{t+1}_{i}-\bc^t_{i}\|^2+\Big\|\nabla_{\bw^{t}} F_i(\bx^{t+1}_{i},\bc^{t+1}_{i},\bw^{t})\Big\|^2)+2\lambda_p^2\|\bw^{t}_{i}-\bw^{t}\|^2 \notag\\
	&\stackrel{\text{(a)}}{\leq}  36\frac{L_{\max}^2}{L_{\min}} \Big[(\eta_3+2\lambda_p\eta_3^2)\Big\|\bg^{t} - \nabla_{\bw^{t}_{i}} F_i(\bx^{t+1}_{i},\bc^{t+1}_{i},\bw^{t}_{i})\Big\|^2 + (\frac{1}{2}+\eta_3\lambda_p+2\lambda_p^2\eta_3^2)\lambda_p\|\bw^{t}_{i}-\bw^{t}\|^2 + F_i(\bx^t_{i},\bc^t_{i},\bw^{t}) \notag\\ 
	& \hspace{7cm} - F_i(\bx^{t+1}_{i},\bc^{t+1}_{i},\bw^{t+1}) \Big]+2\lambda_p^2\|\bw^{t}_{i}-\bw^{t}\|^2 
\end{align}
in (a) we use the bound from \eqref{thm2:dec5}. 

Now we state an useful lemma that enables us to relate local version of the global model , $\bw_i$, to global model itself , $\bw$.
\begin{lemma}\label{thm2:lemma1}
Let $\eta_3$ be chosen such that $\eta_3 \leq \sqrt{\frac{1}{12\tau^2\lambda_p^2}}$, then we have,
	\begin{align*} 
		\frac{1}{T}\sum_{t=0}^{T-1}\frac{1}{n}\sum_{i=1}^{n} \|\bw^{t}-\bw^{t}_{i}\| \leq 6 \tau^2\eta_3^2\kappa. 
	\end{align*}
\end{lemma}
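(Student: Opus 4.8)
The plan is a standard "client‑drift" argument, exploiting that every $\tau$ steps the server averages the local copies $\bw_i^t$ and rebroadcasts, so the deviation $\bw_i^t-\bw^t$ is reset to $0$ at each synchronization step. Note that $\tau\mid 0$, so the first block is also bounded, and I will assume the common initialization $\bw_i^0=\bw^0$. First I would fix $t$ and let $t_0$ be the largest multiple of $\tau$ with $t_0<t$. After the sync at $t_0$ every client has $\bw_i^{t_0+1}=\bw^{t_0}$, hence also $\bw^{t_0+1}=\bw^{t_0}$; unrolling the $\bw$‑update $\bw_i^{s+1}=\bw_i^s-\eta_3\bg_i^s$ (with $\bg_i^s:=\nabla_{\bw_i^s}F_i(\bx_i^{s+1},\bc_i^{s+1},\bw_i^s)$, so that $\bg^s=\tfrac1n\sum_i\bg_i^s$ as in the proof preamble) over $s=t_0+1,\dots,t-1$ gives
\[
\bw_i^t-\bw^t \;=\; -\eta_3\sum_{s=t_0+1}^{t-1}\Big(\bg_i^s-\frac1n\sum_{j=1}^{n}\bg_j^s\Big),
\]
a sum of at most $\tau$ terms, so by Jensen $\|\bw_i^t-\bw^t\|^2\le \eta_3^2\tau\sum_{s=t_0+1}^{t-1}\big\|\bg_i^s-\tfrac1n\sum_j\bg_j^s\big\|^2$.

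The next step, and the main obstacle, is bounding $\|\bg_i^s-\tfrac1n\sum_j\bg_j^s\|$: each $\bg_i^s$ is evaluated at the \emph{drifted} point $\bw_i^s$, whereas Assumption~{\bf A.6} controls the diversity only when all partial gradients are taken at the common point $\bw^s$. I would insert $\nabla_{\bw^s}F_i(\bx_i^{s+1},\bc_i^{s+1},\bw^s)$ and its average over $i$, and use the $\lambda_p$‑smoothness of $F_i$ in $\bw$ from \eqref{lambda_p} to pay $\lambda_p\|\bw_i^s-\bw^s\|$ (respectively $\tfrac{\lambda_p}{n}\sum_j\|\bw_j^s-\bw^s\|$) for moving the evaluation point, while the middle term is bounded by $\sqrt{\kappa_i}$ via {\bf A.6}. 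Squaring with $(a+b+c)^2\le 3(a^2+b^2+c^2)$, averaging over $i$, applying Jensen to the averaged drift term, and writing $D^s:=\tfrac1n\sum_i\|\bw_i^s-\bw^s\|^2$, $\kappa:=\tfrac1n\sum_i\kappa_i$, yields $\tfrac1n\sum_i\|\bg_i^s-\tfrac1n\sum_j\bg_j^s\|^2\le 3\kappa+6\lambda_p^2 D^s$, hence the recursion
\[
D^t \;\le\; 3\eta_3^2\tau\sum_{s=t_0+1}^{t-1}\big(\kappa+2\lambda_p^2 D^s\big).
\]
This recursive (rather than one‑shot) form is the price of the evaluation‑point mismatch; it is exactly what the step‑size hypothesis will absorb.

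Finally I would sum the recursion over $t=0,\dots,T-1$. A counting observation is that each index $s$ lies in the summation window $\{t_0(t)+1,\dots,t-1\}$ of at most $\tau$ values of $t$ (those $t>s$ in the same synchronization block), so $\sum_{t}\sum_{s}(\kappa+2\lambda_p^2D^s)\le \tau\big(T\kappa+2\lambda_p^2\sum_{s}D^s\big)$, which gives
\[
\sum_{t=0}^{T-1}D^t \;\le\; 3\eta_3^2\tau^2 T\kappa \;+\; 6\eta_3^2\tau^2\lambda_p^2\sum_{t=0}^{T-1}D^t.
\]
Since $\eta_3\le\sqrt{1/(12\tau^2\lambda_p^2)}$ makes $6\eta_3^2\tau^2\lambda_p^2\le\tfrac12$, the last term is absorbed on the left, leaving $\tfrac1T\sum_{t=0}^{T-1}D^t\le 6\tau^2\eta_3^2\kappa$, i.e.\ $\tfrac1T\sum_{t=0}^{T-1}\tfrac1n\sum_{i=1}^{n}\|\bw^t-\bw_i^t\|^2\le 6\tau^2\eta_3^2\kappa$, which is the claimed bound (the norm in the statement being the squared norm). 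Apart from the two delicate points highlighted — the $\lambda_p$‑smoothness correction that closes the recursion, and the "each $s$ appears in $\le\tau$ windows" counting — the rest is routine Cauchy–Schwarz/Jensen bookkeeping.
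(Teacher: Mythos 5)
Your proposal is correct and follows essentially the same route as the paper's proof: unroll the local updates from the last synchronization point, split each deviation $\bg_i^s-\frac{1}{n}\sum_j\bg_j^s$ into two $\lambda_p$-smoothness terms plus a diversity term controlled by Assumption {\bf A.6}, obtain the recursion in the drift $D^t$ (the paper's $\gamma_t$), and absorb the self-referential term using the step-size condition $6\tau^2\eta_3^2\lambda_p^2\le\frac12$. Your observation that the bound is really on the averaged \emph{squared} norms is also consistent with what the paper actually proves and uses downstream.
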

As a corollary:
\begin{corollary}\label{thm2:corollary diversity}
	Recall, $\bg^{t} = \frac{1}{n} \sum_{i=1}^n \nabla_{\bw^{t}} F_i(\bx^{t+1}_{i},\bc^{t+1}_{i},\bw^{t}_{i})$. Then, we have:
	\begin{align*}
		\frac{1}{T}\sum_{t=0}^{T-1} \frac{1}{n} \sum_{i=1}^n \Big\|\bg^{t} - \nabla_{\bw^{t}_{i}} F_i(\bx^{t+1}_{i},\bc^{t+1}_{i},\bw^{t}_{i})\Big\|^2 \leq 36\lambda_p^2\tau^2\eta_3^2\kappa+3\kappa.
	\end{align*}
\end{corollary}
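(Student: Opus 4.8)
The plan is to split the per-client error $\bg^{t}-\nabla_{\bw^{t}_{i}}F_i(\bx^{t+1}_{i},\bc^{t+1}_{i},\bw^{t}_{i})$ into a piece governed by the \emph{local-model drift} $\bw^{t}_{i}-\bw^{t}$ and a piece governed by the \emph{gradient diversity} of Assumption~\textbf{A.6}, then control the first by Lemma~\ref{thm2:lemma1} and the second by \textbf{A.6} directly. For brevity set $G_i^{t}:=\nabla_{\bw^{t}_{i}}F_i(\bx^{t+1}_{i},\bc^{t+1}_{i},\bw^{t}_{i})$ and $\bar G_i^{t}:=\nabla_{\bw^{t}}F_i(\bx^{t+1}_{i},\bc^{t+1}_{i},\bw^{t})$, so that $\bg^{t}=\tfrac1n\sum_{j}G_j^{t}$; writing $\bar\bg^{t}:=\tfrac1n\sum_{j}\bar G_j^{t}$, recall from \eqref{lambda_p} that $\nabla_{\bw}F_i(\bx,\bc,\bw)=\lambda_p(\bw-\bx)$, whence the exact identity $\bar G_i^{t}-G_i^{t}=\lambda_p(\bw^{t}-\bw^{t}_{i})$, while $\|\bar\bg^{t}-\bar G_i^{t}\|^2\le\kappa_i$ is precisely \textbf{A.6}.

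First I would write $\bg^{t}-G_i^{t}=(\bg^{t}-\bar\bg^{t})+(\bar\bg^{t}-\bar G_i^{t})+(\bar G_i^{t}-G_i^{t})$ and apply $\|a+b+c\|^2\le 3\|a\|^2+3\|b\|^2+3\|c\|^2$. The middle term is $\le 3\kappa_i$ by \textbf{A.6}; the last equals $3\lambda_p^2\|\bw^{t}-\bw^{t}_{i}\|^2$ by the identity above; and the first, $\bg^{t}-\bar\bg^{t}=\tfrac{\lambda_p}{n}\sum_{j}(\bw^{t}_{j}-\bw^{t})$ (in fact $\bzero$ under the convention $\bw^{t}=\tfrac1n\sum_j\bw^{t}_j$ fixed at the start of the proof), is bounded by Jensen by $\tfrac{3\lambda_p^2}{n}\sum_{j}\|\bw^{t}_{j}-\bw^{t}\|^2$. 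Averaging over $i\in[n]$ and using $\kappa=\tfrac1n\sum_i\kappa_i$ gives
\begin{align*}
\frac1n\sum_{i=1}^n\big\|\bg^{t}-G_i^{t}\big\|^2 \;\le\; 3\kappa + 6\lambda_p^2\cdot\frac1n\sum_{i=1}^n\big\|\bw^{t}_{i}-\bw^{t}\big\|^2 .
\end{align*}

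It then remains to average over $t=0,\dots,T-1$ and invoke Lemma~\ref{thm2:lemma1}, which bounds the averaged drift $\tfrac1T\sum_{t}\tfrac1n\sum_{i}\|\bw^{t}_{i}-\bw^{t}\|^2$ by $6\tau^2\eta_3^2\kappa$; its hypothesis $\eta_3\le\sqrt{1/(12\tau^2\lambda_p^2)}$ holds here since $\eta_3=1/(4\lambda_p\sqrt T)$ and $\tau\le\sqrt T$ give $\eta_3^2=1/(16\lambda_p^2 T)\le 1/(12\tau^2\lambda_p^2)$. Substituting yields $\tfrac1T\sum_t\tfrac1n\sum_i\|\bg^{t}-G_i^{t}\|^2\le 3\kappa+36\lambda_p^2\tau^2\eta_3^2\kappa$, which is the claim. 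No step is genuinely hard; the only points needing care are keeping the base point of each gradient straight (\textbf{A.6} is stated at the average $\bw^{t}$, whereas the update uses the iterate $\bw^{t}_{i}$, and the discrepancy is exactly the $\lambda_p(\bw^{t}-\bw^{t}_{i})$ term), and checking the step-size condition under which Lemma~\ref{thm2:lemma1} applies.
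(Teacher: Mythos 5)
Your proof is correct and follows essentially the same route as the paper: the same three-term split into the average of the local drifts, the diversity term controlled by \textbf{A.6} at the common point $\bw^{t}$, and the single-client drift $\lambda_p(\bw^{t}-\bw^{t}_{i})$, followed by Lemma~\ref{thm2:lemma1}; the paper merely extracts this decomposition from the chain \eqref{app:lm1star1}$\le$\eqref{app:lm1star2} inside the proof of that lemma rather than re-deriving it. Your observation that the first term vanishes under the convention $\bw^{t}=\tfrac1n\sum_j\bw^{t}_j$ is a valid (unused) sharpening, and your verification of the step-size hypothesis is correct.
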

See appendix for the proofs. Using Lemma~\ref{thm2:lemma1} and Corollary~\ref{thm2:corollary diversity}, summing the bound in \eqref{thm2:bound over gradient} over time and clients, dividing by $T$ and $n$:
\begin{align} \label{thm2:res1}
	\nonumber \frac{1}{T}\sum_{t=0}^{T-1}\frac{1}{n}\sum_{i=1}^{n} \|\bG^{t}_{i}\|^2 &\leq 36\frac{L_{\max}^2}{L_{\min}} \Big[(\eta_3+2\lambda_p\eta_3^2)(36\lambda_p^2\tau^2\eta_3^2\kappa+3\kappa)+(\frac{1}{2}+\eta_3\lambda_p+2\lambda_p^2\eta_3^2)\lambda_p6\tau^2\eta_3^2\kappa\\ \nonumber
	& \hspace{4cm} +\frac{\sum_{i=1}^{n}\left(F_i(\bx^{0}_{i},\bc^{0}_{i},\bw^{0}_{i})-F_i(\bx^t_{i},\bc^t_{i},\bw^{t}_{i})\right)}{nT}\Big] + 2\lambda_p^26\tau^2\eta_3^2\kappa \\
	& = 36\frac{L_{\max}^2}{L_{\min}} \Big[6\tau^2\eta_3^2\kappa(\frac{\lambda_p}{2}+7\eta_3\lambda_p^2+14\eta_3^2\lambda_p^3)+3\eta_3\kappa+6\lambda_p\eta_3^2+ \frac{\Delta_F}{T}\Big] + 12\lambda_p^2\tau^2\eta_3^2\kappa
\end{align}

where $\Delta_F=\frac{\sum_{i=1}^{n}\left(F_i(\bx^{0}_{i},\bc^{0}_{i},\bw^{0}_{i})-F_i(\bx^T_{i},\bc^T_{i},\bw^{T}_{i})\right)}{n}$.

\textbf{Choice of $\eta_3$.} Assuming $\tau \leq \sqrt{T}$ we can take $\eta_3 = \frac{1}{4\lambda_p\sqrt{T}}$, details of this choice is discussed in Appendix~\ref{appendix:proof of theorem 2}, and after some algebra we have the end result:
\begin{align*} 
\frac{1}{T}\sum_{t=0}^{T-1}\frac{1}{n}\sum_{i=1}^{n} \|\bG^{t}_{i}\|^2 & \leq  \frac{54L_{\max}^2\tau\kappa+108L_{\max}^2+288L_{\max}^2\lambda_p \Delta_F}{\sqrt{T}} + \frac{189L_{\max}^2\tau^2\kappa+\frac{3}{4}\tau^2\kappa^2}{T} + \frac{378L_{\max}^2\tau^2\kappa}{T^{\frac{3}{2}}} \\
& \hspace{1cm} + 216L_{\max}^2\kappa.
\end{align*}
This concludes the proof.

}

\section{Experiments} \label{sec:experiments}
\allowdisplaybreaks{
\subsection{Proximal Updates} \label{sec:experiments:proximal updates}
In the experiments we consider $\ell_1$-loss for the distance function $R(\bx,\bc)$. In other words, $R(\bx,\bc)=\min\{ \frac{1}{2} \|\bz-\bx\|_1:z_i \in \{c_1,\cdots,c_m\}, \forall i \}$. For simplicity, we define $\calC = \{\bz:z_i \in \{c_1,\cdots,c_m\}, \forall i\}$.
For the first type of update (update of $\bx$) we have:
\begin{align}
	\text{prox}_{\eta_1 \lambda R(\cdot,\bc)}(\by) &= {\argmin_{\bx \in \mathbb{R}^{d}} }\left\{\frac{1}{2 \eta_1}\left\|\bx-\by\right\|_{2}^{2}+\lambda R(\bx,\bc) \right\}\nonumber \\
	&= {\argmin_{\bx \in \mathbb{R}^{d}} }\left\{\frac{1}{2 \eta_1}\left\|\bx-\by\right\|_{2}^{2}+\frac{\lambda}{2} \min_{\bz \in \calC} \|\bz-\bx\|_1 \right\}\nonumber \\
	&= {\argmin_{\bx \in \mathbb{R}^{d}} } \min_{\bz \in \calC} \left\{\frac{1}{2 \eta_1}\left\|\bx-\by\right\|_{2}^{2}+\frac{\lambda}{2} \|\bz-\bx\|_1 \right\}\nonumber \\
\end{align}
This corresponds to solving:
\begin{align}
	\min_{\bz \in \calC} {\min_{\bx \in \mathbb{R}^{d}} }  \left\{\frac{1}{\eta_1}\left\|\bx-\by\right\|_{2}^{2}+\lambda \|\bz-\bx\|_1 \right\}\nonumber
\end{align}
Since both $\ell_1$ and squared $\ell_2$ norms are decomposable; if we fix $\bz$, for the inner problem we have the following solution to soft thresholding:
\begin{align}
	x^\star(\bz)_i=
	\begin{cases}
		y_i - \frac{\lambda\eta_1}{2}, \quad \text{if } y_i - \frac{\lambda\eta_1}{2} > z_i \\
		y_i + \frac{\lambda\eta_1}{2}, \quad \text{if } y_i + \frac{\lambda\eta_1}{2} < z_i \\
		z_i, \quad \text{otherwise}
	\end{cases}
\end{align}

As a result we have:

\begin{align}
	\min_{\bz \in \calC}  \left\{\frac{1}{\eta_1}\left\|x^\star(\bz)-\by\right\|_{2}^{2}+\lambda \|\bz-x^\star(\bz)\|_1 \right\}\nonumber
\end{align}
This problem is separable, in other words we have:
\begin{align*}
	\bz^\star_i = \argmin_{z_i \in \{c_1, \cdots, c_m\}} \left\{ \frac{1}{\eta_1}(x^\star(\bz)_i-y_i)^2+\lambda |z_i-x^\star(\bz)_i | \right\} \ \forall i
\end{align*}
Substituting $x^\star(\bz)_i$ and solving for $z_i$ gives us:
\begin{align*}
	\bz^\star_i = \argmin_{z_i \in \{c_1, \cdots, c_m\}} \left\{ |z_i - y_i | \right\} \ \forall i
\end{align*}
Or equivalently we have,
\begin{align}
	\bz^\star = \argmin_{\bz \in \calC} \|\bz-\by\|_1 = Q_\bc(\by)
\end{align}
As a result, $\text{prox}_{\eta_1 \lambda R(\cdot,\bc)}(\cdot)$ becomes the soft thresholding operator:
\begin{align}
	\text{prox}_{\eta_1 \lambda R(\cdot,\bc)}(\by)_i = 
	\begin{cases}
		y_i-\frac{\lambda\eta_1}{2}, \quad \text{if } y_i \geq Q_\bc(\by)_i+\frac{\lambda\eta_1}{2} \\
		y_i+\frac{\lambda\eta_1}{2}, \quad \text{if } y_i \leq Q_\bc(\by)_i-\frac{\lambda\eta_1}{2} \\
		Q_\bc(\by)_i, \quad \text{otherwise} 
	\end{cases}
\end{align}
And for the second type of update we have $\text{prox}_{\eta_2 \lambda R(\bx,\cdot)}(\cdot)$ becomes:
\begin{align}
	\text{prox}_{\eta_2 \lambda R(\bx,\cdot)}(\boldsymbol{\mu}
	) &= {\argmin_{\bc \in \mathbb{R}^{m}} }\left\{\frac{1}{2 \eta_2}\left\|\bc-\boldsymbol{\mu}
	\right\|_{2}^{2}+\lambda R(\bx,\bc) \right\}\nonumber \\
	&= {\argmin_{\bc \in \mathbb{R}^{m}} }\left\{\frac{1}{2 \eta_2}\left\|\bc-\boldsymbol{\mu}
	\right\|_{2}^{2}+\frac{\lambda}{2} \min_{\bz \in \cal C} \|\bz-\bx\|_1 \right\}\nonumber \\
	&= {\argmin_{\bc \in \mathbb{R}^{m}} } \left\{\frac{1}{2 \eta_2}\left\|\bc-\boldsymbol{\mu}
	\right\|_{2}^{2}+\frac{\lambda}{2} \|Q_\bc(\bx)-\bx\|_1 \right\} \label{app:prox2}
\end{align}
Then,
\begin{align*}
	\text{prox}_{\eta_2 \lambda R(\bx,\cdot)}(\boldsymbol{\mu}
	)_j &= {\argmin_{\bc_j \in \mathbb{R}^{m}} } \left\{\frac{1}{2 \eta_2}(c_j-\mu_j)^{2}+\frac{\lambda}{2} \sum_{i=1}^d |Q_\bc(\bx)_i-x_i| \right\} \\
	&= {\argmin_{\bc_j \in \mathbb{R}^{m}} } \left\{\frac{1}{2 \eta_2}(c_j-\mu_j)^{2}+\frac{\lambda}{2} \sum_{i=1}^d \mathbbm{1}(Q_{\bc}(\bx)_i=c_{j}) |c_j-x_i| \right\}
\end{align*}
Note that second part of the optimization problem is hard to solve; in particular we need to know the assignments of $x_i$ to $c_j$. In the algorithm, at each time point $t$ we are given the previous epoch's assignments. We can utilize that and approximate the optimization problem by assuming $\bc^{t+1}$ will be in a neighborhood of $\bc^t$. We can take the gradient of $R(\bx^{t+1},\bc)$ at $\bc=\bc^t$ while finding the optimal point. This is also equivalent to optimizing the first order Taylor approximation around $\bc=\bc^t$.
As a result we have the following optimization problem:
\begin{align*}
	\text{prox}_{\eta_2 \lambda R(\bx^{t+1},\cdot)}(\boldsymbol{\mu}
	)_j \approx {\argmin_{\bc_j \in \mathbb{R}^{m}} } \left\{\frac{1}{2 \eta_2}(c_j-\mu_j)^{2}+\frac{\lambda}{2} \sum_{i=1}^d \mathbbm{1}(Q_{\bc^t}(\bx^{t+1})_i=c^t_{j}) |c^t_j-x^{t+1}_i| \right. \\
	\left. +(c_j-c^t_j) \frac{\lambda}{2} \sum_{i=1}^d \mathbbm{1}(Q_{\bc^t}(\bx^{t+1})_i=c^t_{j})\frac{\partial |c^t_j-x^{t+1}_i|}{ \partial c^t_j}  \right\}
\end{align*}
In the implementation we take $\frac{\partial |c^t_j-x^{t+1}_i|}{ \partial c^t_j}$ as $1$ if $c^t_j > x^{t+1}_i$,  $-1$ if $c^t_j < x^{t+1}_i$ and $0$ otherwise. Now taking the derivative with respect to $c_j$ and setting it to 0 gives us:
\begin{align*}
	\text{prox}_{\eta_2 \lambda R(\bx^{t+1},\cdot)}(\boldsymbol{\mu})_j&\approx \mu_j -\frac{\lambda\eta_2}{2}(\sum_{i=1}^d \mathbbm{1}(Q_{\bc^t}(\bx^{t+1})_i=c^t_{j}) \mathbbm{1}(x^{t+1}_i>c^{t}_{j})-\sum_{i=1}^d \mathbbm{1}(Q_{\bc^t}(\bx^{t+1})_i=c^t_{j}) \mathbbm{1}(x^{t+1}_i<c^{t}_{j}))
\end{align*}

Proximal map pulls the updated centers toward the median of the weights that are assigned to them.  

\textbf{Using $P \rightarrow \infty$.} In the experiments we observed that using $P \rightarrow \infty$, i.e. using hard quantization function produces good results and also simplifies the implementation. The implications of $P \rightarrow \infty$ are as follows:
\begin{itemize}
	\item We take $\nabla_\bx f(\widetilde{Q}_\bc(\bx))$=0.
	
	\item We take $\nabla_\bc f(\widetilde{Q}_\bc(\bx)) = \nabla_\bc f(Q_\bc(\bx))$, where $\nabla_\bc f(Q_\bc(\bx)) = \begin{bmatrix} \sum_{i=1}^d \frac{\partial f(Q_\bc(\bx)_i)}{\partial Q_\bc(\bx)_i} \mathbbm{1}(Q_\bc(\bx)_i = c_1) \\ \vdots \\ \sum_{i=1}^d \frac{\partial f(Q_\bc(\bx)_i)}{\partial Q_\bc(\bx)_i} \mathbbm{1}(Q_\bc(\bx)_i = c_m) \end{bmatrix}$.
\end{itemize}

\subsection{Details about HyperParameters} \label{appendix_E:Hyperparams}
\textbf{Models and hyperparameters.}
In the centralized case we use ResNet-20 and ResNet-32 \cite{he2015deep}, following \citep{BinaryRelax} and \cite{bai2018proxquant}. In the federated setting we use a 5 layer CNN that was used in \cite{mcmahan2017communicationefficient}.

$\bullet$ For centralized case we use ResNet-20, ResNet-32 \footnote{For the implementation of ResNet models we used the toolbox from \url{https://github.com/akamaster/pytorch_
	resnet_cifar10}.} and employ the learning schedule from \cite{bai2018proxquant}. We use ADAM optimizer with a learning rate of $\eta_1 = 0.01$ and no weight decay.We set a minibatch size of 128, and train for a total of 300 epochs. At the end of 200 epochs we hard quantize the weights and do 100 epochs for fine tuning as in \cite{bai2018proxquant}. As accustomed in quantized network training, we don't quantize the first and last layers, as well as the bias and batch normalization layers. For $\eta_2$ we start with $\eta_2 = 0.0001$ and divide it by 10 at 80'th and 140'th epochs. For $\lambda$ we start with $\lambda = 0.0001$ and increase it every epoch using $\lambda(t) = 0.0001t$. Following, \cite{Yang_2019_CVPR}, \cite{gong2019differentiable}, \cite{zhu2017trained} and many other works; we do layer-wise quantization i.e. each layer has its own set of quantization values. 

$\bullet$ For 5 layer CNN in Federated Setting we use SGD with $\eta_1 = 0.1$, $1e-4$ weight decay, and 0.99 learning rate decay at each epoch. We use batch size of 50, $\tau = 10$ and 350 epochs in total. We do fine tuning after epoch 300. We set $\eta_2 = 0.0001$ and divide it by 10 at epochs 120 and 180, for $\lambda$ we use $\lambda (t) = 1e-6 \cdot t$ and we set $\lambda_p = 0.025$. We use $\eta_3 = 5$.\footnote{To simulate a federated setting we used pytorch.distributed (\url{https://pytorch.org/tutorials/intermediate/dist_tuto.html}) package. }

\textbf{Fine tuning.} We employ a fine tuning procedure similar to \cite{bai2018proxquant}. At the end of the regular training procedure, model weights are hard-quantized. After the hard-quantization, during the fine tuning epochs we let the unquantized parts of the network to continue training (e.g. batch normalization layers) and different from \cite{bai2018proxquant} we also continue to train quantization levels. 

Accompanying Remark \ref{remark:centralized_compare}, we first provide a performance comparison of our centralized training algorithm (Algorithm \ref{algo:centralized}) to related quantized training schemes \citep{BinaryRelax,bai2018proxquant} in Section \ref{subsec:centralized_expts} for classification task on the CIFAR-10 \cite{cifar10} dataset. In the rest of this section, we then validate the performance of our personalized quantization training algorithm QuPeL when learning over heterogeneous client data distributions. We compare QuPeL with FedAvg \cite{mcmahan2017communicationefficient} and local client training (no collaboration between clients) when learning over CIFAR-10 dataset.


\subsection{Centralized Model Compression Training} \label{subsec:centralized_expts}
We use ResNet-20 and ResNet-32 \cite{he2015deep} to compare Algorithm \ref{algo:centralized} to BinaryRelax \cite{BinaryRelax} and ProxQuant \cite{bai2018proxquant}.
\begin{table}[H] 
	\centering
	\begin{tabular}{lccl} \toprule 
		& ResNet-20 & ResNet-32 \\ \midrule
		Full Precision & $92.05 $ & $92.95$ \\ 
		ProxQuant (1 bit)  & $90.69 $ & $91.55$\\
		BinaryRelax (1 bit) & $87.82 $ & $90.65$\\
		Our method (1 bit) & $91.17$ & $92.10$\\ 
		Our method (2 bits) & $91.45$ & $92.47$
	\end{tabular} 
	\caption{Test accuracy (in \%) of ResNet-20 and ResNet-32 on Cifar-10 using different methods. We implemented ProxQuant, and put test accuracy of BinaryRelax as reported. } 
	\label{tab:Table centralized}
\end{table}

 Note that when we let $P \rightarrow \infty$ we can see ProxQuant as a special case of our method where the optimization is only over the model parameters. As a result, from Table~\ref{tab:Table centralized} we can infer how much test accuracy we gain from optimizing over the quantization levels. Compared to ProxQuant we observe, $0.38\%$ increase for ResNet-20 and $0.55\%$ increase for ResNet-32. This indicates that, indeed, optimizing over centers provides a gain. As expected, increasing the number of bits further improves the performance of our method.

\subsection{Personalized Quantization for Federated Learning} 
\textbf{Heterogeneity model.} We simulate a heterogeneous setting using pathological non-IID setup similar to recent works \cite{zhang2021personalized,dinh2020personalized}. In particular, we randomly assign only $4$ classes (out of 10) to each client and sample both training and test data from assigned classes while ensuring all clients have same amount of data. For all the plots, we average over three runs for each algorithm, choosing different class assignments in each run.
\begin{table*}[h] 
	\centering
	\begin{threeparttable}
		\begin{tabular}{lccccccl}\toprule
			Avg. No. of Bits	& F. P. (32 bits) & 3 bits &2.75 bits & 2.5 bits & 2.25 bits & 2 bits \\ \midrule
			FedAvg  & $64.31 \pm 0.48$ & - & - & - & - & -\\
			Local Training & $78.42 \pm 0.26$ \tnote{*}& $77.55 \pm 0.24$ &  $77.21 \pm 0.13$ & $77.15  \pm 0.16$  & $76.77 \pm 0.32$ & $76.46 \pm 0.02$ \\
			QuPeL & $79.94 \pm 0.12$ \tnote{*} & $79.50 \pm 0.50$ & $79.10 \pm 0.40$ & $78.74 \pm 0.03$  & $78.57 \pm 0.51$  & $77.88 \pm 0.16$
		\end{tabular}
		\begin{tablenotes}
			\item[*] 
			see Footnote \ref{foot:qupel_FP}
		\end{tablenotes}
	\end{threeparttable}
	\caption{Test accuracy (in \%) of different methods under non-IID setting with 20 clients. }
	\label{tab:Table pers}
\end{table*}

\begin{figure*}[h]
	\centering
	\includegraphics[scale=0.33]{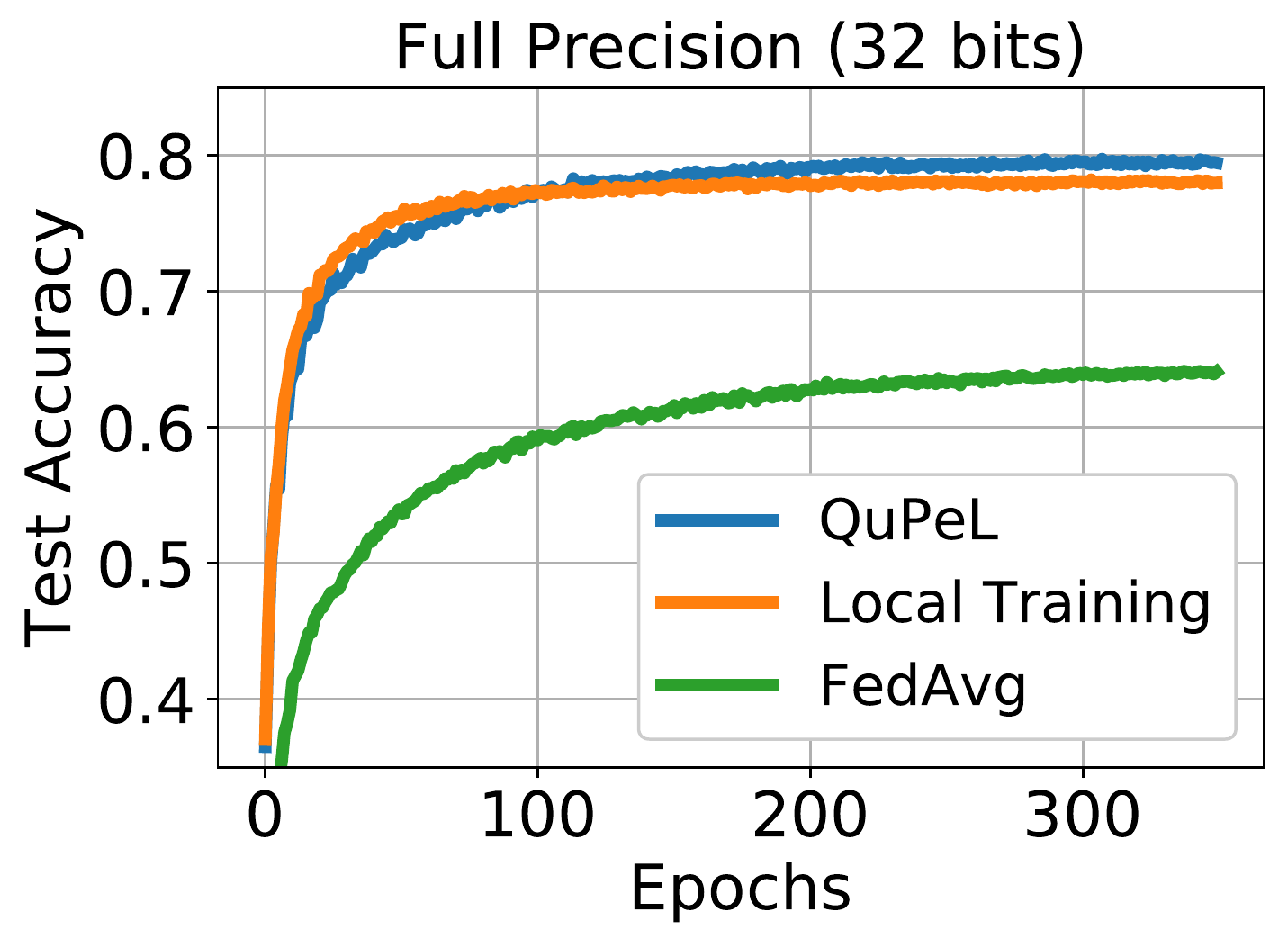}
	\includegraphics[scale=0.33]{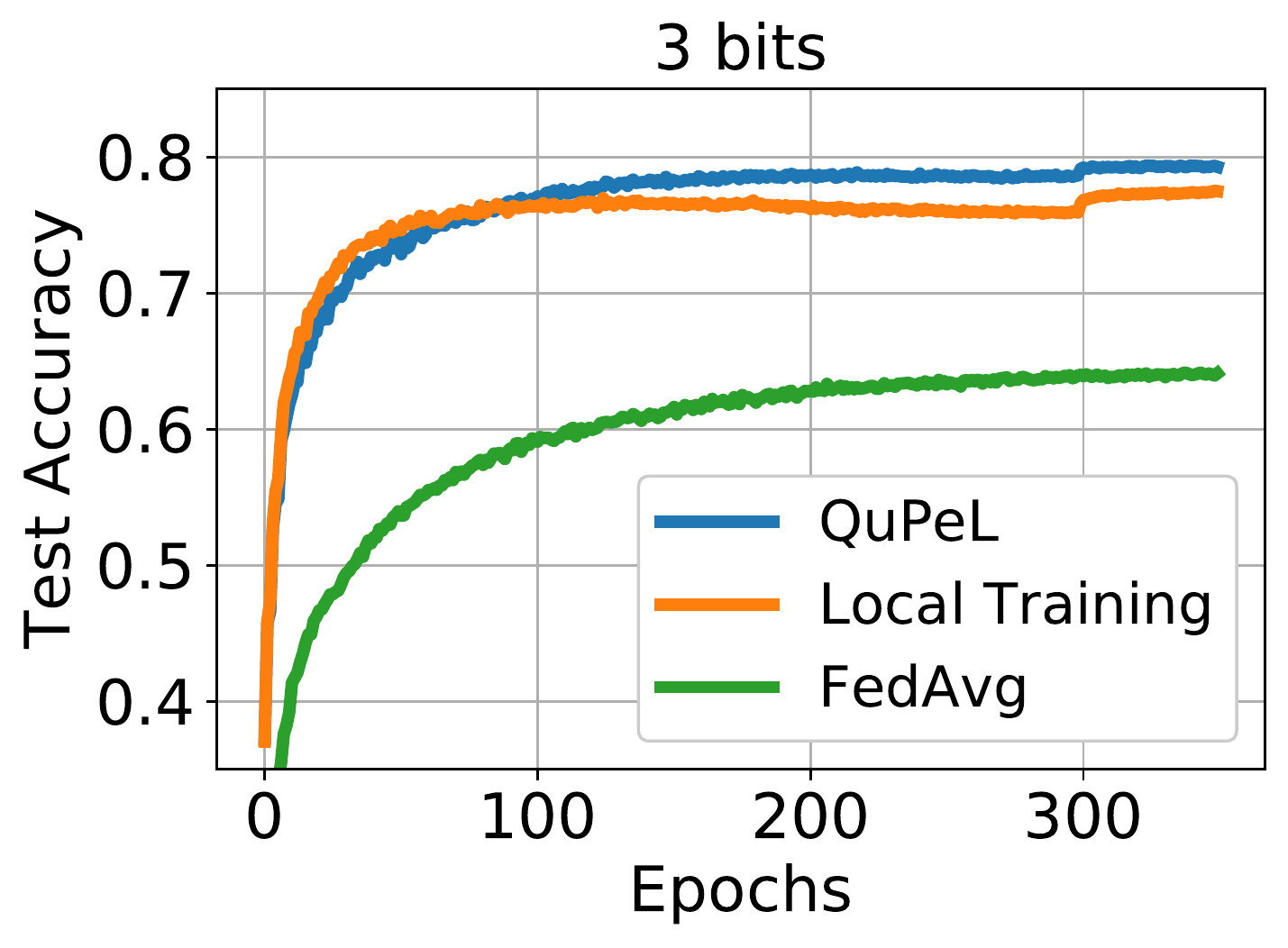}
	\includegraphics[scale=0.33]{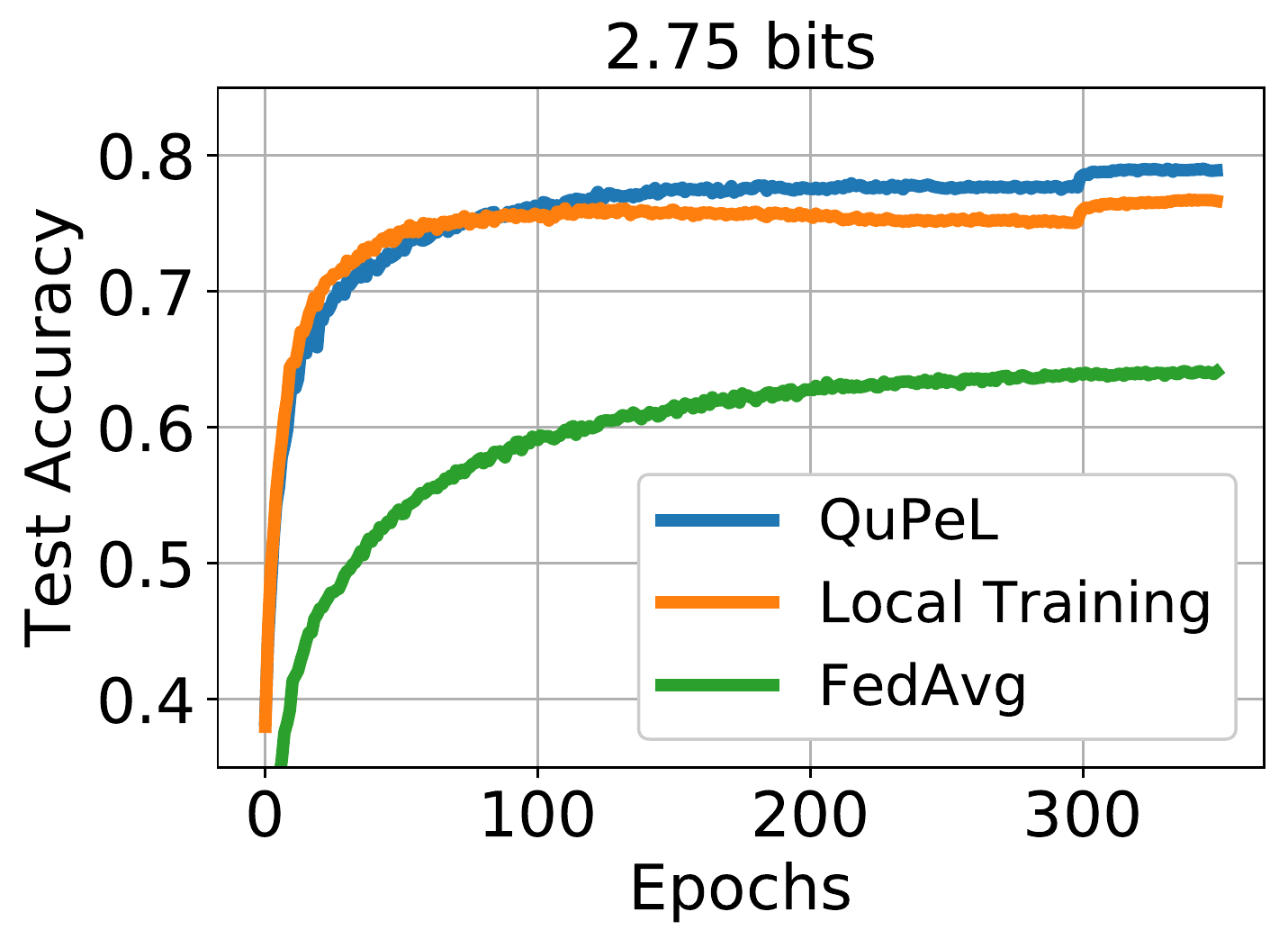}
	\includegraphics[scale=0.33]{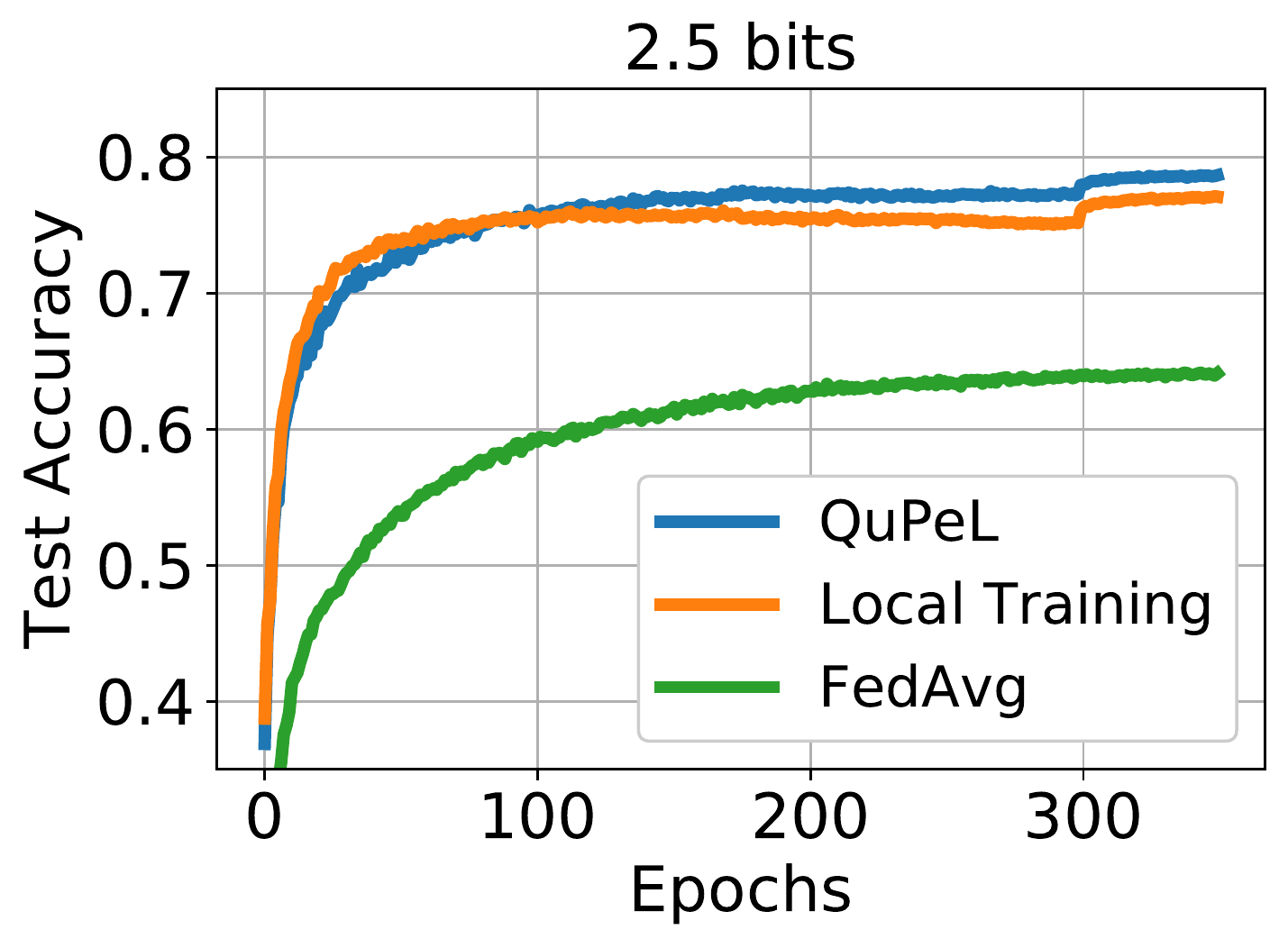}
	\includegraphics[scale=0.33]{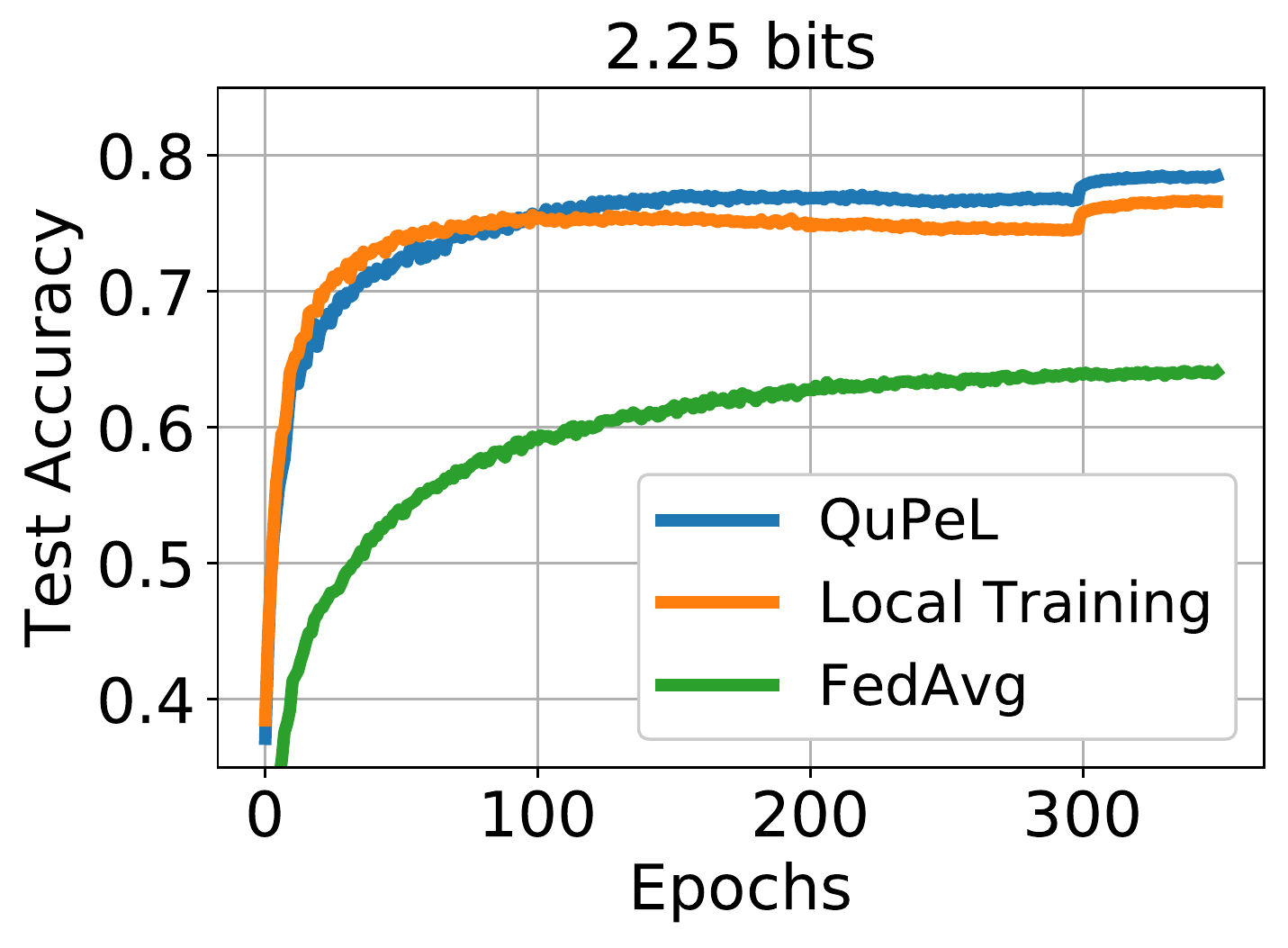}
	\includegraphics[scale=0.33]{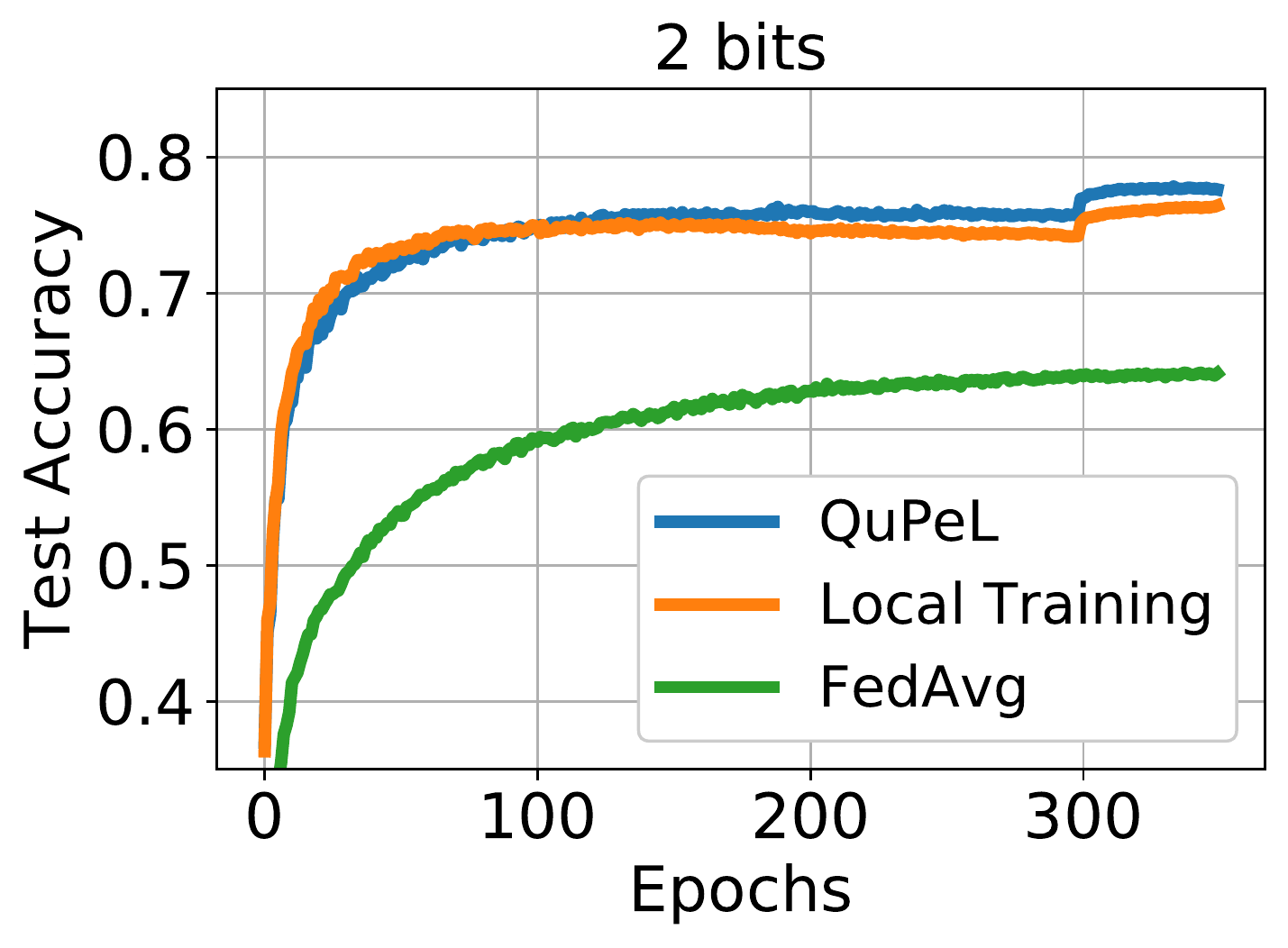}
	\caption{Average test accuracy across clients compared for each scheme for different precision of learned quantization. For QuPeL and Local Training, results are reported for quantized models (except the first plot). For FedAvg we always use full precision global model.}
	\label{fig:figure pers}
\end{figure*}


%
%
%
%

\textbf{Results.} 
We state our results in Table~\ref{tab:Table pers}, and provide comparison of average test accuracy across clients in Figure~\ref{fig:figure pers} for the different schemes. We compare the schemes for four different cases based on allowed precision for the quantized models: {\sf (i)} Full precision (32 bit),\footnote{\label{foot:qupel_FP}Full Precision refers to the setting where we discard the terms in the objective function related to Model Compression, and optimize the full precision personalized/local models with SGD.} 
{\sf (ii)} (3 bits): each client learns a 3 bit quantization, {\sf (iii)} (2.75 bits): fifteen clients learn a 3 bits other five learn a 2 bits quantization {\sf (iv)} (2.5 bits): ten clients learn a 2 bits and the other ten learn 3 bits quantization, {\sf (v)} (2.25 bits): five clients learn a 3 bits and the rest fifteen learn a 2 bits quantization, {\sf (vi)} (2 bits): each client learns a 2 bit quantization. 
In cases {\sf (ii)}-{\sf (vi)} for local training, we provide accuracies for quantized models learned using our centralized training scheme (Algorithm \ref{algo:centralized}) without collaboration for a fair comparison.
For full precision (32 bit) training, `QuPeL' (see Footnote \ref{foot:qupel_FP}) has 1.5\% more accuracy than local training on account of collaboration between clients, while having 15\% more accuracy than FedAvg on account of learning personalized models for heterogeneous clients. For learning quantized models (in 3 bits, 2.75 bits, 2.5 bits, 2.25 bits 2 bits cases), our proposed algorithm QuPeL outperforms local training around $1.5-2\%$ on account of collaboration between clients.\footnote{Note that as reported in \cite{zhang2021personalized}, in many scenarios local training outperforms personalized training.}


\textbf{Discussion.}
The above comparison results give insight into two salient features of our training algorithm QuPeL: 

\textit{Model compression}: When learning personalized models, comparison of QuPeL for full precision (32bits) and quantized training shows that using a quantized model for inference does not significantly affect performance. This means that QuPeL can provide personalized, compressed models for deployment which effectively match full precision models in test performance. 

\textit{Collaboration}: Comparison of QuPeL with local training (where clients do not collaborate) shows that collaboration between participating clients (as in QuPeL) can increase the test performance of the learned models. 
Thus, QuPeL allows clients to leverage information from each other in cases when local data is not sufficient, which can improve the test performance of the learned personalized models.

%
\begin{table}[h] 
	\centering
	\begin{tabular}{lcl} \toprule 
		&Test Accuracy  \\ \midrule
		Collaboration with 2 Bit clients & $78.17 \pm 0.11 $\\
		Collaboration with 3 Bit clients & $78.79 \pm 0.12$ 
	\end{tabular} 
	\caption{Average Test accuracy (in \%) of Control Clients. } 
	\label{tab:Table collab}
\end{table} 
\textbf{Collaboration with resource rich clients.} We demonstrate that for clients with scarce resources, it is advantageous to collaborate with the clients with more resources in terms of having finer quantized models. To investigate this we choose a subset of 10 out of 20 clients and constraint them to have 2 bits; we call these 10 clients `Control Clients'. Then we examine their average test accuracy in two cases, depending on whether the other 10 clients are restricted to have 2-bit or 3-bit quantizers. In Table \ref{tab:Table collab} we observe an increase of $0.62\%$ in test accuracy for the Control Clients. In case of collaboration with full precision clients there are further small gains. This implies, clients with scarce resources can take advantage of clients with rich resources.

%
}

\section{Conclusion} \label{sec:conclusion}
In this paper,  we propose a new problem formulation to enable personalized model compression in a  federated setting, where clients also learn different personalized models. For this, we propose and analyze QuPeL, a relaxed optimization framework to learn personalized quantization levels as well as the model parameters. We give first convergence analysis on the QuPeL through this relaxed optimization problem. We observe that QuPeL recovers the convergence rates in the related works on personalized learning, which utilize full precision personalized models. Numerically, we show that model compression does not significantly degrade the performance and collaboration increases the test performance compared to local training. An intriguing preliminary experimental observation is that resource-starved clients can benefit from collaborating with resource rich clients, perhaps promoting "equity". To conclude, our method demonstrates a good numerical performance together with convergence guarantees while accounting for data and resource heterogeneity among clients.

\bibliography{bibliography}
\bibliographystyle{icml2021}

\onecolumn
\renewcommand{\thesection}{\Alph{section}}
\appendix

\section{Preliminaries}\label{appendix:Preliminaries}
\allowdisplaybreaks{
\subsection{Notation}
\begin{itemize}
\item Given a composite function $g(\bx,\by)$ we will denote $\nabla g(\bx,\by)$  or $\nabla_{(\bx,\by)} g(\bx,\by)$   as the gradient; $\nabla_\bx g(\bx,\by)$ and $\nabla_\by g(\bx,\by)$ as the partial gradients with respect to $\bx$ and $\by$. 
\item For a vector $\bu$, $\|\bu\|$ denotes the $\ell_2$-norm $\|\bu\|_2$. For a matrix $\bA$, $\|\bA\|_F$ denotes the Frobenius norm.
\item Unless otherwise stated, for a given vector $\bx$, $x_i$ denotes the $i'th$ element in vector $\bx$; and $\bx_i$ denotes that the vector belongs to client $i$. Furthermore, $\bx^t_i$ denotes a vector that belongs to client $i$ at time $t$.
\end{itemize}

\subsection{Alternating Proximal Steps}
We define the following functions: $f:\mathbb{R}^d \rightarrow \mathbb{R}, \widetilde{Q}: \mathbb{R}^{d+m} \rightarrow \mathbb{R}^d$, and we also define $h(\bx,\bc) = f(\widetilde{Q}_\bc(\bx)), h:\mathbb{R}^{d+m} \rightarrow \mathbb{R}$ where $\bx \in \mathbb{R}^d \text{ and } \bc \in \mathbb{R}^m$. Note that here $\widetilde{Q}_\bc(\bx)$ denotes  $\widetilde{Q}(\bx,\bc)$. Throughout our paper we will use $\widetilde{Q}_\bc(\bx)$ to denote  $\widetilde{Q}(\bx,\bc)$, in other words both $\bc$ and $\bx$ are inputs to the function $\widetilde{Q}_\bc(\bx)$. We propose an alternating proximal gradient algorithm. Our updates are as follows:
%

\begin{align} \label{updates}
    &\bx^{t+1} = \text{prox}_{\eta_1\lambda R(\cdot,\bc^{t})}(\bx^{t} - \eta_1 \nabla f(\bx^{t})-\eta_1 \nabla_{\bx^{t}} f(\widetilde{Q}_{\bc^{t}}(\bx^{t})) )\\
    &\bc^{t+1} = \text{prox}_{\eta_2\lambda R(\bx^{t+1},\cdot)}(\bc^{t} - \eta_2 \nabla_{\bc^{t}} f(\widetilde{Q}_{\bc^{t}}(\bx^{t+1}))) \notag
\end{align}

For simplicity we assume the functions in the objective function are differentiable, however, our analysis could also be done using subdifferentials.

Our method is inspired by \citep{Bolte13} where the authors introduce an alternating proximal minimization algorithm to solve a broad class of non-convex problems as an alternative to coordinate descent methods. In this work we construct another optimization problem that can be used as a surrogate in learning quantized networks where both model parameters and quantization levels are subject to optimization. In particular, \cite{Bolte13} considers a general objective function of the form $F(\bx,\by) = f(\bx)+g(\by)+\lambda H(\bx,\by)$, whereas, our objective function is tailored for learning quantized networks: $F_\lambda(\bx,\bc) = f(\bx)+f(\widetilde{Q}_{\bc}(\bx))+\lambda R(\bx,\bc)$. Furthermore, they consider updates where the proximal mappings are with respect to functions $f,g$, whereas in our case the proximal mappings are with respect to the distance function $R(\bx,\bc)$ to capture the soft projection. 

\subsection{Lipschitz Relations}
In this section we will use the assumptions \textbf{A.1-5} and show useful relations for partial gradients derived from the assumptions. We have the following gradient for the composite function:

\begin{align}
	\nabla_{(\bx,\bc)} h(\bx,\bc) = \nabla_{(\bx,\bc)} f(\widetilde{Q}_\bc(\bx)) = \nabla_{(\bx,\bc)} \widetilde{Q}_\bc(\bx) \nabla_{\widetilde{Q}_\bc(\bx)} f(\widetilde{Q}_\bc(\bx)) 
\end{align}

where $dim(\nabla h(\bx,\bc)) = (d+m) \times 1, \ dim(\nabla_{\widetilde{Q}_\bc(\bx)} f(\widetilde{Q}_\bc(\bx))) = d \times 1, \ dim(\nabla \widetilde{Q}_\bc(\bx)) = (d+m) \times d $. Note that the soft quantization functions of our interest are elementwise which implies $\frac{\partial \widetilde{Q}_\bc(\bx)_i}{\partial x_j}=0$ if $i \neq j$. In particular,
for the gradient of the quantization function we have,

\begin{align}\label{lipschitz_relations2}
	\nabla_{(\bx,\bc)} \widetilde{Q}_\bc(\bx) = \begin{bmatrix} \frac{\partial \widetilde{Q}_\bc(\bx)_1}{\partial x_1} & 0 & \hdots  \\ 0 & \frac{\partial \widetilde{Q}_\bc(\bx)_2}{\partial x_2} & 0\hdots \\ \vdots & \vdots & \vdots \\ \frac{\partial \widetilde{Q}_\bc(\bx)_1}{\partial c_1} & \frac{\partial \widetilde{Q}_\bc(\bx)_2}{\partial c_1} & \hdots \\ \vdots & \vdots & \vdots \\ \frac{\partial \widetilde{Q}_\bc(\bx)_1}{\partial c_m} &  \frac{\partial \widetilde{Q}_\bc(\bx)_2}{\partial c_m} & \hdots \end{bmatrix}
\end{align}

Moreover for the composite function we have,

\begin{align} \label{lipschitz_relations1}
	\nabla h(\bx,\bc) = \begin{bmatrix} &  \frac{\partial f}{\partial \widetilde{Q}_\bc(\bx)_1} \frac{\partial \widetilde{Q}_\bc(\bx)_1}{\partial  x_1} &+  0 & +  \hdots  \\ & 0 &+  \frac{\partial f}{\partial \widetilde{Q}_\bc(\bx)_2} \frac{\partial \widetilde{Q}_\bc(\bx)_2}{\partial  x_2} &+  \hdots \\ &\vdots \\& \frac{\partial f}{\partial \widetilde{Q}_\bc(\bx)_1} \frac{\partial \widetilde{Q}_\bc(\bx)_1}{\partial  c_1} &+ \frac{\partial f}{\partial \widetilde{Q}_\bc(\bx)_2} \frac{\partial \widetilde{Q}_\bc(\bx)_2}{\partial  c_1} &+ \hdots \\ &\vdots \\ &\frac{\partial f}{\partial \widetilde{Q}_\bc(\bx)_1} \frac{\partial \widetilde{Q}_\bc(\bx)_1}{\partial  c_m} &+ \frac{\partial f}{\partial \widetilde{Q}_\bc(\bx)_2} \frac{\partial \widetilde{Q}_\bc(\bx)_2}{\partial  c_m} &+ \hdots \end{bmatrix} 
\end{align}

In \eqref{lipschitz_relations1} and \eqref{lipschitz_relations2}  we use $x_i$, $c_j$ to denote $(\bx)_i$ and $(\bc)_j$ (i'th, j'th element respectively) for notational simplicity. Now, we prove two claims that will be useful in the main analysis.
\begin{claim} \label{claim: lqxclaim}
\begin{align*}
\| \nabla_\bx f(\widetilde{Q}_\bc(\bx))-\nabla_\by f(\widetilde{Q}_\bc(\by)) \| =\| \nabla h(\bx,\bc)_{1:d}-\nabla h(\by,\bc)_{1:d} \| \leq  (GL_{Q_1} + G_{Q_1}Ll_{Q_1})\|\bx-\by\|  
\end{align*}
\end{claim}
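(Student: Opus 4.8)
The plan is to apply the chain rule to the composite function $h(\bx,\bc)=f(\widetilde{Q}_\bc(\bx))$. Since $\widetilde{Q}_\bc$ acts coordinate-wise in $\bx$ (so the $x$-block of its Jacobian is diagonal), the first $d$ components of $\nabla h(\bx,\bc)$ are exactly $\nabla_\bx h(\bx,\bc)=\nabla_\bx\widetilde{Q}_\bc(\bx)\,\nabla f(\widetilde{Q}_\bc(\bx))$, where $\nabla_\bx\widetilde{Q}_\bc(\bx)$ denotes the $d\times d$ block from \eqref{lipschitz_relations2} and $\nabla f$ is evaluated at $\widetilde{Q}_\bc(\bx)\in\R^d$. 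Thus it suffices to bound $\|\nabla_\bx h(\bx,\bc)-\nabla_\by h(\by,\bc)\|$.

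Next I would form this difference and split it by adding and subtracting the cross term $\nabla_\bx\widetilde{Q}_\bc(\bx)\,\nabla f(\widetilde{Q}_\bc(\by))$, obtaining
\[
\nabla_\bx h(\bx,\bc)-\nabla_\by h(\by,\bc)
= \nabla_\bx\widetilde{Q}_\bc(\bx)\big(\nabla f(\widetilde{Q}_\bc(\bx))-\nabla f(\widetilde{Q}_\bc(\by))\big)
 + \big(\nabla_\bx\widetilde{Q}_\bc(\bx)-\nabla_\by\widetilde{Q}_\bc(\by)\big)\nabla f(\widetilde{Q}_\bc(\by)).
\]
Taking norms, applying the triangle inequality, and using $\|Av\|\le\|A\|_F\|v\|$ on each matrix--vector product, the left side is at most $\|\nabla_\bx\widetilde{Q}_\bc(\bx)\|_F\,\|\nabla f(\widetilde{Q}_\bc(\bx))-\nabla f(\widetilde{Q}_\bc(\by))\| + \|\nabla_\bx\widetilde{Q}_\bc(\bx)-\nabla_\by\widetilde{Q}_\bc(\by)\|_F\,\|\nabla f(\widetilde{Q}_\bc(\by))\|$.

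Finally I would bound each factor by the assumptions: $\|\nabla_\bx\widetilde{Q}_\bc(\bx)\|_F\le G_{Q_1}$ (A.5); $\|\nabla f(\widetilde{Q}_\bc(\bx))-\nabla f(\widetilde{Q}_\bc(\by))\|\le L\|\widetilde{Q}_\bc(\bx)-\widetilde{Q}_\bc(\by)\|\le Ll_{Q_1}\|\bx-\by\|$ using $L$-smoothness of $f$ (A.2) composed with $l_{Q_1}$-Lipschitzness of $\widetilde{Q}_\bc$ in $\bx$ (A.4); $\|\nabla_\bx\widetilde{Q}_\bc(\bx)-\nabla_\by\widetilde{Q}_\bc(\by)\|_F\le L_{Q_1}\|\bx-\by\|$ from $L_{Q_1}$-smoothness of $\widetilde{Q}_\bc$ in $\bx$ (A.4); and $\|\nabla f(\widetilde{Q}_\bc(\by))\|\le G$ (A.3). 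Multiplying and summing gives the bound $(G_{Q_1}Ll_{Q_1}+GL_{Q_1})\|\bx-\by\|$, as claimed. The only mildly delicate point is keeping the norm conventions consistent: the estimate $\|Av\|\le\|A\|_{\mathrm{op}}\|v\|\le\|A\|_F\|v\|$ is what lets the Frobenius-norm bounds in A.4--A.5 be used directly; since $\|A\|_{\mathrm{op}}\le\|A\|_F$ always holds, this is harmless but worth stating. Everything else is a routine chain-rule-and-triangle-inequality computation.
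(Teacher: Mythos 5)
Your proof is correct and follows essentially the same route as the paper: apply the chain rule, add and subtract a cross term, use the triangle inequality together with $\|A v\|\le\|A\|_F\|v\|$, and invoke A.2--A.5 to get $GL_{Q_1}\|\bx-\by\|+G_{Q_1}Ll_{Q_1}\|\bx-\by\|$. The only (immaterial) difference is which cross term you insert -- you use $\nabla_\bx\widetilde{Q}_\bc(\bx)\,\nabla f(\widetilde{Q}_\bc(\by))$ while the paper uses the other one, $\nabla f(\widetilde{Q}_\bc(\bx))\,\nabla_\by\widetilde{Q}_\bc(\by)$ -- which merely swaps which factor absorbs $G$ versus $G_{Q_1}$ in each of the two resulting products.
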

\begin{proof}
\begin{align*}
	\| \nabla h(\bx,\bc)_{1:d}-\nabla h(\by,\bc)_{1:d} \| &=\|\nabla_\bx f(\widetilde{Q}_{\bc}(\bx))-\nabla_\by f(\widetilde{Q}_{\bc}(\by))\| \\  &= \| \nabla_{\widetilde{Q}_\bc(\bx)} f(\widetilde{Q}_\bc(\bx)) \nabla_\bx \widetilde{Q}_\bc(\bx) - \nabla_{\widetilde{Q}_\bc(\by)} f(\widetilde{Q}_\bc(\by)) \nabla_\by \widetilde{Q}_\bc(\by)\| \\
	&=\| \nabla_{\widetilde{Q}_\bc(\bx)} f(\widetilde{Q}_\bc(\bx)) \nabla_\bx \widetilde{Q}_\bc(\bx)- \nabla_{\widetilde{Q}_\bc(\bx)} f(\widetilde{Q}_\bc(\bx))  \nabla_\by \widetilde{Q}_\bc(\by) \\ & \quad + \nabla_{\widetilde{Q}_\bc(\bx)} f(\widetilde{Q}_\bc(\bx)) \nabla_\by \widetilde{Q}_\bc(\by) -  \nabla_{\widetilde{Q}_\bc(\by)} f(\widetilde{Q}_\bc(\by))  \nabla_\by \widetilde{Q}_\bc(\by)\| \\ &\stackrel{\text{(a)}}{\leq} \| \nabla_{\widetilde{Q}_\bc(\bx)} f(\widetilde{Q}_\bc(\bx)) \| \| \nabla_\bx \widetilde{Q}_\bc(\bx) - \nabla_\by \widetilde{Q}_\bc(\by) \|_F \\ & \quad + \| \nabla_\by \widetilde{Q}_\bc(\by) \|_F \| \nabla_{\widetilde{Q}_\bc(\bx)} f(\widetilde{Q}_\bc(\bx)) - \nabla_{\widetilde{Q}_\bc(\by)} f(\widetilde{Q}_\bc(\by)) \|  \\ & \leq GL_{Q_1}\|\bx-\by\| + G_{Q_1}L\|\widetilde{Q}_\bc(\bx)-\widetilde{Q}_\bc(\by)\| \\ &\leq GL_{Q_1}\|\bx-\by\| + G_{Q_1}Ll_{Q_1}\|\bx-\by\| = (GL_{Q_1} + G_{Q_1}Ll_{Q_1})\|\bx-\by\|    
\end{align*} 
To obtain (a) we have used the fact $\|\bA\bx\|_2 \leq \|\bA\|_F\|\bx\|_2$.
\end{proof}

\begin{claim} \label{claim: lqcclaim}
\begin{align*}
\| \nabla_\bc f(\widetilde{Q}_\bc(\bx))-\nabla_\bd f(\widetilde{Q}_\bd(\bx)) \| = \| \nabla h(\bx,\bc)_{d+1:m}-\nabla h(\bx,\bd)_{d+1:m} \| \leq (GL_{Q_2} + G_{Q_2}Ll_{Q_2})\|\bc-\bd\| 
\end{align*}
\end{claim}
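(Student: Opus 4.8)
\textbf{Proof proposal for Claim~\ref{claim: lqcclaim}.}

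The plan is to mirror the argument used for Claim~\ref{claim: lqxclaim}, but now differentiating with respect to the centers $\bc$ instead of the weights $\bx$. First I would write out the chain rule for the composite function $h(\bx,\bc) = f(\widetilde{Q}_\bc(\bx))$, extracting the sub-vector of partial derivatives corresponding to the $\bc$-coordinates, which are the rows $d+1$ through $d+m$ in \eqref{lipschitz_relations1}; this gives $\nabla_\bc f(\widetilde{Q}_\bc(\bx)) = \nabla_\bc \widetilde{Q}_\bc(\bx)\,\nabla_{\widetilde{Q}_\bc(\bx)} f(\widetilde{Q}_\bc(\bx))$, where $\nabla_\bc \widetilde{Q}_\bc(\bx)$ is the $m\times d$ block of the Jacobian from \eqref{lipschitz_relations2}.

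Next I would introduce the standard ``add and subtract'' intermediate term: for centers $\bc,\bd$ (and fixed $\bx$), write
\[
\nabla_\bc f(\widetilde{Q}_\bc(\bx))-\nabla_\bd f(\widetilde{Q}_\bd(\bx))
= \nabla_\bc \widetilde{Q}_\bc(\bx)\,\nabla_{\widetilde{Q}_\bc(\bx)} f(\widetilde{Q}_\bc(\bx))
- \nabla_\bd \widetilde{Q}_\bd(\bx)\,\nabla_{\widetilde{Q}_\bd(\bx)} f(\widetilde{Q}_\bd(\bx)),
\]
and insert the cross term $\nabla_\bc \widetilde{Q}_\bc(\bx)\,\nabla_{\widetilde{Q}_\bd(\bx)} f(\widetilde{Q}_\bd(\bx))$. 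Applying the triangle inequality splits the difference into two pieces: one controlled by $\|\nabla_\bc \widetilde{Q}_\bc(\bx)\|_F$ times $\|\nabla_{\widetilde{Q}_\bc(\bx)} f(\widetilde{Q}_\bc(\bx)) - \nabla_{\widetilde{Q}_\bd(\bx)} f(\widetilde{Q}_\bd(\bx))\|$, and the other controlled by $\|\nabla_\bc \widetilde{Q}_\bc(\bx) - \nabla_\bd \widetilde{Q}_\bd(\bx)\|_F$ times $\|\nabla_{\widetilde{Q}_\bd(\bx)} f(\widetilde{Q}_\bd(\bx))\|$, using $\|\bA\bx\|_2 \le \|\bA\|_F\|\bx\|_2$ as in Claim~\ref{claim: lqxclaim}.

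Then I would bound each factor using Assumptions \textbf{A.3}, \textbf{A.4}, \textbf{A.5}: $\|\nabla f\| \le G$ (so $\|\nabla_{\widetilde{Q}_\bd(\bx)} f(\widetilde{Q}_\bd(\bx))\| \le G$); $\|\nabla_\bc \widetilde{Q}_\bc(\bx)\|_F \le G_{Q_2}$; the $L_{Q_2}$-smoothness of $\widetilde{Q}_\bc(\bx)$ w.r.t.\ $\bc$ gives $\|\nabla_\bc \widetilde{Q}_\bc(\bx) - \nabla_\bd \widetilde{Q}_\bd(\bx)\|_F \le L_{Q_2}\|\bc-\bd\|$; and $L$-smoothness of $f$ composed with the $l_{Q_2}$-Lipschitz continuity of $\widetilde{Q}_\bc(\bx)$ w.r.t.\ $\bc$ gives $\|\nabla_{\widetilde{Q}_\bc(\bx)} f(\widetilde{Q}_\bc(\bx)) - \nabla_{\widetilde{Q}_\bd(\bx)} f(\widetilde{Q}_\bd(\bx))\| \le L\|\widetilde{Q}_\bc(\bx) - \widetilde{Q}_\bd(\bx)\| \le L l_{Q_2}\|\bc-\bd\|$. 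Combining yields $\|\nabla_\bc f(\widetilde{Q}_\bc(\bx))-\nabla_\bd f(\widetilde{Q}_\bd(\bx))\| \le (G L_{Q_2} + G_{Q_2} L l_{Q_2})\|\bc-\bd\|$, as claimed. The argument is essentially routine once the chain-rule bookkeeping is set up; the only mild subtlety — and the step I would be most careful about — is keeping the matrix/vector dimensions and the choice of norm ($\|\cdot\|_F$ versus $\|\cdot\|_2$) consistent throughout, exactly as was done in the proof of Claim~\ref{claim: lqxclaim}, since a sloppy application of $\|\bA\bx\|_2 \le \|\bA\|_F\|\bx\|_2$ is where an error would most plausibly creep in.
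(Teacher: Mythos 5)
Your proposal is correct and follows essentially the same route as the paper's proof of Claim~\ref{claim: lqcclaim}: chain rule for the $\bc$-block of $\nabla h$, an add-and-subtract of a cross term, the bound $\|\bA\bx\|_2 \le \|\bA\|_F\|\bx\|_2$, and then assumptions \textbf{A.3}--\textbf{A.5} to bound each factor. The only cosmetic difference is which cross term you insert (the paper uses $\nabla_{\widetilde{Q}_\bc(\bx)} f(\widetilde{Q}_\bc(\bx))\,\nabla_\bd \widetilde{Q}_\bd(\bx)$ rather than yours), which is immaterial since both factors are bounded by the same constants.
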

\begin{proof} We can follow similar steps,
\begin{align*}
	\| \nabla h(\bx,\bc)_{d+1:m}-\nabla h(\bx,\bd)_{d+1:m} \|&=\|\nabla_\bc f(\widetilde{Q}_{\bc}(\bx))-\nabla_\bd f(\widetilde{Q}_{\bd}(\bx))\| \\ &= \| \nabla_{\widetilde{Q}_\bc(\bx)} f(\widetilde{Q}_\bc(\bx)) \nabla_\bc \widetilde{Q}_\bc(\bx) - \nabla_{\widetilde{Q}_\bd(\by)} f(\widetilde{Q}_\bd(\by)) \nabla_\bd \widetilde{Q}_\bd(\by)\| \\
	&=\| \nabla_{\widetilde{Q}_\bc(\bx)} f(\widetilde{Q}_\bc(\bx)) \nabla_\bc \widetilde{Q}_\bc(\bx)- \nabla_{\widetilde{Q}_\bc(\bx)} f(\widetilde{Q}_\bc(\bx)) \nabla_\bd \widetilde{Q}_\bd(\bx) \\ & \quad + \nabla_{\widetilde{Q}_\bc(\bx)} f(\widetilde{Q}_\bc(\bx)) \nabla_\bd \widetilde{Q}_\bd(\by) -  \nabla_{\widetilde{Q}_\bd(\bx)} f(\widetilde{Q}_\bd(\by)) \nabla_\bd \widetilde{Q}_\bd(\by)\| \\ &\leq \| \nabla_{\widetilde{Q}_\bc(\bx)} f(\widetilde{Q}_\bc(\bx)) \| \| \nabla_\bc \widetilde{Q}_\bc(\bx) - \nabla_\bd \widetilde{Q}_\bd(\by) \|_F \\ & \quad + \| \nabla_\bd \widetilde{Q}_\bd(\by) \|_F \| \nabla_{\widetilde{Q}_\bc(\bx)} f(\widetilde{Q}_\bc(\bx)) - \nabla_{\widetilde{Q}_\bd(\by)} f(\widetilde{Q}_\bd(\by)) \|  \\ & \leq GL_{Q_2}\|\bc-\bd\| + G_{Q_2}L\|\widetilde{Q}_\bc(\bx)-\widetilde{Q}_\bd(\bx)\| \\ &\leq GL_{Q_2}\|\bc-\bd\| + G_{Q_2}Ll_{Q_2}\|\bc-\bd\| = (GL_{Q_2} + G_{Q_2}Ll_{Q_2})\|\bc-\bd\|     
\end{align*} 
where $\nabla_\bc \widetilde{Q}_\bc(\bx) = \nabla \widetilde{Q}_\bc(\bx)_{(d+1:d+m,:)}$. 
\end{proof}
\subsection{Proof of the Claims Regarding Soft Quantization Function}\label{appendix:Soft Quantization}

{\begin{claim*}[Restating Claim~\ref{claimlQ_1}]
		$\widetilde{Q}_\bc(\bx)$ is $l_{Q_1}$-Lipschitz continuous and $L_{Q_1}$-smooth with respect to $\bx$.
\end{claim*}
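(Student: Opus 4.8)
The plan is to reduce everything to a one-dimensional (scalar) estimate and then lift it to the vector statement using the elementwise structure of the soft quantizer. Since $\widetilde{Q}_\bc(\bx)_i = \sum_{j=2}^m(c_j-c_{j-1})\sigma(P(x_i-\tfrac{c_j+c_{j-1}}{2}))+c_1$ depends on $\bx$ only through the single coordinate $x_i$, by Fact~\ref{sq:fact5} it suffices to show that, uniformly over $i$ and over $\bc$ in the compact box $\{|c_j|\le c_{\max}\}$, the scalar map $x_i\mapsto\widetilde{Q}_\bc(\bx)_i$ is Lipschitz with a constant $l_{Q_1}$ and has $L_{Q_1}$-Lipschitz derivative.

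For the Lipschitz part I would differentiate in $x_i$, obtaining $\partial_{x_i}\widetilde{Q}_\bc(\bx)_i=P\sum_{j=2}^m(c_j-c_{j-1})\sigma'(P(x_i-\tfrac{c_j+c_{j-1}}{2}))$, and bound it using sortedness ($c_j-c_{j-1}>0$ and $\sum_{j=2}^m(c_j-c_{j-1})=c_m-c_1\le 2c_{\max}$ by the boundedness assumption) together with $0<\sigma'\le\tfrac14$; this gives $l_{Q_1}=\tfrac{Pc_{\max}}{2}$, and the mean value theorem plus summing squares over coordinates yields $\|\widetilde{Q}_\bc(\bx)-\widetilde{Q}_\bc(\by)\|\le l_{Q_1}\|\bx-\by\|$. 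Equivalently, one can argue qualitatively from Facts~\ref{sq:fact2}--\ref{sq:fact4}: $\sigma$ is Lipschitz, each coefficient $c_j-c_{j-1}$ is a bounded constant, so each summand is Lipschitz by Fact~\ref{sq:fact4} and the finite sum is Lipschitz by Fact~\ref{sq:fact3}; Fact~\ref{sq:fact5} then finishes it.

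For the smoothness part, the key observation is that the Jacobian $\nabla_\bx\widetilde{Q}_\bc(\bx)$ is diagonal (the map is elementwise in $\bx$), so it is enough to bound the scalar second derivative $\partial_{x_i}^2\widetilde{Q}_\bc(\bx)_i=P^2\sum_{j=2}^m(c_j-c_{j-1})\sigma''(P(x_i-\tfrac{c_j+c_{j-1}}{2}))$. Using $|\sigma''|\le\tfrac14$ (again obtainable from Facts~\ref{sq:fact2}--\ref{sq:fact4}, since $\sigma'$ is bounded and Lipschitz) one gets $|\partial_{x_i}^2\widetilde{Q}_\bc(\bx)_i|\le\tfrac{P^2c_{\max}}{2}=:L_{Q_1}$, and then the mean value theorem applied coordinate-wise together with the diagonal structure of the Jacobian gives $\|\nabla_\bx\widetilde{Q}_\bc(\bx)-\nabla_\by\widetilde{Q}_\bc(\by)\|_F^2=\sum_i\big(\partial_{x_i}\widetilde{Q}_\bc(\bx)_i-\partial_{y_i}\widetilde{Q}_\bc(\by)_i\big)^2\le L_{Q_1}^2\|\bx-\by\|^2$, which is the claimed $L_{Q_1}$-smoothness.

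The calculation is routine; the only points that require care are keeping the constants $l_{Q_1},L_{Q_1}$ uniform over the compact set of admissible centers, and correctly passing from the scalar/coordinate-wise bounds to the $\ell_2$- and Frobenius-norm statements used in Assumption~\textbf{A.4} via the diagonal Jacobian. I do not anticipate any genuine obstacle.
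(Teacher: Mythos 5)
Your proposal is correct and follows essentially the same route as the paper: exploit the elementwise structure, bound the diagonal entries of the Jacobian by $\tfrac{P}{4}(c_m-c_1)\le\tfrac{P}{2}c_{\max}$ for Lipschitz continuity, and lift coordinate-wise bounds to the vector statement via Fact~\ref{sq:fact5}. The only difference is cosmetic: for smoothness the paper argues qualitatively that each Jacobian entry is a weighted sum of products of bounded Lipschitz sigmoids (Facts~\ref{sq:fact1}--\ref{sq:fact4}) and hence Lipschitz with some finite $L_{Q_1}$, whereas you make the same point quantitatively by bounding the second derivative explicitly.
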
}
\begin{proof}
First we prove Lipschitz continuity. Note,
		\begin{align} \label{grad1}
		\frac{\partial \widetilde{Q}_\bc(\bx)_i}{\partial x_j} = 
			\begin{cases}
				0, \text{ if } i \neq j\\ 
				P\sum_{j=2}^m(c_j-c_{j-1})\sigma(P(x_i-\frac{c_j + c_{j-1}}{2}))(1-\sigma(P(x_i-\frac{c_j + c_{j-1}}{2}))),  \text{ if } i = j\\ 
			\end{cases}
		\end{align}
		As a result, $\| \frac{\partial \widetilde{Q}_\bc(\bx)_i}{\partial x_j} \| \leq \frac{P}{4}(c_m-c_1)\leq \frac{P}{2}c_{max}$. The norm of the gradient of $\widetilde{Q}_\bc(\bx)_i$ with respect to $x$ is bounded which implies there exists $l^{(i)}_{Q_1}$ such that $\|\widetilde{Q}_\bc(\bx)_i-\widetilde{Q}_\bc(\bx')_i\|\leq l^{(i)}_{Q_1}\|\bx-\bx'\|$; using Fact~\ref{sq:fact5} and the fact that $i$ was arbitrary, there exists $l_{Q_1}$ such that $\|\widetilde{Q}_\bc(\bx)-\widetilde{Q}_\bc(\bx')\|\leq l_{Q_1}\|\bx-\bx'\|$. In other words, $\widetilde{Q}_\bc(\bx)$ is Lipschitz continuous.
		
		For smoothness note that, $\nabla_\bx \widetilde{Q}_\bc(\bx) = \nabla \widetilde{Q}_\bc(\bx)_{1:d,:}$. Now we focus on an arbitrary term of $\nabla_\bx \widetilde{Q}_\bc(\bx)_{j,i}$. From \eqref{grad1} we know that this term is 0 if $i \neq j$, and a weighted sum of product of sigmoid functions if $i = j$. Then, using the Facts~\ref{sq:fact1}-\ref{sq:fact4} the function $\nabla_\bx \widetilde{Q}_\bc(\bx)_{j,i}$ is Lipschitz continuous. Since $i,j$ were arbitrarily chosen, $\nabla_\bx \widetilde{Q}_\bc(\bx)_{j,i}$ is Lipschitz continuous for all $i,j$. Then, by Fact~\ref{sq:fact5},  $\nabla_\bx \widetilde{Q}_\bc(\bx)$ is Lipschitz continuous, which implies that $\widetilde{Q}_\bc(\bx)$ is $L_{Q_1}$-smooth for some coefficient $L_{Q_1} < \infty$.
\end{proof}
{\begin{claim*}[Restating Claim~\ref{claimlQ_2}]
		 $\widetilde{Q}_\bc(\bx)$ is $l_{Q_2}$-Lipschitz continuous and $L_{Q_2}$-smooth with respect to $\bc$.
\end{claim*}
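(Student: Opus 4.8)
The plan is to mirror the proof of Claim~\ref{claimlQ_1}, only now differentiating $\widetilde{Q}_\bc(\bx)$ with respect to the center coordinates $c_1,\dots,c_m$ rather than the weights. Writing $s_j := \sigma\!\big(P(x_i - \tfrac{c_j+c_{j-1}}{2})\big)$, we have $\widetilde{Q}_\bc(\bx)_i = c_1 + \sum_{j=2}^m (c_j - c_{j-1})\,s_j$. First I would compute, coordinate by coordinate, the partial derivatives $\partial \widetilde{Q}_\bc(\bx)_i / \partial c_k$: since $c_k$ enters only through the summands $j \in \{k,k+1\}$ (plus the explicit $+c_1$ when $k=1$), each such derivative is a short expression of the form $\pm s_k \pm s_{k+1} - \tfrac{P}{2}(c_k - c_{k-1})\sigma'_k - \tfrac{P}{2}(c_{k+1}-c_k)\sigma'_{k+1}$, with the evident modifications at the endpoints $k=1,m$, where $\sigma'(\cdot)=\sigma(\cdot)(1-\sigma(\cdot))\in[0,\tfrac14]$.

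For Lipschitz continuity in $\bc$, I would simply bound each of these partial derivatives: using $|s_j|\le 1$, $|\sigma'_j|\le \tfrac14$, and the compactness assumption $|c_j|\le c_{\max}$ (so $|c_j - c_{j-1}|\le 2c_{\max}$), every entry of the Jacobian $\nabla_\bc \widetilde{Q}_\bc(\bx)$ is bounded by a constant depending only on $P$ and $c_{\max}$. A bounded Jacobian gives coordinate-wise Lipschitz continuity of $\widetilde{Q}_\bc(\bx)$ in $\bc$, and Fact~\ref{sq:fact5} upgrades this to overall $l_{Q_2}$-Lipschitz continuity.

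For smoothness I would argue that $\nabla_\bc \widetilde{Q}_\bc(\bx)$ is itself Lipschitz in $\bc$. From the explicit formula above, each entry is a finite sum of: constants; sigmoid values $s_j$, which are Lipschitz in $\bc$ because $\bc \mapsto P(x_i - \tfrac{c_j+c_{j-1}}{2})$ is affine and $\sigma$ is Lipschitz (Facts~\ref{sq:fact2}, \ref{sq:fact3}); and products $(c_j - c_{j-1})\,\sigma'_j$, where $\sigma'_j = s_j(1-s_j)$ is bounded and Lipschitz (a product of bounded Lipschitz functions, Fact~\ref{sq:fact4}) and $c_j - c_{j-1}$ is affine, hence Lipschitz and, on the compact center set, bounded. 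Fact~\ref{sq:fact4} then makes each such product Lipschitz, Fact~\ref{sq:fact3} handles the finite sums, and Fact~\ref{sq:fact5} concludes that the whole Jacobian is Lipschitz in $\bc$; this is exactly $L_{Q_2}$-smoothness of $\widetilde{Q}_\bc(\bx)$ with respect to $\bc$ for some finite $L_{Q_2}$.

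The only real subtlety, and the step I would be most careful about, is that the factors $c_j - c_{j-1}$ appearing in the Jacobian are not a priori bounded, so Fact~\ref{sq:fact4} cannot be applied to them directly; it is precisely the compactness assumption $|c_j|\le c_{\max}$ stated just before the claim that rescues the argument, bounding these factors by $2c_{\max}$ and keeping the relevant products Lipschitz. Everything else is the routine bookkeeping of differentiating a finite sum of sigmoids and invoking Facts~\ref{sq:fact1}--\ref{sq:fact5}.
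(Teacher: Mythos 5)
Your proposal is correct and follows essentially the same route as the paper's proof in Appendix~A: compute $\partial \widetilde{Q}_\bc(\bx)_i/\partial c_k$ explicitly, bound each entry of the Jacobian using $|\sigma|\le 1$, $|\sigma'|\le \tfrac14$ and the compactness assumption $|c_j|\le c_{\max}$ to get Lipschitz continuity via Fact~\ref{sq:fact5}, and then obtain smoothness by showing each Jacobian entry is itself Lipschitz through Facts~\ref{sq:fact1}--\ref{sq:fact5}. Your explicit flagging of the role of the compactness assumption in bounding the factors $c_j-c_{j-1}$ is the one point the paper leaves implicit, but the argument is the same.
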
}
\begin{proof}
	For Lipschitz continuity we have,
		\begin{align*}
		\frac{\partial \widetilde{Q}_\bc(\bx)_i}{\partial c_j} & = 
				\sigma(P(x_i-\frac{c_j + c_{j-1}}{2}))-\sigma(P(x_i-\frac{c_j + c_{j+1}}{2}))\\
				& \quad +(c_j-c_{j+1})\frac{P}{2}\sigma(P(x_i-\frac{c_j + c_{j+1}}{2}))(1-\sigma(P(x_i-\frac{c_j + c_{j+1}}{2}))) \\
				& \quad - (c_j-c_{j-1})\frac{P}{2}\sigma(P(x_i-\frac{c_j + c_{j-1}}{2}))(1-\sigma(P(x_i-\frac{c_j + c_{j-1}}{2})))
		\end{align*}
		As a result, $\| \frac{\partial \widetilde{Q}_\bc(\bx)_i}{\partial c_j} \| \leq 2 + c_{max}\frac{P}{2}$. Similar to Claim~\ref{claimlQ_1} using the facts that $i$ is arbitrary and the Fact~\ref{sq:fact5}, we find there exists $l_{Q_2}$ such that $\|\widetilde{Q}_\bc(\bx)-\widetilde{Q}_\bd(\bx)\|\leq l_{Q_1}\|\bc-\bd\|$. In other words, $\widetilde{Q}_\bc(\bx)$ is Lipschitz continuous. And for the smoothness, following the same idea from the proof of Claim \ref{claimlQ_2} we find $\widetilde{Q}_\bc(\bx)$ is $L_{Q_2}$-smooth with respect to $\bc$.
\end{proof}
}




\section{Omitted Details from Section~\ref{sec:proof_centralized} -- Proof of Theorem~\ref{thm:centralized}}\label{appendix:proof of theorem 1}

First we derive the optimization problems that the alternating updates correspond to. Remember we had the following alternating updates:
\begin{align*}
    &\bx^{t+1} = \text{prox}_{\eta_1\lambda R_{\bc^{t}}}(\bx^{t} - \eta_1 \nabla f(\bx^{t})-\eta_1 \nabla_{\bx^{t}} f(\widetilde{Q}_{\bc^{t}}(\bx^{t})) )\\
    &\bc^{t+1} = \text{prox}_{\eta_2\lambda R_{\bx^{t+1}}}(\bc^{t} - \eta_2 \nabla_{\bc^{t}} f(\widetilde{Q}_{\bc^{t}}(\bx^{t+1})))
\end{align*}
For $\bx^{t+1}$, from the definition of proximal mapping we have:
\begin{align}\label{app:quantized argmin1}
\bx^{t+1}&=\underset{\bx \in \mathbb{R}^{d}}{\arg \min }\left\{\frac{1}{2 \eta_1}\left\|\bx-\bx^{t}+\eta_1 \nabla_{\bx^{t}} f\left(\bx^{t}\right)+\eta_1 \nabla_{\bx^{t}} f(\widetilde{Q}_{\bc^{t}}(\bx^{t}))\right\|_{2}^{2}+\lambda R(\bx,\bc^{t})\right\}\nonumber \\\nonumber
&=\underset{\bx \in \mathbb{R}^{d}}{\arg \min } \Big\{ \left\langle \bx-\bx^{t}, \nabla_{\bx^{t}} f\left(\bx^{t}\right)\right\rangle+\left\langle \bx-\bx^{t}, \nabla_{\bx^{t}} f(\widetilde{Q}_{\bc^{t}}(\bx^{t}))\right\rangle+\frac{1}{2 \eta_1}\left\|\bx-\bx^{t}\right\|_{2}^{2}+\frac{\eta_1}{2}\|\nabla_{\bx^{t}} f(\bx^{t})+\nabla_{\bx^{t}} f(\widetilde{Q}_{\bc^{t}}(\bx^{t}))\|^2 \\\nonumber
&+\lambda R(\bx,\bc^{t})\Big\} \\
&=\underset{\bx \in \mathbb{R}^{d}}{\arg \min }\left\{\left\langle \bx-\bx^{t}, \nabla_{\bx^{t}} f\left(\bx^{t}\right)\right\rangle+\left\langle \bx-\bx^{t}, \nabla_{\bx^{t}} f(\widetilde{Q}_{\bc^{t}}(\bx^{t}))\right\rangle+\frac{1}{2 \eta_1}\left\|\bx-\bx^{t}\right\|_{2}^{2}+\lambda R(\bx,\bc^{t})\right\}
\end{align}
Note, in the third equality we remove the terms that do not depend on $\bx$. Similarly, for $\bc^{t+1}$ we have:
\begin{align} \label{app:quantized argmin2}
	\bc^{t+1}&=\underset{\bc \in \mathbb{R}^{m}}{\arg \min }\left\{\frac{1}{2 \eta_2}\left\|\bc-\bc^{t}+\eta_2 \nabla_{\bc^{t}} f(\widetilde{Q}_{\bc^{t}}(\bx^{t+1}))\right\|_{2}^{2}+\lambda R(\bx^{t+1},\bc)\right\}\nonumber \\ \nonumber
	&= \underset{\bc \in \mathbb{R}^{m}}{\arg \min } \left\{ \left\langle \bc-\bc^{t}, \nabla_{\bc^{t}} f(\widetilde{Q}_{\bc^{t}}(\bx^{t+1}))\right\rangle+\frac{1}{2 \eta_2}\left\|\bc-\bc^{t}\right\|_{2}^{2}+\frac{\eta_2}{2}\|\nabla_{\bc^{t}} f(\widetilde{Q}_{\bc^{t}}(\bx^{t+1}))\|^2 + \lambda R(\bx^{t+1},\bc)  \right\}\\
	&= \underset{\bc \in \mathbb{R}^{m}}{\arg \min }\left\{\left\langle \bc-\bc^{t}, \nabla_{\bc^{t}} f(\widetilde{Q}_{\bc^{t}}(\bx^{t+1}))\right\rangle+\frac{1}{2 \eta_2}\left\|\bc-\bc^{t}\right\|_{2}^{2}+\lambda R(\bx^{t+1},\bc)\right\}
\end{align}
Minimization problems in \eqref{app:quantized argmin1} and \eqref{app:quantized argmin2} are the main problems to characterize the update rules and we use them in multiple places throughout the section.
\subsection{Proof of the Claims}
{\begin{claim*}[Restating Claim \ref{claim: lqxsmooth}]
$f(\bx)+ f(\widetilde{Q}_\bc(\bx))$ is $(L+GL_{Q_1}+G_{Q_1}LL_{Q_1})$-smooth with respect to $\bx$.
\end{claim*}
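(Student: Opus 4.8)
The plan is to use that a differentiable function whose gradient is $M$-Lipschitz is $M$-smooth, together with the fact that Lipschitz constants of gradients add under sums. Write $g(\bx) := f(\bx) + f(\widetilde{Q}_\bc(\bx))$ (with $\bc$ fixed). By the chain rule recorded in the Preliminaries, $\nabla_\bx g(\bx) = \nabla f(\bx) + \nabla_\bx f(\widetilde{Q}_\bc(\bx))$, where $\nabla_\bx f(\widetilde{Q}_\bc(\bx)) = \nabla_\bx \widetilde{Q}_\bc(\bx)\,\nabla_{\widetilde{Q}_\bc(\bx)} f(\widetilde{Q}_\bc(\bx))$. So to control $g$ it suffices to control the Lipschitz constant of each of the two summands of $\nabla_\bx g$ and add them via the triangle inequality.

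First I would handle the term $\nabla f(\bx)$: assumption \textbf{A.2} gives $\|\nabla f(\bx) - \nabla f(\by)\| \le L\|\bx-\by\|$ for all $\bx,\by\in\R^d$. Next I would handle the composite term by directly invoking Claim~\ref{claim: lqxclaim}, which already establishes $\|\nabla_\bx f(\widetilde{Q}_\bc(\bx)) - \nabla_\by f(\widetilde{Q}_\bc(\by))\| \le (GL_{Q_1} + G_{Q_1}Ll_{Q_1})\|\bx-\by\|$. (That claim is proved in Appendix~\ref{appendix:Preliminaries} by adding and subtracting $\nabla_{\widetilde{Q}_\bc(\bx)} f(\widetilde{Q}_\bc(\bx))\,\nabla_\by \widetilde{Q}_\bc(\by)$ and bounding each resulting difference using one Lipschitz/smoothness property and one boundedness property: $\|\nabla f\|\le G$ from \textbf{A.3}, $\|\nabla_\bx\widetilde{Q}_\bc(\bx)\|_F\le G_{Q_1}$ from \textbf{A.5}, $L$-smoothness of $f$ from \textbf{A.2}, and the $l_{Q_1}$-Lipschitz / $L_{Q_1}$-smooth properties of $\widetilde{Q}_\bc$ from \textbf{A.4}, together with $\|\bA\bx\|_2\le\|\bA\|_F\|\bx\|_2$.)

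Combining the two bounds by the triangle inequality yields $\|\nabla_\bx g(\bx) - \nabla_\bx g(\by)\| \le (L + GL_{Q_1} + G_{Q_1}Ll_{Q_1})\|\bx-\by\|$, i.e.\ $\nabla_\bx g$ is Lipschitz with this constant; the standard descent-lemma equivalence then gives the quadratic upper bound, establishing $(L + GL_{Q_1} + G_{Q_1}Ll_{Q_1})$-smoothness of $g$ in $\bx$ (the mismatch between $l_{Q_1}$ here and the $L_{Q_1}$ written in the claim statement is the paper's harmless notational slip and can be read as the same constant). There is no real obstacle: all the analytic content is in Claim~\ref{claim: lqxclaim}, and the only care point is the bookkeeping of the chain rule and making sure the composite term's Lipschitz constant is attributed correctly; the present claim is just the additive combination of that with \textbf{A.2}.
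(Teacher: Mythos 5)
Your proposal is correct and follows essentially the same route as the paper: the paper's own proof likewise invokes the $L$-smoothness of $f$ from \textbf{A.2}, cites Claim~\ref{claim: lqxclaim} for the $(GL_{Q_1}+G_{Q_1}Ll_{Q_1})$-smoothness of the composite term, and concludes by additivity of smoothness constants. Your additional remark about the $l_{Q_1}$ versus $L_{Q_1}$ notational inconsistency in the stated constant is a fair observation but does not change the argument.
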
}
\begin{proof}
	From our assumptions, we have $f$ is $L$-smooth. And from Claim \ref{claim: lqxclaim} we have $f(\widetilde{Q}_\bc(\bx))$ is $(GL_{Q_1}+G_{Q_1}LL_{Q_1})$-smooth. Using the fact that if two functions $g_1$ and $g_2$ are $L_1$ and $L_2$ smooth respectively, then $g_1+g_2$ is $(L_1+L_2)$-smooth concludes the proof.
\end{proof}

{\begin{claim*}[Restating Claim \ref{claim:quantization lower bound 1}]
Let
\begin{align*}
A(\bx^{t+1}) &:= \lambda R(\bx^{t+1},\bc^{t})+\left\langle \nabla f(\bx^{t}), \bx^{t+1}-\bx^{t}\right\rangle 
+ \left\langle \nabla_{\bx^{t}} f(\widetilde{Q}_{\bc^{t}}(\bx^{t})), \bx^{t+1}-\bx^{t}\right\rangle \notag + \frac{1}{2\eta_1}\|\bx^{t+1}-\bx^{t}\|^2 \notag \\
A(\bx^{t}) &:= \lambda R(\bx^{t},\bc^{t}).
\end{align*} 
Then $A(\bx^{t+1})\leq A(\bx^{t})$.
\end{claim*}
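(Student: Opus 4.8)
The plan is to recognize that both $A(\bx^{t+1})$ and $A(\bx^t)$ are exactly the values of the objective function minimized in the proximal update \eqref{app:quantized argmin1} (equivalently \eqref{thm1:optimization prob for x}), evaluated at $\bx=\bx^{t+1}$ and $\bx=\bx^t$ respectively. Concretely, recall that $\bx^{t+1}$ is defined as the argmin over $\bx\in\R^d$ of
\[
\Phi(\bx) := \left\langle \bx-\bx^{t}, \nabla f(\bx^{t})\right\rangle+\left\langle \bx-\bx^{t}, \nabla_{\bx^{t}} f(\widetilde{Q}_{\bc^{t}}(\bx^{t}))\right\rangle+\frac{1}{2\eta_1}\|\bx-\bx^{t}\|^2+\lambda R(\bx,\bc^{t}).
\]
First I would observe that $A(\bx^{t+1})=\Phi(\bx^{t+1})$ directly by matching terms. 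Then I would evaluate $\Phi$ at $\bx=\bx^t$: the two inner-product terms vanish because $\bx^t-\bx^t=\bzero$, the quadratic term vanishes for the same reason, and we are left with $\Phi(\bx^t)=\lambda R(\bx^t,\bc^t)=A(\bx^t)$.

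The key step is then simply to invoke optimality of $\bx^{t+1}$: since $\bx^{t+1}=\argmin_{\bx}\Phi(\bx)$, we have $\Phi(\bx^{t+1})\leq\Phi(\bx^t)$, which is precisely $A(\bx^{t+1})\leq A(\bx^t)$. This is a one-line argument once the identifications are made.

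There is essentially no obstacle here — the claim is a bookkeeping lemma that repackages the defining property of the proximal map. The only thing to be careful about is confirming that the expression for $A(\bx^{t+1})$ matches term-for-term the objective in \eqref{app:quantized argmin1}, including the fact that both use $\nabla_{\bx^t}f(\widetilde{Q}_{\bc^t}(\bx^t))$ evaluated at the \emph{old} iterate $\bx^t$ (not $\bx^{t+1}$), which is indeed how the proximal-gradient step is set up. If one wanted to be fully rigorous about whether the minimizer is attained (so that $\argmin$ is well-defined), one could note that $\Phi$ is coercive due to the $\frac{1}{2\eta_1}\|\bx-\bx^t\|^2$ term and lower semicontinuous (since $R$ is continuous under the $\ell_1$-type choice), but for the purposes of this proof it suffices that $\bx^{t+1}$ is \emph{defined} to be a minimizer, so $\Phi(\bx^{t+1})\leq\Phi(\bx)$ for all $\bx$, in particular for $\bx=\bx^t$.
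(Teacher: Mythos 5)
Your proposal is correct and is essentially identical to the paper's own proof: both identify $A(\bx^{t+1})$ and $A(\bx^{t})$ as the values of the proximal objective in \eqref{thm1:optimization prob for x} at $\bx^{t+1}$ and $\bx^{t}$, and conclude by the minimality of $\bx^{t+1}$. Your added remarks on the vanishing terms at $\bx=\bx^{t}$ and on attainment of the minimizer are fine but not needed.
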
}
\begin{proof}
	Let $A(\bx)$ denote the expression inside the $\arg\min$ in \eqref{app:quantized argmin1} and we know that \eqref{app:quantized argmin1} is minimized when $\bx=\bx^{t+1}$. So we have $A(\bx^{t+1})\leq A(\bx^t)$. This proves the claim.
\end{proof}
{\begin{claim*}[Restating Claim \ref{claim:quantization lower bound 2}]
	Let
	\begin{align*}
		B(\bc^{t+1}) &:= \lambda R(\bx^{t+1},\bc^{t+1}) + \left\langle \nabla_{\bc^{t}} f(\widetilde{Q}_{\bc^{t}}(\bx^{t+1})), \bc^{t+1}-\bc^{t}\right\rangle \notag + \frac{1}{2\eta_1}\|\bc^{t+1}-\bc^{t}\|^2 \notag \\
		B(\bc^{t}) &:= \lambda R(\bx^{t+1},\bc^{t}).
	\end{align*} 
	Then $B(\bc^{t+1})\leq B(\bc^{t})$.	
\end{claim*}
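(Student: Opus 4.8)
The plan is to reuse the one-line optimality argument that established Claim~\ref{claim:quantization lower bound 1}, now applied to the $\bc$-update in place of the $\bx$-update. Recall from \eqref{app:quantized argmin2} that the proximal step producing $\bc^{t+1}$ is equivalent to minimizing over $\bc\in\mathbb{R}^m$ the function
\begin{align*}
B(\bc) := \big\langle \bc-\bc^{t},\, \nabla_{\bc^{t}} f(\widetilde{Q}_{\bc^{t}}(\bx^{t+1}))\big\rangle + \frac{1}{2\eta_2}\|\bc-\bc^{t}\|^2 + \lambda R(\bx^{t+1},\bc).
\end{align*}
First I would note that, by the definition of $\bc^{t+1}$ as a minimizer of $B$, we have $B(\bc^{t+1})\le B(\bc)$ for every $\bc\in\mathbb{R}^m$, and in particular $B(\bc^{t+1})\le B(\bc^{t})$.

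Next I would evaluate the right-hand side: at $\bc=\bc^{t}$ the inner-product term vanishes (its first argument is $\bc^{t}-\bc^{t}=0$) and the quadratic term vanishes as well, so $B(\bc^{t}) = \lambda R(\bx^{t+1},\bc^{t})$, which is exactly the quantity the claim denotes $B(\bc^{t})$. Likewise $B(\bc^{t+1})$ is precisely the quantity the claim denotes $B(\bc^{t+1})$, reading the coefficient of $\|\bc^{t+1}-\bc^{t}\|^2$ there as $\tfrac{1}{2\eta_2}$ (the step size used for the centers in Algorithm~\ref{algo:centralized} and in \eqref{app:quantized argmin2}). Combining the two observations yields $B(\bc^{t+1})\le \lambda R(\bx^{t+1},\bc^{t})$, i.e.\ the asserted inequality.

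I do not expect any real obstacle here: the whole content is the optimality of $\bc^{t+1}$ for the subproblem \eqref{app:quantized argmin2}, exactly mirroring the proof of Claim~\ref{claim:quantization lower bound 1}. The only mildly fiddly point is bookkeeping --- matching the argmin objective of \eqref{app:quantized argmin2} (derived earlier in this appendix) term by term with the expressions named $B(\bc^{t+1})$ and $B(\bc^{t})$ in the claim --- after which the proof is a single line. The identical reasoning also proves the personalized analogue, Claim~\ref{thm2:claim:lower-bound2}.
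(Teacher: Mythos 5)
Your proof is correct and is essentially identical to the paper's own one-line argument: define $B(\bc)$ as the objective of the subproblem \eqref{app:quantized argmin2}, use optimality of $\bc^{t+1}$ to get $B(\bc^{t+1})\le B(\bc^{t})$, and observe that $B(\bc^{t})$ collapses to $\lambda R(\bx^{t+1},\bc^{t})$. You also correctly read the $\tfrac{1}{2\eta_1}$ in the claim's statement as the typo it is (it should be $\tfrac{1}{2\eta_2}$, matching the centers' step size).
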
}
\begin{proof}
	Let $B(\bc)$ denote the expression inside the $\arg\min$ in \eqref{app:quantized argmin2} and we know that \eqref{app:quantized argmin2} is minimized when $\bc=\bc^{t+1}$. So we have $B(\bc^{t+1})\leq B(\bc^t)$. This proves the claim.
\end{proof}



\section{Omitted Details from Section~\ref{sec:proof_personalized} -- Proof of Theorem~\ref{thm:personalized}}\label{appendix:proof of theorem 2}
\allowdisplaybreaks{
Again we begin with deriving the optimization problems that alternating proximal updates correspond to.
The update rule for $\bx^t_{i}$ is
\begin{align}
	\bx^{t+1}_{i} &= \text{prox}_{\eta_1 \lambda R_{\bc^t_{i}}}(\bx^t_{i}-\eta_1 \nabla f_i(\bx^t_{i})- \eta_1 \nabla_{\bx^t_{i}} f_i(\widetilde{Q}_{\bc^t_{i}}(\bx^t_{i}))-\eta_1 \lambda_p(\bx^t_{i}-\bw^{t}_{i}) ) \notag \\
	&= \underset{\bx \in \mathbb{R}^{d}}{\arg \min } \left\{\frac{1}{2\eta_1}\left\|\bx-\bx^t_{i}+\eta_1 \nabla f_i(\bx^t_{i}) + \eta_1 \nabla_{\bx^t_{i}} f_i(\widetilde{Q}_{\bc^t_{i}}(\bx^t_{i})) + \eta_1 \lambda_p(\bx^t_{i}-\bw^{t}_{i}) \right\|^2  + \lambda R(\bx,\bc^t_{i})\right\} \notag \\
	&= \underset{\bx \in \mathbb{R}^{d}}{\arg \min }\left\{\left\langle \bx-\bx^t_{i}, \nabla f_i\left(\bx^t_{i}\right)\right\rangle+\left\langle \bx-\bx^t_{i}, \nabla_{\bx^t_{i}} f_i(\widetilde{Q}_{\bc^t_{i}}(\bx^t_{i}))\right\rangle +\left\langle \bx-\bx^t_{i}, \lambda_p (\bx^t_{i}-\bw^{t}_{i})\right\rangle \right. \notag \\
	&\hspace{9cm} \left. +\frac{1}{2 \eta_1}\left\|\bx-\bx^t_{i}\right\|_{2}^{2}+\lambda R(\bx,\bc^t_{i})\right\} \label{app:lower-bounding-interim1}
\end{align}
and the update rule for $\bc^t_{i}$ is
\begin{align}
	\bc^{t+1}_{i} &= \text{prox}_{\eta_2 \lambda R_{\bx^{t+1}_{i}}}(\bc^t_{i}-\eta_2 \nabla_{\bc^t_{i}} f_i(\widetilde{Q}_{\bc^t_{i}}(\bx^t_{i}))) \notag \\
	&= \underset{\bc \in \mathbb{R}^{m}}{\arg \min } \left\{\frac{1}{2\eta_2}\left\|\bc-\bc^t_{i}+ \eta_2 \nabla_{\bc^t_{i}} f_i(\widetilde{Q}_{\bc^t_{i}}(\bx^{t+1}_{i})) \right\|^2  + \lambda R(\bx^{t+1}_{i},\bc)\right\} \notag \\
	&= \underset{\bc \in \mathbb{R}^{m}}{\arg \min }\left\{\left\langle \bc-\bc^t_{i}, \nabla_{\bc^t_{i}} f_i(\widetilde{Q}_{\bc^t_{i}}(\bx^{t+1}_{i}))\right\rangle  \right.  \left. +\frac{1}{2 \eta_2}\left\|\bc-\bc^t_{i}\right\|_{2}^{2}+\lambda R(\bx^{t+1}_{i},\bc)\right\} \label{app:lower-bounding-interim2}
\end{align}
\subsection{Proof of the Claims}
{\begin{claim*}[Restating Claim~\ref{thm2:claim lqxsmoothness}]
	$f_i(\bx)+ f_i(\widetilde{Q}_\bc(\bx))+\frac{\lambda_p}{2}\|\bx-\bw\|^2$ is $(\lambda_p+L+GL_{Q_1}+G_{Q_1}LL_{Q_1})$-smooth with respect to $\bx$.
\end{claim*}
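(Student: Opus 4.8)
The plan is to mirror the argument of Claim~\ref{claim: lqxsmooth}, simply accounting for the extra quadratic penalty term. The statement is additive in the sense that the function decomposes as a sum of three pieces, each of which is smooth with respect to $\bx$ with a known constant, and the smoothness constant of a sum is the sum of the individual constants.

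First I would note that, by assumption \textbf{A.2} applied to client $i$, the term $f_i(\bx)$ is $L$-smooth in $\bx$. Next, by Claim~\ref{claim: lqxclaim} (which holds verbatim with $f$ replaced by $f_i$, since \textbf{A.3} and \textbf{A.5} are imposed per client), the composite term $f_i(\widetilde{Q}_\bc(\bx))$ has Lipschitz gradient in $\bx$ with constant $GL_{Q_1} + G_{Q_1}Ll_{Q_1}$, i.e.\ it is $(GL_{Q_1}+G_{Q_1}LL_{Q_1})$-smooth with respect to $\bx$. Finally, the penalty term $\frac{\lambda_p}{2}\|\bx-\bw\|^2$ has gradient $\lambda_p(\bx-\bw)$ with respect to $\bx$, so $\|\nabla_\bx \tfrac{\lambda_p}{2}\|\bx-\bw\|^2 - \nabla_{\by}\tfrac{\lambda_p}{2}\|\by-\bw\|^2\| = \lambda_p\|\bx-\by\|$, which makes it $\lambda_p$-smooth in $\bx$.

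Then I would invoke the elementary fact (already used in the proof of Claim~\ref{claim: lqxsmooth}) that if $g_1$ is $L_1$-smooth and $g_2$ is $L_2$-smooth, then $g_1+g_2$ is $(L_1+L_2)$-smooth; applying this twice to the three pieces above yields that $f_i(\bx)+f_i(\widetilde{Q}_\bc(\bx))+\frac{\lambda_p}{2}\|\bx-\bw\|^2$ is $(\lambda_p+L+GL_{Q_1}+G_{Q_1}LL_{Q_1})$-smooth with respect to $\bx$, as claimed.

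There is no real obstacle here: the only thing to be careful about is confirming that the per-client versions of assumptions \textbf{A.2}--\textbf{A.5} are what licenses the use of Claim~\ref{claim: lqxclaim} with $f_i$ in place of $f$, and that the soft quantizer constants $l_{Q_1}, L_{Q_1}, G_{Q_1}$ are understood to be (upper bounds over) the client-specific constants. Everything else is the trivial observation that the added term is a quadratic with Hessian $\lambda_p I$.
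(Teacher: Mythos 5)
Your proposal is correct and matches the paper's own proof essentially verbatim: both decompose the objective into the three summands, invoke $L$-smoothness of $f_i$ from \textbf{A.2}, apply Claim~\ref{claim: lqxclaim} per client to get the $(GL_{Q_1}+G_{Q_1}Ll_{Q_1})$-smoothness of the composite term, observe the quadratic penalty is $\lambda_p$-smooth, and conclude by additivity of smoothness constants. No further comment is needed.
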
}
\begin{proof}
	From our assumptions, we have $f_i$ is $L$-smooth. We know $\frac{\lambda_p}{2}\|\bx-\bw\|^2$ is $\lambda_p$-smooth with respect to $\bx$. And applying the Claim \ref{claim: lqxclaim} to each client separately gives that $f_i(\widetilde{Q}_\bc(\bx))$ is $(GL_{Q_1}+G_{Q_1}LL_{Q_1})$-smooth. Using the fact that if two functions $g_1$ and $g_2$ are $L_1$ and $L_2$ smooth respectively, then $g_1+g_2$ is $(L_1+L_2)$-smooth concludes the proof.
\end{proof}
{\begin{claim*}[Restating Claim~\ref{thm2:claim:lower-bound1}]
Let
	\begin{align*}
		A(\bx^{t+1}_{i}) &:= \lambda R(\bx^{t+1}_{i},\bc^t_{i})+\left\langle \nabla f_i(\bx^t_{i}), \bx^{t+1}_{i}-\bx^t_{i}\right\rangle 
		+ \left\langle \nabla_{\bx^t_{i}} f_i(\widetilde{Q}_{\bc^t_{i}}(\bx^t_{i})), \bx^{t+1}_{i}-\bx^t_{i}\right\rangle \notag \\
		&\hspace{3cm} +\left\langle \lambda_p(\bx^t_{i}-\bw^{t}_{i}), \bx^{t+1}_{i}-\bx^t_{i}\right\rangle + \frac{1}{2\eta_1}\|\bx^{t+1}_{i}-\bx^t_{i}\|^2 \notag \\
		A(\bx^t_{i}) &:= \lambda R(\bx^t_{i},\bc^t_{i}).
	\end{align*} 
	Then $A(\bx^{t+1}_{i})\leq A(\bx^t_{i})$.
\end{claim*}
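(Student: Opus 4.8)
The plan is to recognize that this claim is nothing more than the optimality of the proximal $\bx$-update, rewritten as a one-step descent inequality; it is the direct analogue of Claim~\ref{claim:quantization lower bound 1} from the centralized proof, carrying along the extra personalization term. First I would set $A(\bx)$ to be exactly the function of $\bx$ appearing inside the $\argmin$ in \eqref{app:lower-bounding-interim1}, namely
\[
A(\bx) := \left\langle \bx-\bx^t_{i}, \nabla f_i(\bx^t_{i})\right\rangle+\left\langle \bx-\bx^t_{i}, \nabla_{\bx^t_{i}} f_i(\widetilde{Q}_{\bc^t_{i}}(\bx^t_{i}))\right\rangle +\left\langle \bx-\bx^t_{i}, \lambda_p (\bx^t_{i}-\bw^{t}_{i})\right\rangle +\frac{1}{2 \eta_1}\left\|\bx-\bx^t_{i}\right\|_{2}^{2}+\lambda R(\bx,\bc^t_{i}).
\]
Evaluating this at $\bx = \bx^{t+1}_{i}$ reproduces exactly the quantity called $A(\bx^{t+1}_{i})$ in the statement. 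Since by \eqref{app:lower-bounding-interim1} the point $\bx^{t+1}_{i}$ is a minimizer of $A(\cdot)$ over $\mathbb{R}^d$, we have $A(\bx^{t+1}_{i}) \le A(\bx)$ for every $\bx$; specializing to the competitor $\bx = \bx^t_{i}$ gives $A(\bx^{t+1}_{i}) \le A(\bx^t_{i})$.

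Next I would simply compute $A(\bx^t_{i})$ directly from the display above: each of the three inner products has second argument $\bx^t_{i} - \bx^t_{i} = 0$ and hence vanishes, the quadratic term $\tfrac{1}{2\eta_1}\|\bx^t_{i} - \bx^t_{i}\|^2$ vanishes, and what remains is $\lambda R(\bx^t_{i}, \bc^t_{i})$, which is precisely the definition of $A(\bx^t_{i})$ given in the claim. Chaining the two facts yields $A(\bx^{t+1}_{i}) \le \lambda R(\bx^t_{i}, \bc^t_{i}) = A(\bx^t_{i})$, which is the assertion.

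I do not expect any real obstacle here. The only point that needs a line of justification — and it is the exact analogue of the manipulation carried out in \eqref{app:quantized argmin1} for the centralized setting — is that passing from the literal proximal objective $\tfrac{1}{2\eta_1}\|\bx - \bx^t_{i} + \eta_1(\nabla f_i(\bx^t_{i}) + \nabla_{\bx^t_{i}} f_i(\widetilde{Q}_{\bc^t_{i}}(\bx^t_{i})) + \lambda_p(\bx^t_{i} - \bw^{t}_{i}))\|^2 + \lambda R(\bx,\bc^t_{i})$ to the expanded form of $A(\bx)$ used above only drops terms that do not depend on $\bx$, so the $\argmin$ is unchanged. This expansion is already recorded in \eqref{app:lower-bounding-interim1}, so effectively the proof is a two-line argument: "$\bx^{t+1}_{i}$ minimizes $A$, and plugging in $\bx^t_{i}$ kills every term except $\lambda R(\bx^t_{i},\bc^t_{i})$."
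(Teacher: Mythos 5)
Your proposal is correct and is essentially the paper's own argument: the paper likewise defines $A(\bx)$ as the expression inside the $\arg\min$ in \eqref{app:lower-bounding-interim1}, invokes the optimality of $\bx^{t+1}_{i}$, and observes that evaluating at $\bx=\bx^t_{i}$ leaves only $\lambda R(\bx^t_{i},\bc^t_{i})$. Your additional remark that the passage from the literal proximal objective to the expanded form of $A$ only drops $\bx$-independent terms is exactly the derivation the paper records in \eqref{app:lower-bounding-interim1}.
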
}
\begin{proof}
	Let $A(\bx)$ denote the expression inside the $\arg\min$ in \eqref{app:lower-bounding-interim1} and we know that \eqref{app:lower-bounding-interim1} is minimized when $\bx=\bx^{t+1}_{i}$. So we have $A(\bx^{t+1}_{i})\leq A(\bx^t_{i})$. This proves the claim.
\end{proof}
\begin{claim*}[Restating Claim~\ref{thm2:claim lqcsmoothness}]
 $(GL_{Q_2}+G_{Q_2}LL_{Q_2})$-smooth with respect to $\bc$.
\end{claim*}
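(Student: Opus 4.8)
The plan is to prove this exactly as the $\bc$-smoothness claim was handled in the centralized setting, namely by reducing it to Claim~\ref{claim: lqcclaim} from the Preliminaries. Recall that for a differentiable function, being $L$-smooth with respect to a variable is equivalent to that variable's partial gradient being $L$-Lipschitz. So it suffices to exhibit a Lipschitz bound on $\bc \mapsto \nabla_\bc f_i(\widetilde{Q}_\bc(\bx))$.

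First I would observe that Claim~\ref{claim: lqcclaim} was proved for a generic $f$ satisfying assumptions \textbf{A.1--A.5}: it shows
\[
\| \nabla_\bc f(\widetilde{Q}_\bc(\bx))-\nabla_\bd f(\widetilde{Q}_\bd(\bx)) \| \leq (GL_{Q_2} + G_{Q_2}Ll_{Q_2})\|\bc-\bd\|,
\]
using only the $L$-smoothness and bounded gradient of $f$, together with the Lipschitz and smoothness properties of $\widetilde{Q}_\bc(\bx)$ with respect to $\bc$ and the matrix inequality $\|\bA\bx\|_2 \le \|\bA\|_F\|\bx\|_2$ via an add-and-subtract argument. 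Since in the personalized setting assumptions \textbf{A.1--A.5} are imposed on each client's objective $f_i$ (and on each client's soft quantizer), the very same chain of inequalities goes through verbatim with $f_i$ in place of $f$, yielding the identical bound with constant $GL_{Q_2}+G_{Q_2}Ll_{Q_2}$. Translating back, $\bc \mapsto f_i(\widetilde{Q}_\bc(\bx))$ has $(GL_{Q_2}+G_{Q_2}Ll_{Q_2})$-Lipschitz gradient, i.e.\ it is $(GL_{Q_2}+G_{Q_2}Ll_{Q_2})$-smooth with respect to $\bc$, which is the claim (the appearance of $LL_{Q_2}$ versus $Ll_{Q_2}$ in the statement is a typographical slip; the operative constant is the one from Claim~\ref{claim: lqcclaim}).

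There is essentially no obstacle here: the heavy lifting — the chain-rule expansion of $\nabla_\bc f_i(\widetilde{Q}_\bc(\bx)) = \nabla_{\widetilde{Q}_\bc(\bx)} f_i(\widetilde{Q}_\bc(\bx))\,\nabla_\bc \widetilde{Q}_\bc(\bx)$ and the add-and-subtract estimate — was already carried out in the proof of Claim~\ref{claim: lqcclaim}. The only point worth a sentence is that we are implicitly using the per-client quantizer constants $l_{Q_2}^{(i)}, L_{Q_2}^{(i)}$ (and $G, G_{Q_2}$), which by assumption can be taken uniform over clients, so a single smoothness constant works for all $i\in[n]$. Hence the proof is a one-line invocation of Claim~\ref{claim: lqcclaim} applied client-wise.
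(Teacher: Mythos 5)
Your proposal matches the paper's own proof: the paper likewise disposes of this claim in one line by applying Claim~\ref{claim: lqcclaim} to each client's $f_i$ separately, since assumptions \textbf{A.1--A.5} hold per client. Your observation that $LL_{Q_2}$ versus $Ll_{Q_2}$ is a notational slip is also consistent with the paper's usage, so there is nothing to add.
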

\begin{proof}
	Applying the Claim \ref{claim: lqcclaim} to each client separately gives that $f_i(\widetilde{Q}_\bc(\bx))$ is $(GL_{Q_2}+G_{Q_2}LL_{Q_2})$-smooth.
\end{proof}
\begin{claim*}[Restating Claim~\ref{thm2:claim:lower-bound2}]
Let
	\begin{align*}
		B(\bc^{t+1}_{i}) &:= \lambda R(\bx^{t+1}_{i},\bc^{t+1}_{i}) + \left\langle \nabla_{\bc^t_{i}} f_i(\widetilde{Q}_{\bc^t_{i}}(\bx^{t+1}_{i})), \bc^{t+1}_{i}-\bc^t_{i}\right\rangle \notag  + \frac{1}{2\eta_2}\|\bc^{t+1}_{i}-\bc^t_{i}\|^2 \notag \\
		B(\bc^t_{i}) &:= \lambda R(\bx^{t+1}_{i},\bc^t_{i}).
	\end{align*} 
	Then $B(\bc^{t+1}_{i})\leq B(\bc^t_{i})$.
\end{claim*}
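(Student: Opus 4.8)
The plan is to recognize that $B(\cdot)$, regarded as a function of its first argument, coincides up to $\bc$-independent constants with the objective that $\bc^{t+1}_{i}$ minimizes in the proximal update \eqref{app:lower-bounding-interim2}, so that the inequality follows directly from optimality. This is the $\bc$-counterpart of Claim~\ref{thm2:claim:lower-bound1}, and it mirrors the proof of Claim~\ref{claim:quantization lower bound 2} from the centralized analysis.

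Concretely, I would let $\Phi(\bc)$ denote the expression inside the $\argmin$ in \eqref{app:lower-bounding-interim2}, namely
\[
\Phi(\bc) := \left\langle \bc-\bc^t_{i},\ \nabla_{\bc^t_{i}} f_i(\widetilde{Q}_{\bc^t_{i}}(\bx^{t+1}_{i}))\right\rangle + \frac{1}{2\eta_2}\left\|\bc-\bc^t_{i}\right\|^2 + \lambda R(\bx^{t+1}_{i},\bc).
\]
By \eqref{app:lower-bounding-interim2} we have $\bc^{t+1}_{i} = \argmin_{\bc} \Phi(\bc)$, hence in particular $\Phi(\bc^{t+1}_{i}) \le \Phi(\bc^t_{i})$. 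It remains only to identify the two sides. Evaluating $\Phi$ at $\bc = \bc^{t+1}_{i}$ gives exactly $B(\bc^{t+1}_{i})$ by the stated definition of $B(\bc^{t+1}_{i})$; evaluating $\Phi$ at $\bc = \bc^t_{i}$ kills both the inner-product term and the quadratic term, leaving $\Phi(\bc^t_{i}) = \lambda R(\bx^{t+1}_{i},\bc^t_{i}) = B(\bc^t_{i})$. Combining these with $\Phi(\bc^{t+1}_{i}) \le \Phi(\bc^t_{i})$ yields $B(\bc^{t+1}_{i}) \le B(\bc^t_{i})$, as claimed.

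There is essentially no obstacle in this step; the only points that require minor care are bookkeeping ones: one must use the correct step size $\eta_2$ (not $\eta_1$) in the quadratic term, and one should note that the $\bc$-independent terms dropped when rewriting the proximal map in the form \eqref{app:lower-bounding-interim2} are irrelevant, since they contribute the same constant to $\Phi(\bc^{t+1}_{i})$ and $\Phi(\bc^t_{i})$ and cancel in the comparison.
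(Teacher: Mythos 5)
Your proposal is correct and is essentially identical to the paper's own proof: the paper likewise defines $B(\bc)$ as the expression inside the $\arg\min$ of \eqref{app:lower-bounding-interim2}, invokes optimality of $\bc^{t+1}_{i}$ to get $B(\bc^{t+1}_{i})\leq B(\bc^t_{i})$, and observes that the linear and quadratic terms vanish at $\bc=\bc^t_{i}$. Your additional bookkeeping remarks (the $\eta_2$ step size and the cancellation of $\bc$-independent constants dropped in deriving \eqref{app:lower-bounding-interim2}) are accurate and only make the argument more explicit.
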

\begin{proof}
	Note that the update rule for $\bc$ is:	
	Let $B(\bc)$ denote the expression inside the $\arg\min$ in \eqref{app:lower-bounding-interim2} and we know that \eqref{app:lower-bounding-interim2} is minimized when $\bc=\bc^{t+1}_{i}$. So we have $B(\bc^{t+1}_{i})\leq B(\bc^t_{i})$. This proves the claim.
\end{proof}
\textit{Obtaining \eqref{thm2:algebra for decrease of w}. Note, we have $F_i(\bx.\bc,\bw)$ is $\lambda_p$-smooth w.r.t $\bw$. Then,} 
\begin{align*}
	F_i(\bx^{t+1}_{i},\bc^{t+1}_{i},\bw^{t+1}) &\leq 	F_i(\bx^{t+1}_{i},\bc^{t+1}_{i},\bw^{t}) + \left\langle \nabla_{\bw^{t}} F_i(\bx^{t+1}_{i},\bc^{t+1}_{i},\bw^{t}), \bw^{t+1}-\bw^{t} \right\rangle + \frac{\lambda_p}{2}\|\bw^{t+1}-\bw^{t}\|^2 \\
	&= F_i(\bx^{t+1}_{i},\bc^{t+1}_{i},\bw^{t}) + \left\langle \nabla_{\bw^{t}} F_i(\bx^{t+1}_{i},\bc^{t+1}_{i},\bw^{t}), \eta_3 \bg^{t} \right\rangle + \frac{\lambda_p}{2}\|\eta_3 \bg^{t}\|^2 \\
	& =	F_i(\bx^{t+1}_{i},\bc^{t+1}_{i},\bw^{t}) \\& \quad -  \eta_3 \left\langle \nabla_{\bw^{t}} F_i(\bx^{t+1}_{i},\bc^{t+1}_{i},\bw^{t}), \bg^{t} - \nabla_{\bw^{t}} F_i(\bx^{t+1}_{i},\bc^{t+1}_{i},\bw^{t}) + \nabla_{\bw^{t}} F_i(\bx^{t+1}_{i},\bc^{t+1}_{i},\bw^{t}) \right\rangle  \\& \quad + \frac{\lambda_p}{2}\eta_3^2\|\bg^{t} - \nabla_{\bw^{t}} F_i(\bx^{t+1}_{i},\bc^{t+1}_{i},\bw^{t}) + \nabla_{\bw^{t}} F_i(\bx^{t+1}_{i},\bc^{t+1}_{i},\bw^{t})\|^2 \\ 
	&=	F_i(\bx^{t+1}_{i},\bc^{t+1}_{i},\bw^{t}) - \eta_3\Big\|\nabla_{\bw^{t}} F_i(\bx^{t+1}_{i},\bc^{t+1}_{i},\bw^{t})\Big\|^2 \\& \quad -  \eta_3 \left\langle \nabla_{\bw^{t}} F_i(\bx^{t+1}_{i},\bc^{t+1}_{i},\bw^{t}), \bg^{t} - \nabla_{\bw^{t}} F_i(\bx^{t+1}_{i},\bc^{t+1}_{i},\bw^{t}) \right\rangle  \\& \quad + \frac{\lambda_p}{2}\eta_3^2\|\bg^{t} - \nabla_{\bw^{t}} F_i(\bx^{t+1}_{i},\bc^{t+1}_{i},\bw^{t}) + \nabla_{\bw^{t}} F_i(\bx^{t+1}_{i},\bc^{t+1}_{i},\bw^{t})\|^2 \\ 
	& \leq F_i(\bx^{t+1}_{i},\bc^{t+1}_{i},\bw^{t}) - \eta_3\Big\|\nabla_{\bw^{t}} F_i(\bx^{t+1}_{i},\bc^{t+1}_{i},\bw^{t})\Big\|^2 + \frac{\eta_3}{2}\Big\|\nabla_{\bw^{t}} F_i(\bx^{t+1}_{i},\bc^{t+1}_{i},\bw^{t})\Big\|^2 \\& \quad + \frac{\eta_3}{2}\Big\|\bg^{t} - \nabla_{\bw^{t}} F_i(\bx^{t+1}_{i},\bc^{t+1}_{i},\bw^{t})\Big\|^2 + \lambda_p \eta_3^2 \Big\|\bg^{t} - \nabla_{\bw^{t}} F_i(\bx^{t+1}_{i},\bc^{t+1}_{i},\bw^{t})\Big\|^2 \\& \quad + \lambda_p \eta_3^2 \Big\|\nabla_{\bw^{t}} F_i(\bx^{t+1}_{i},\bc^{t+1}_{i},\bw^{t})\Big\|^2 
	\\ & = F_i(\bx^{t+1}_{i},\bc^{t+1}_{i},\bw^{t}) - (\frac{\eta_3}{2}-\lambda_p\eta_3^2)\Big\|\nabla_{\bw^{t}} F_i(\bx^{t+1}_{i},\bc^{t+1}_{i},\bw^{t})\Big\|^2 \\& \quad + (\frac{\eta_3}{2}+\lambda_p\eta_3^2)\Big\|\bg^{t} - \nabla_{\bw^{t}_{i}} F_i(\bx^{t+1}_{i},\bc^{t+1}_{i},\bw^{t}_{i}) + \nabla_{\bw^{t}_{i}} F_i(\bx^{t+1}_{i},\bc^{t+1}_{i},\bw^{t}_{i}) \\ & \quad - \nabla_{\bw^{t}} F_i(\bx^{t+1}_{i},\bc^{t+1}_{i},\bw^{t})\Big\|^2  \\
	& \leq F_i(\bx^{t+1}_{i},\bc^{t+1}_{i},\bw^{t}) - (\frac{\eta_3}{2}-\lambda_p\eta_3^2)\Big\|\nabla_{\bw^{t}} F_i(\bx^{t+1}_{i},\bc^{t+1}_{i},\bw^{t})\Big\|^2 \\& \quad + (\eta_3+2\lambda_p\eta_3^2)\Big\|\bg^{t} - \nabla_{\bw^{t}_{i}} F_i(\bx^{t+1}_{i},\bc^{t+1}_{i},\bw^{t}_{i})\Big\|^2 + (\eta_3+2\lambda_p\eta_3^2)\Big\|\nabla_{\bw^{t}_{i}} F_i(\bx^{t+1}_{i},\bc^{t+1}_{i},\bw^{t}_{i}) \\ & \quad - \nabla_{\bw^{t}} F_i(\bx^{t+1}_{i},\bc^{t+1}_{i},\bw^{t})\Big\|^2 \\
	& \leq F_i(\bx^{t+1}_{i},\bc^{t+1}_{i},\bw^{t}) - (\frac{\eta_3}{2}-\lambda_p\eta_3^2)\Big\|\nabla_{\bw^{t}} F_i(\bx^{t+1}_{i},\bc^{t+1}_{i},\bw^{t})\Big\|^2 \\& \quad + (\eta_3+2\lambda_p\eta_3^2)\Big\|\bg^{t} - \nabla_{\bw^{t}_{i}} F_i(\bx^{t+1}_{i},\bc^{t+1}_{i},\bw^{t}_{i})\Big\|^2 + (\eta_3+2\lambda_p\eta_3^2)\lambda_p^2\Big\|\bw^{t}_{i}-\bw^{t}\Big\|^2
\end{align*}
Hence, we obtain the bound in \eqref{thm2:algebra for decrease of w}.

\subsection{Proof of Lemma~\ref{thm2:lemma1} and Corollary~\ref{thm2:corollary diversity}}

Let us restate and prove Lemma~\ref{thm2:lemma1},
\begin{lemma*}[Restating  Lemma~\ref{thm2:lemma1}]
	\begin{align} \label{app:lemma1}
		\frac{1}{T}\sum_{t=0}^{T-1}\frac{1}{n}\sum_{i=1}^{n} \|\bw^{t}-\bw^{t}_{i}\| \leq 6 \tau^2\eta_3^2\kappa 
	\end{align}
\end{lemma*}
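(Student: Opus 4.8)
The plan is to bound the \emph{consensus error} $e^t := \frac1n\sum_{i=1}^n \|\bw^t-\bw^t_i\|^2$, since this is the quantity that is actually fed into \eqref{thm2:res1} (through the $\|\bw^t_i-\bw^t\|^2$ terms), i.e.\ I will prove $\frac1T\sum_{t=0}^{T-1} e^t \le 6\tau^2\eta_3^2\kappa$. The first observation is that a synchronization step acts as a reset: whenever $\tau\mid t$, line~11 sets $\bw_i^{t+1}=\bw^t$ for every $i$, and since by the running convention $\bw^{t+1}:=\frac1n\sum_i\bw_i^{t+1}=\bw^t$, we have $e^{t+1}=0$. Between two consecutive synchronizations each client runs $\bw_i^{t+1}=\bw_i^t-\eta_3\nabla_{\bw_i^t}F_i(\bx_i^{t+1},\bc_i^{t+1},\bw_i^t)$ while $\bw^{t+1}=\bw^t-\eta_3\bg^t$ with $\bg^t=\frac1n\sum_j\nabla_{\bw_j^t}F_j(\bx_j^{t+1},\bc_j^{t+1},\bw_j^t)$. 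So I would fix a block $t\in\{s+1,\dots,s+\tau\}$ with $s$ a multiple of $\tau$, note $e^{s+1}=0$, unroll to get $\bw_i^t-\bw^t=-\eta_3\sum_{k=s+1}^{t-1}\bigl(\nabla_{\bw_i^k}F_i(\ldots,\bw_i^k)-\bg^k\bigr)$, and, since this sum has at most $\tau$ terms, apply Cauchy–Schwarz to obtain $\|\bw_i^t-\bw^t\|^2\le \eta_3^2\tau\sum_{k=s+1}^{t-1}\|\nabla_{\bw_i^k}F_i(\ldots,\bw_i^k)-\bg^k\|^2$.

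Next I would control each term $\|\nabla_{\bw_i^k}F_i(\bx_i^{k+1},\bc_i^{k+1},\bw_i^k)-\bg^k\|^2$ by splitting it into three pieces: (a) replacing $\bw_i^k$ by $\bw^k$ inside $\nabla F_i$, which costs $\lambda_p\|\bw_i^k-\bw^k\|$ by the $\lambda_p$-smoothness of $F_i$ in $\bw$ from \eqref{lambda_p}; (b) the genuine gradient-diversity term $\nabla_{\bw^k}F_i(\ldots,\bw^k)-\frac1n\sum_j\nabla_{\bw^k}F_j(\ldots,\bw^k)$, whose squared norm is $\le\kappa_i$ by Assumption~{\bf A.6} (applicable since $\bx_i^{k+1},\bc_i^{k+1},\bw^k$ are Algorithm~\ref{algo:personalized} iterates); and (c) the gap between $\frac1n\sum_j\nabla_{\bw^k}F_j(\ldots,\bw^k)$ and $\bg^k=\frac1n\sum_j\nabla_{\bw_j^k}F_j(\ldots,\bw_j^k)$, which by \eqref{lambda_p} and Jensen is $\le \lambda_p^2 e^k$ in squared norm. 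Using $\|a+b+c\|^2\le 3(\|a\|^2+\|b\|^2+\|c\|^2)$ and averaging over $i$ yields the self-referential estimate $e^t\le \eta_3^2\tau\sum_{k=s+1}^{t-1}\bigl(6\lambda_p^2 e^k+3\kappa\bigr)$, with $\kappa=\frac1n\sum_i\kappa_i$.

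Then I would sum this over the block. Writing $S_s:=\sum_{t=s+1}^{s+\tau} e^t$ and swapping the order of the double sum (each $e^k$ is counted at most $\tau$ times), one gets $S_s\le \eta_3^2\tau^2\bigl(6\lambda_p^2 S_s+3\tau\kappa\bigr)$, i.e.\ $(1-6\lambda_p^2\tau^2\eta_3^2)\,S_s\le 3\eta_3^2\tau^3\kappa$. This is where the hypothesis $\eta_3\le \sqrt{1/(12\tau^2\lambda_p^2)}$ is used: it gives $6\lambda_p^2\tau^2\eta_3^2\le \tfrac12$, hence $S_s\le 6\eta_3^2\tau^3\kappa$ per block. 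Summing over the at most $\lceil T/\tau\rceil$ blocks and dividing by $T$ gives $\frac1T\sum_{t=0}^{T-1}e^t\le 6\eta_3^2\tau^2\kappa$ (exactly when $\tau\mid T$, and otherwise the ceiling is absorbed into the constant), which is the claim. (Here I am reading the Lemma with $\|\cdot\|^2$, which is both dimensionally forced by the right-hand side and the form actually invoked in \eqref{thm2:res1}.)

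The main obstacle is the second step: the bound on $e^t$ is \emph{implicit}, as it involves the same errors $e^k$ at earlier times within the block, so one cannot simply substitute; the fix is to sum over the whole block first and then use the step-size condition to absorb the $6\lambda_p^2\tau^2\eta_3^2 S_s$ term into the left-hand side. Secondary care is needed with the off-by-one bookkeeping at synchronization indices (which copy resets to what, and that $\bx_i,\bc_i$ are held fixed on those steps) and with the final partial block when $\tau\nmid T$, but neither affects the leading constant.
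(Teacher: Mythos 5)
Your proposal is correct and follows essentially the same route as the paper: the same unrolling from the last synchronization point, the same inequality $\|\sum_{k} a_k\|^2\le \tau\sum_k\|a_k\|^2$, the same three-way decomposition (two smoothness terms each costing $\lambda_p^2 e^k$ plus the diversity term bounded by $\kappa_i$ via {\bf A.6}), and the same block-summation-then-absorption step using $6\lambda_p^2\tau^2\eta_3^2\le\tfrac12$. You also correctly identify that the lemma is really about $\|\cdot\|^2$ (the paper's own proof works with $\gamma_t=\frac1n\sum_i\|\bw^t-\bw^t_i\|^2$ throughout), and your handling of the block bookkeeping is, if anything, slightly more careful than the paper's.
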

\begin{proof}
Let $t_c$ be the latest synchronization time before t. Define $\gamma_t = \frac{1}{n} \sum_{i=1}^{n}\|\bw^{t} - \bw^{t}_{i}\|^2$. Then:
\begin{align*}
\gamma_t &= \frac{1}{n} \sum_{i=1}^{n} \Big\|\bw^{tc}-\frac{\eta_3}{n} \sum_{j=t_c}^{t} \sum_{k=1}^{n} \nabla_{\bw^{j}_{k}} F_k(\bx^{j+1}_{k},\bc^{j+1}_{k},\bw^{j}_{k}-(\bw^{tc}-\eta_3 \sum_{j=t_c}^{t} \nabla_{\bw^{j}_{i}} F_i(\bx^{j+1}_{i},\bc^{j+1}_{i},\bw^{j}_{i})\Big\|^2 \\ 
&\stackrel{\text{(a)}}{\leq} \tau  \sum_{j=t_c}^t \frac{\eta_3^2}{n} \sum_{i=1}^n \Big\|\frac{1}{n} \sum_{k=1}^{n} \nabla_{\bw^{j}_{k}} F_k(\bx^{j+1}_{k},\bc^{j+1}_{k},\bw^{j}_{k}- \nabla_{\bw^{j}_{i}} F_i(\bx^{j+1}_{i},\bc^{j+1}_{i},\bw^{j}_{i}\Big\|^2 \tag{$\star1$} \label{app:lm1star1}\\
&= \tau  \sum_{j=t_c}^t \frac{\eta_3^2}{n} \sum_{i=1}^n \Big[  \Big\|\frac{1}{n} \sum_{k=1}^{n} \left(\nabla_{\bw^{j}_{k}} F_k(\bx^{j+1}_{k},\bc^{j+1}_{k},\bw^{j}_{k})-\nabla_{\bw^{j}} F_k(\bx^{j+1}_{k},\bc^{j+1}_{k},\bw^{j}) + \nabla_{\bw^{j}} F_k(\bx^{j+1}_{k},\bc^{j+1}_{k},\bw^{j}) \right) \\ 
&\hspace{2cm} -\nabla_{\bw^{j}} F_i(\bx^{j+1}_{k},\bc^{j+1}_{k},\bw^{j})+\nabla_{\bw^{j}} F_i(\bx^{j+1}_{k},\bc^{j+1}_{k},\bw^{j})- \nabla_{\bw^{j}_{i}} F_i(\bx^{j+1}_{k},\bc^{j+1}_{k},\bw^{j}_{i}\Big\|^2 \Big] \\ 
& \leq \tau  \sum_{j=t_c}^{t_c+\tau} 3\frac{\eta_3^2}{n}  \sum_{i=1}^n \Big[\Big\|\frac{1}{n} \sum_{k=1}^{n} \left( \nabla_{\bw^{j}_{k}} F_k(\bx^{j+1}_{k},\bc^{j+1}_{k},\bw^{j}_{k})-\nabla_{\bw^{j}} F_k(\bx^{j+1}_{k},\bc^{j+1}_{k},\bw^{j})\right)\Big\|^2\\ 
&\hspace{2cm} + \Big\|\frac{1}{n}\sum_{k=1}^{n} \nabla_{\bw^{j}} F_k(\bx^{j+1}_{k},\bc^{j+1}_{k},\bw^{j}) -\nabla_{\bw^{j}} F_i(\bx^{j+1}_{i},\bc^{j+1}_{i},\bw^{j})\Big\|^2 \\ 
&\hspace{3cm} + \Big\|\nabla_{\bw^{j}} F_i(\bx^{j+1}_{i},\bc^{j+1}_{i},\bw^{j})- \nabla_{\bw^{j}_{i}} F_i(\bx^{j+1}_{i},\bc^{j+1}_{i},\bw^{j}_{i}\Big\|^2\Big] \\ 
& \leq \tau  \sum_{j=t_c}^{t_c+\tau} 3\frac{\eta_3^2}{n}  \sum_{i=1}^n \Big[\frac{\lambda_p^2 n}{n^2}\sum_{k=1}^n\| \bw^{j}_{k}-\bw^{j}\|^2 + \kappa_i + \lambda_p^2\|\bw^j-\bw^{j}_{i}\|^2\Big] \\ &=
\tau  \sum_{j=t_c}^{t_c+\tau} 3\eta_3^2(\lambda_p^2\gamma_j+\kappa+\lambda_p^2 \gamma_j) \tag{$\star2$} \label{app:lm1star2}
\end{align*}
in (a) we use the facts that  $\|\sum_{i=1}^K a_i\|^2 \leq K \sum_{i=1}^K \|a_i\|^2$, $t\leq \tau + t_c$ and that we are summing over non-negative terms. As a result, we have:
\begin{align*}
\gamma_t &\leq \tau  \sum_{j=t_c}^{t_c+\tau} 3\eta_3^2(2\lambda_p^2\gamma_j+\kappa)\\
\Longrightarrow \sum_{t=t_c}^{t_c+\tau} \gamma_t &\leq \sum_{t=t_c}^{t_c+\tau}  \sum_{j=t_c}^{t_c+\tau} 3\tau\eta_3^2(2\lambda_p^2\gamma_j+\kappa) = 6\tau^2\eta_3^2\lambda_p^2 \sum_{j=t_c}^{t_c+\tau} \gamma_j + 3\tau^3 \eta_3^2\kappa
\end{align*}
Let us choose $\eta_3$ such that $6\tau^2\eta_3^2\lambda_p^2 \leq \frac{1}{2} \Leftrightarrow \eta_3 \leq \sqrt{\frac{1}{12\tau^2\lambda_p^2}} $, sum over all syncronization times, and divide both sides by $T$:
\begin{align*} 
&\frac{1}{T} \sum_{t=0}^{T-1} \gamma_t \leq \frac{1}{2} \sum_{j=0}^{T-1} \gamma_j + 3\tau^3 \eta_3^2\kappa \\ \Longrightarrow &\frac{1}{T} \sum_{t=0}^{T-1} \gamma_t \leq  6\tau^2\eta_3^2\kappa
\end{align*}
\end{proof}
Let us restate and prove Corollary~\ref{thm2:corollary diversity}.
\begin{corollary*}[Restating Corollary~\ref{thm2:corollary diversity}]
Recall, $\bg^{t} = \frac{1}{n} \sum_{i=1}^n \nabla_{\bw^{t}} F_i(\bx^{t+1}_{i},\bc^{t+1}_{i},\bw^{t}_{i})$. Then, we have:
\begin{align*}
	\frac{1}{T}\sum_{t=0}^{T-1} \frac{1}{n} \sum_{i=1}^n \Big\|\bg^{t} - \nabla_{\bw^{t}_{i}} F_i(\bx^{t+1}_{i},\bc^{t+1}_{i},\bw^{t}_{i})\Big\|^2 \leq 36\lambda_p^2\tau^2\eta_3^2\kappa+3\kappa
\end{align*}
\end{corollary*}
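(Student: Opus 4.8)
\textbf{Proof proposal for Corollary~\ref{thm2:corollary diversity}.}

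The plan is to reduce the quantity $\frac{1}{n}\sum_{i=1}^n \|\bg^t - \nabla_{\bw^t_i} F_i(\bx^{t+1}_i,\bc^{t+1}_i,\bw^t_i)\|^2$ to something controlled by Assumption~\textbf{A.6} (the bounded-diversity constant $\kappa$) plus a term measuring the drift of the local copies $\bw^t_i$ away from the averaged model $\bw^t$, which Lemma~\ref{thm2:lemma1} already bounds. The natural first step is to insert and subtract the gradients evaluated at the common point $\bw^t$ instead of the local copies $\bw^t_i$: write
\begin{align*}
\bg^t - \nabla_{\bw^t_i} F_i(\bx^{t+1}_i,\bc^{t+1}_i,\bw^t_i) &= \Big(\frac{1}{n}\sum_{j=1}^n \nabla_{\bw^t} F_j(\bx^{t+1}_j,\bc^{t+1}_j,\bw^t) - \nabla_{\bw^t} F_i(\bx^{t+1}_i,\bc^{t+1}_i,\bw^t)\Big) \\
&\quad + \frac{1}{n}\sum_{j=1}^n\Big(\nabla_{\bw^t_j} F_j(\bx^{t+1}_j,\bc^{t+1}_j,\bw^t_j) - \nabla_{\bw^t} F_j(\bx^{t+1}_j,\bc^{t+1}_j,\bw^t)\Big) \\
&\quad - \Big(\nabla_{\bw^t_i} F_i(\bx^{t+1}_i,\bc^{t+1}_i,\bw^t_i) - \nabla_{\bw^t} F_i(\bx^{t+1}_i,\bc^{t+1}_i,\bw^t)\Big).
\end{align*}
(Note $\bg^t$ as written uses $\nabla_{\bw^t}$ evaluated at arguments $\bw^t_i$; I read this as the local gradient at client $i$, consistent with how $\bg^t$ was defined in the proof of Lemma~\ref{thm2:lemma1}.)

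Next I would apply $\|a+b+c\|^2 \le 3(\|a\|^2+\|b\|^2+\|c\|^2)$ and average over $i$. The first term averages to exactly the left-hand side of Assumption~\textbf{A.6}, hence is bounded by $\frac{1}{n}\sum_i\kappa_i = \kappa$. For the second and third terms I would use the $\lambda_p$-smoothness of $F_i$ with respect to $\bw$ established in \eqref{lambda_p}, which gives $\|\nabla_{\bw^t_j} F_j(\cdots,\bw^t_j) - \nabla_{\bw^t} F_j(\cdots,\bw^t)\| \le \lambda_p\|\bw^t_j - \bw^t\|$; applying Jensen's inequality to the average over $j$ in the second term, both of these are controlled by $\lambda_p^2 \cdot \frac{1}{n}\sum_i \|\bw^t_i - \bw^t\|^2$. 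Summing over $t$, dividing by $T$, and invoking Lemma~\ref{thm2:lemma1} (which bounds $\frac{1}{T}\sum_t\frac{1}{n}\sum_i\|\bw^t - \bw^t_i\|^2 \le 6\tau^2\eta_3^2\kappa$) turns these two pieces into $3\cdot\lambda_p^2\cdot 6\tau^2\eta_3^2\kappa + 3\cdot\lambda_p^2\cdot 6\tau^2\eta_3^2\kappa$ — but one has to check the constant bookkeeping carefully, since the stated bound is $36\lambda_p^2\tau^2\eta_3^2\kappa + 3\kappa$, meaning the two drift contributions together must come out to $36\lambda_p^2\tau^2\eta_3^2\kappa$; the factor-of-3 from the three-way split hits the first term to give $3\kappa$, and the remaining two drift terms each pick up an additional factor $6\lambda_p^2\tau^2\eta_3^2$ from Lemma~\ref{thm2:lemma1}, so $3\cdot 6 + 3\cdot 6 = 36$ works out provided the Jensen step on the second term is indeed an equality-preserving average (it is, since $\|\frac1n\sum b_j\|^2 \le \frac1n\sum\|b_j\|^2$).

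The main obstacle is bookkeeping rather than conceptual: getting the constants to line up with the stated $36\lambda_p^2\tau^2\eta_3^2\kappa + 3\kappa$ requires being precise about how the three-way Cauchy–Schwarz split interacts with the subsequent Jensen averaging and with the $\lambda_p$-smoothness constant, and about the fact that Lemma~\ref{thm2:lemma1}'s bound is on the squared drift averaged over clients (so it can be plugged in directly after pulling out $\lambda_p^2$). A minor point worth flagging in the writeup: the decomposition should use the \emph{same} common anchor point $\bw^t$ in all inserted/subtracted terms so that the first bracket reproduces Assumption~\textbf{A.6} verbatim, and the time-averaging of Lemma~\ref{thm2:lemma1} should be applied only at the very end, after all three squared-norm terms have been summed over $t$.
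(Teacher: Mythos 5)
Your proposal is correct and follows essentially the same route as the paper: the paper's own proof simply reuses the intermediate inequality from the proof of Lemma~\ref{thm2:lemma1}, which performs exactly your three-way decomposition anchored at the common point $\bw^t$ (yielding $3(2\lambda_p^2\gamma_t+\kappa)$ per step with $\gamma_t=\frac{1}{n}\sum_i\|\bw^t-\bw^t_i\|^2$), then time-averages and invokes Lemma~\ref{thm2:lemma1} to get $6\lambda_p^2\cdot 6\tau^2\eta_3^2\kappa+3\kappa$. Your constant bookkeeping and your reading of $\bg^t$ as the average of the gradients evaluated at the local copies $\bw^t_i$ both match the paper.
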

\begin{proof}

 From \eqref{app:lm1star1} $\leq$ \eqref{app:lm1star2} in the proof of \eqref{thm2:lemma1}, we have:

\begin{align*}
	\sum_{t=t_c}^{t_c+\tau} \frac{1}{n} \sum_{i=1}^n \Big\|\bg^{t} - \nabla_{\bw^{t}_{i}} F_i(\bx^{t+1}_{i},\bc^{t+1}_{i},\bw^{t}_{i})\Big\|^2 \leq \sum_{j=t_c}^{t_c+\tau} 3(2\lambda_p^2\gamma_j+\kappa)
\end{align*}

Summing over all $t_c$ and dividing by $T$:

\begin{align*}
\frac{1}{T}\sum_{t=0}^{T-1} \frac{1}{n} \sum_{i=1}^n \Big\|\bg^{t} - \nabla_{\bw^{t}_{i}} F_i(\bx^{t+1}_{i},\bc^{t+1}_{i},\bw^{t}_{i})\Big\|^2 & \leq 6\lambda_p^2 \frac{1}{T} \sum_{t=0}^{T-1} \gamma_t + 3\kappa\\
& \stackrel{\text{(a)}}{\leq} 36\lambda_p^2 \tau^2 \eta_3^2 \kappa + 3 \kappa
\end{align*}
where (a) is from \eqref{thm2:lemma1}.
\end{proof}

\subsection{Choice of $\eta_3$}

Note that, in Lemma~\ref{thm2:lemma1} we chose $\eta_3$ such that $6\tau^2\eta_3^2\lambda_p^2 \leq \frac{1}{2} \Leftrightarrow \eta_3 \leq \frac{1}{\sqrt{12}\lambda_p\tau}$. Now, we further introduce upper bounds on $\eta_3$. 
\begin{itemize}
	\item We can choose $\eta_3$ small enough so that $L_{\min}=\eta_3-2\lambda_p\eta_3^2$.
	\item We can choose $\eta_3$ small enough so that $\eta_3-2\lambda_p\eta_3^2 \geq \frac{\eta_3}{2}$. This is equivalent to choosing $ \eta_3 \leq \frac{1}{4\lambda_p}$.
\end{itemize}

These two choices implies $L_{\min} \geq \frac{\eta_3}{2}$. 

In the end, we have 2 critical constraints on $\eta_3, \{\eta_3:\eta_3 \leq \frac{1}{\sqrt{12}\lambda_p\tau},\eta_3 \leq \frac{1}{4\lambda_p}\} $ . Then, let $\{\eta_3:\eta_3 \leq \frac{1}{4\lambda_p\tau}\}$. Moreover, assuming $\tau \leq \sqrt{T}$ we can take $\eta_3 = \frac{1}{4\lambda_p\sqrt{T}} $ this choice clearly satisfies the above constraints.

From \eqref{thm2:res1} we have,
\begin{align*} 
	\nonumber \frac{1}{T}\sum_{t=0}^{T-1}\frac{1}{n}\sum_{i=1}^{n} \Big\|\bG^{t}_{i}\Big\|^2 & \leq 36\frac{L_{\max}^2}{L_{\min}} \Big[6\tau^2\eta_3^2\kappa(\frac{\lambda_p}{2}+7\eta_3\lambda_p^2+14\eta_3^2\lambda_p^3)+3\eta_3\kappa+6\lambda_p\eta_3^2+ \frac{\Delta_F}{T}\Big] + 12\lambda_p^2\tau^2\eta_3^2\kappa\\
	& \leq 72\frac{L_{\max}^2}{\eta_3} \Big[6\tau^2\eta_3^2\kappa(\frac{\lambda_p}{2}+7\eta_3\lambda_p^2+14\eta_3^2\lambda_p^3)+3\eta_3\kappa+6\lambda_p\eta_3^2+ \frac{\Delta_F}{T}\Big] + 12\lambda_p^2\tau^2\eta_3^2\kappa \\
	& =   72L_{\max}^2 \Big[6\tau^2\eta_3\kappa(\frac{\lambda_p}{2}+7\eta_3\lambda_p^2+14\eta_3^2\lambda_p^3)+3\kappa+6\lambda_p\eta_3+ \frac{\Delta_F}{\eta_3 T}\Big] + 12\lambda_p^2\tau^2\eta_3^2\kappa \\
	\text{Now, we plug in $\eta_3 = \frac{1}{4\lambda_p\sqrt{T}}$:} \\
	& = 72(L_{\max})^2 \Big[ \frac{3}{4}\tau\kappa\frac{1}{\sqrt{T}}+\frac{21}{8}
	\tau^2\kappa\frac{1}{T} + \frac{21}{4}\tau^2\kappa\frac{1}{T^{\frac{3}{2}}}+3\kappa + \frac{3}{2}\frac{1}{\sqrt{T}}+\frac{4\lambda_p\Delta_F}{\sqrt{T}}\Big]+\frac{3}{4}\tau^2\kappa^2\frac{1}{T}\\
	& =  \frac{54L_{\max}^2\tau\kappa+108L_{\max}^2+288L_{\max}^2\lambda_p \Delta_F}{\sqrt{T}} + \frac{189L_{\max}^2\tau^2\kappa+\frac{3}{4}\tau^2\kappa^2}{T} + \frac{378L_{\max}^2\tau^2\kappa}{T^{\frac{3}{2}}} \\
	& \quad + 216L_{\max}^2\kappa
\end{align*}
This gives us the bound in Theorem~\ref{thm:personalized}.
}

\label{submission}

\newpage
\onecolumn

\end{document}